\newtheorem{theorem}{Theorem}
\theoremstyle{definition}
\title{Leveraging Contaminated Datasets to Learn Clean-Data Distribution with Purified Generative Adversarial Networks}
\author {
    Bowen Tian\textsuperscript{\rm 1},
    Qinliang Su\textsuperscript{\rm 1,2 $\footnote{Corresponding author.}$},
    Jianxing Yu\textsuperscript{\rm 3}
}
\begin{document}

\maketitle

\begin{abstract}
Generative adversarial networks (GANs) are known for their strong abilities on capturing the underlying distribution of training instances. Since the seminal work of GAN, many variants of GAN have been proposed. However, existing GANs are almost established on the assumption that the training dataset is clean. But in many real-world applications, this may not hold, that is, the training dataset may be contaminated by a proportion of undesired instances. When training on such datasets, existing GANs will learn a mixture distribution of desired and contaminated instances, rather than the desired distribution of desired data only (target distribution). To learn the target distribution from contaminated datasets, two purified generative adversarial networks (PuriGAN) are developed, in which the discriminators are augmented with the capability to distinguish between target and contaminated instances by leveraging an extra dataset solely composed of contamination instances. We prove that under some mild conditions, the proposed PuriGANs are guaranteed to converge to the distribution of desired instances. Experimental results on several datasets demonstrate that the proposed PuriGANs are able to generate much better images from the desired distribution than comparable baselines when trained on contaminated datasets. In addition, we also demonstrate the usefulness of PuriGAN on downstream applications by applying it to the tasks of semi-supervised anomaly detection on contaminated datasets and PU-learning. Experimental results show that PuriGAN is able to deliver the best performance over comparable baselines on both tasks\footnote{Code is available at \url{https://github.com/tbw162/PuriGAN}.}.
\end{abstract}

\section{Introduction}

Learning data distribution from a dataset can be applied to various kinds of applications, like inpainting \cite{yu2018generative,liu2021pd}, anomaly detection \cite{AnoGAN17,zenati2018adversarially,akcay2018ganomaly}, image translation \cite{isola2017image,liu2017unsupervised}, AI medical diagnosis \cite{kazeminia2020gans,izadi2018generative}, {\it etc}. Among the existing methods of distribution learning,
generative adversarial networks (GANs) \cite{goodfellow2014generative} and variational auto-encoder (VAE) \cite{VAE2014} are the two most widely used ones. However, existing deep generative models are mostly established on the assumption of clean training datasets, that is, all training instances are drawn from the target distribution that we are interested in. But in real-world applications, it is quite common to see that some instances from an undesired distribution is mistakenly put into the training dataset, resulting in a contaminated dataset. This could be caused by the lack of sufficient expertise or enough labor to correctly recognize every undesired instance when building the dataset, or the high cost to clean a very large contaminated dataset etc. For example, it has been reported that a tiny ratio of images in ImageNet were labelled with incorrect categories, although their impacts on the final model trained on it is negligible due to their small proportion \cite{northcutt2021confident}. But for many applications, the quality of collected datasets could be much poorer than ImageNet. When training on such datasets, generative models will only capture the distribution of the entire dataset, that is, mixture of the target distribution and contamination distribution. However, as we use the generative models, our primary intention is always to learn the distribution of desired instances ({\it i.e.} target distribution), which can later be used to generate new desired instances or assist downstream tasks like anomaly detection \cite{AnoGAN17,zenati2018adversarially,akcay2018ganomaly}, inpainting \cite{yu2018generative,liu2021pd} etc. Therefore, investigating how to learn a generative model that only captures the distribution of desired instances from a contaminated dataset is important both theoretically and practically.

Generally, the goal above cannot be achieved by solely leveraging the contaminated dataset, which is denoted by ${\mathcal{X}}$. In this paper, in addition to ${\mathcal{X}}$, we also assume the availability of another small dataset ${\mathcal{X}}^-$ that is only composed of contamination instances. In many real-world applications, it is often possible to collect a small number of representative contamination instances, although collecting a large number of them may be difficult. For example, in the task of anomaly detection, it is possible to collect some anomalies, which can be seen as the contamination instances here. In fact, several recent works have been proposed to introduce an extra negative dataset and leverage it to boost the generation performance \cite{asokan2020teaching,sinha2020negative}. However, they are all restricted to the scenario that the dataset ${\mathcal{X}}$ is clean. To only learn the distribution of desired instances from a contaminated dataset, a possible solution is to adopt a two-stage training strategy: 1) first, seeking to separate out the target samples from ${\mathcal{X}}$; 2) then, training generative models on the separated target samples. The objective of first stage can be partially achieved by resorting to positive-unlabelled (PU) learning \cite{elkan2008learning,kiryo2017positive,du2015convex,kato2018learning,bekker2020learning}, whose goal is to partition unlabelled instances into two classes by leveraging datasets ${\mathcal{X}}$ and ${\mathcal{X}}^-$. However, it is observed that it is generally difficult to obtain a satisfactory performance at the first stage, especially when the data instances are complex. The poor performance of the first stage will be passed down to the second stage, resulting in an even worse performance of the whole model. Moreover, the valuable negative dataset ${\mathcal{X}}^-$ is not fully utilized by the two-stage method since it is only used in the first stage to separate out desired instance but is never used in the second stage.

In this paper, two purified generative adversarial networks (PuriGAN) are proposed, which can not only be trained in an end-to-end manner by simultaneously leveraging the two datasets ${\mathcal{X}}$ and ${\mathcal{X}}^-$, but also are guaranteed to converge to the target distribution theoretically. This is achieved by augmenting the discriminator to have it able to distinguish between the target and contaminated instances, in addition to the basic role of discriminating real and generated ones. The augmented discriminator can prevent the generator from generating contaminated instances, while the generator can generate extra  instances to increase the discrimination ability of discriminator. This further improves the discriminator's robustness even in the case of insufficient data. Extensive experiments are conducted to evaluate the target-data generation ability of PuriGAN when it is trained on contaminated datasets. Experimental results demonstrate that the proposed PuriGANs are able to generate much better desired images over competitive baselines under various kinds of conditions. In addition, we also apply PuriGAN to two downstream applications, semi-supervised anomaly detection on contaminated dataset and PU-learning, with the experimental results demonstrating that PuriGAN outperforms comparable baselines remarkably.

\section{The Proposed Purified Generative Adversarial Networks}
\label{method}
\subsection{Problem Description}
In this problem, we suppose the training dataset 
\begin{equation}
    {\mathcal{X}}=\{{\mathbf{x}}_1, {\mathbf{x}}_2, \cdots, {\mathbf{x}}_n\}
\end{equation}
is not pure, but is contaminated by some undesired instances. That is, the training dataset ${\mathcal{X}}$ contains desired and undesired (contamination) instances simultaneously. Mathematically, 
we can think the instances ${\mathbf{x}}_i$ are drawn from the mixture distribution
\begin{equation}
	p_d({\mathbf{x}}) = \pi p^+({\mathbf{x}}) + (1-\pi)p^-({\mathbf{x}}),
\end{equation}
where $p^+({\mathbf{x}})$ and $p^-({\mathbf{x}})$ denote the distributions of desired instances (target distribution) and contamination instances (contamination distribution), respectively; and $\pi$ is the proportion of desired instances . In addition to ${\mathcal{X}}$, there exists another training dataset 
\begin{equation}
    {\mathcal{X}}^{-} = \{{\mathbf{x}}_1^-, {\mathbf{x}}_2^-, \cdots, {\mathbf{x}}_m^- \},
\end{equation}
which is only composed of contamination instances with ${\mathbf{x}}_i^- \sim p^-({\mathbf{x}})$. Here, we argue that it is often possible to obtain a small number of contamination instances. For examples, in anomaly detection, we can collect a small number of anomalies, which can be viewed as the contamination instances; or given a contaminated dataset, we can assign some labors to find a fraction of contaminations in the dataset manually. But due to the often low proportion/frequency of contamination instances, the number of collected contamination instances cannot be too large. Thus, the size of ${\mathcal{X}}^-$ is assumed to be much smaller than ${\mathcal{X}}$. 

The problem concerned in this paper is to obtain a generative model which only captures the target distribution $p^+({\mathbf{x}})$ by leveraging the available datasets ${\mathcal{X}}$ and ${\mathcal{X}}^-$. Obviously, if we directly train a generative model ({\it e.g.}, GANs, VAE) on the dataset ${\mathcal{X}}$, the generative distribution $p_g({\mathbf{x}})$ will only converge to the distribution of dataset ${\mathcal{X}}$, {\it i.e.}, the mixture distribution $p_d({\mathbf{x}}) = \pi p^+({\mathbf{x}}) + (1-\pi) p^-({\mathbf{x}})$, instead of our desired target distribution $p^+({\mathbf{x}})$. Thus, the key to address this problem lies at how to leverage the provided contamination instances ${\mathbf{x}}_i^-$ effectively, in addition to the dataset ${\mathcal{X}}$.

\subsection{PuriGAN with Two-Level Discriminator}\label{PuriGAN_Disjoint}

To address the problem above, in this paper, we establish our model on the framework of GANs, or more specifically on the least-square GAN (LSGAN), thanks to its flexibility in the design of discriminators. In LSGAN, the generator $G(\cdot)$ and discriminator $D(\cdot)$ are updated as
\begin{align} 
	\min_D V_{LS}(D) \! =& {\mathbb{E}}_{{\mathbf{x}}\sim p_{d}({\mathbf{x}})}\left[(D({\mathbf{x}}) - 1)^2\right] \nonumber \\
	& +  {\mathbb{E}}_{{\mathbf{x}} \sim p_g({\mathbf{x}})}\left[(D({\mathbf{x}}) - 0)^2\right], \label{Discrim_LSGAN}\\
	\min_{G} V_{LS}(G) \! = &{\mathbb{E}}_{{\mathbf{x}}\sim p_{d}({\mathbf{x}})}\left[(D({\mathbf{x}}) - 0.5 )^2\right] \nonumber \\ 
	&+  {\mathbb{E}}_{{\mathbf{z}} \sim p({\mathbf{z}})} \left[(D(G({\mathbf{z}})) - 0.5)^2\right], \label{G_LSGAN}
\end{align}
where $p_g({\mathbf{x}})$ denotes the distribution of generated samples, that is, $G({\mathbf{z}}) \sim p_g({\mathbf{x}})$; and $p({\mathbf{z}})$ is a standard normal distribution. It can be seen from \eqref{Discrim_LSGAN} and \eqref{G_LSGAN} that LSGAN encourages the discriminator to output 1 for samples from the data distribution $p_d({\mathbf{x}})$ and 0 for the generated ones, while forcing the generator to confuse the discriminator, {\it i.e.}, letting it output 0.5. The training objective of LSGAN is consistent with traditional GANs, except that it is implemented under the least-squared loss. From \eqref{Discrim_LSGAN}, it can be seen that the optimal discriminator of LSGAN is $D^*({\mathbf{x}}) = \frac{p_g({\mathbf{x}})}{p_d({\mathbf{x}})+ p_g({\mathbf{x}})}$, which views all samples from $p_d(x) = \pi p^+({\mathbf{x}}) + (1-\pi)p^-({\mathbf{x}})$ are identical and does not distinguish between the samples from $p^+({\mathbf{x}})$ and $p^-({\mathbf{x}})$.

To enable the LSGAN to learn the desired data distribution from contaminated datasets, we propose to augment the LSGAN's discriminator by adding an extra term ${\mathbb{E}}_{{\mathbf{x}}\sim p^-({\mathbf{x}})}\!\! \left[(D(\mathbf{x}) \!-\! 0)^2\right]$ in the $V_{LS}(D)$ to enable the induced discriminator to recognize the contamination instances
\begin{align} \label{Discrim_pol}
	\min_D V(D)  = & {\mathbb{E}}_{{\mathbf{x}}\sim p_{d}({\mathbf{x}})}\!\left[(D({\mathbf{x}}) - 1)^2\right] \nonumber \\
	& +\! {\mathbb{E}}_{{\mathbf{x}} \sim p_g({\mathbf{x}})}\! \left[(D({\mathbf{x}})- 0)^2\right] \nonumber \\
	& + \! \lambda {\mathbb{E}}_{{\mathbf{x}}\sim p^-({\mathbf{x}})}\! \left[(D(\mathbf{x}) - 0)^2\right],
\end{align}
where $\lambda$ is a weighting parameter; and $p_g({\mathbf{x}})$ denotes the distribution of generated samples. Obviously, with the extra term ${\mathbb{E}}_{{\mathbf{x}}\sim p^-({\mathbf{x}})}\!\! \left[(D(\mathbf{x}) \!-\! 0)^2\right]$, the discriminator seeks to output $0$ for samples from ${\mathcal{X}}^-$ and $1$ for samples from ${\mathcal{X}}$. Although ${\mathcal{X}}$ contains both desired and contamination instances, if we set $\lambda$ to be a very large value, the discriminator can recognize the contamination instances from $p^-({\mathbf{x}})$. With the discriminator $D(\cdot)$ derived from \eqref{Discrim_pol}, if we train the generator to have $D(\cdot)$ outputting a nonzero value $c$, the generator will endeavour to avoid generating instances that look like contamination instances. Specifically, we propose to update the generator as
\begin{align} 
	\min_G V(G) = & {\mathbb{E}}_{{\mathbf{x}}\sim p_{d}({\mathbf{x}})}\!\! \left[(D({\mathbf{x}}) \!-\! c)^2\right] \nonumber \\
	& +\! {\mathbb{E}}_{{\mathbf{z}} \sim p({\mathbf{z}})}\!\! \left[(D(G({\mathbf{z}})) \!-\! c)^2\right] \!\nonumber \\
	& +{\mathbb{E}}_{{\mathbf{x}}\sim p^-({\mathbf{x}})}\!\! \left[(D(\mathbf{x}) \!-\! c)^2\right], \label{Genera_pol}
\end{align}
where $c$ could be any value within $(0, 1)$. With the updating rules specified in \eqref{Discrim_pol} and \eqref{Genera_pol} for $D(\cdot)$ and $G(\cdot)$, we can prove that under some conditions, the generator distribution $p_g({\mathbf{x}})$ will converge to the desired target distribution $p^+({\mathbf{x}})$.
\begin{theorem}
	\label{theorem_disjoint}
	When $D(\cdot)$ and $G(\cdot)$ are updated according to \eqref{Discrim_pol} and \eqref{Genera_pol}, the optimal discriminator is
	\begin{equation}\label{Opt_Discrim_Disjoint}
		D^*({\mathbf{x}}) = \frac{p_d(\mathbf{x})}{p_d({\mathbf{x}}) + p_g({\mathbf{x}}) + \lambda p^-({\mathbf{x}})};
	\end{equation}
Moreover, by supposing the support of target and contamination distributions are disjoint, {\it i.e.}, $Supp(p^+({\mathbf{x}})) \cap Supp(p^-({\mathbf{x}})) = \emptyset$ and $\lambda \to + \infty$, the generator distribution $p_g({\mathbf{x}})$ will converge to the target distribution $p^+({\mathbf{x}})$.
\end{theorem}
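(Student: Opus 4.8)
The plan is to carry out the usual two-step GAN argument — first solve for the optimal discriminator with $p_g$ held fixed, then substitute it into \eqref{Genera_pol} and minimize over $p_g$ — where the second step is where the disjoint-support hypothesis and the limit $\lambda\to\infty$ do the work. The first step is routine: rewriting \eqref{Discrim_pol} as $V(D)=\int\big[p_d({\mathbf{x}})(D({\mathbf{x}})-1)^2+(p_g({\mathbf{x}})+\lambda p^-({\mathbf{x}}))D({\mathbf{x}})^2\big]\,d{\mathbf{x}}$ and minimizing the integrand pointwise — a strictly convex quadratic in the scalar $D({\mathbf{x}})$ whose vertex is found by setting the derivative to zero — gives exactly \eqref{Opt_Discrim_Disjoint}.

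For the second step, substitute $D^*$ back to get $V(G)=\int (D^*({\mathbf{x}})-c)^2\big(p_d({\mathbf{x}})+p_g({\mathbf{x}})+p^-({\mathbf{x}})\big)\,d{\mathbf{x}}$, and use $Supp(p^+)\cap Supp(p^-)=\emptyset$ to split the domain into $A=Supp(p^+)$, $B=Supp(p^-)$ and the remainder $C$. On $A$ we have $p^-=0$, so $p_d=\pi p^+$ and $D^*=\pi p^+/(\pi p^+ + p_g)$, which does not involve $\lambda$; on $B$ we have $p^+=0$ and $p^->0$, so $\lambda p^-\to\infty$ forces $D^*\to0$; on $C$ only $p_g$ may be nonzero, so again $D^*\to0$. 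Passing to the limit (dominated convergence applies since $0\le D^*\le1$ and $D^*$ is monotone in $\lambda$ for fixed $p_g$) yields
\begin{equation*}
  \lim_{\lambda\to\infty} V(G)= c^2\!\int_B\!\big(p_d+p^-\big)\,d{\mathbf{x}} \;+\; c^2\!\int_{B\cup C}\! p_g\,d{\mathbf{x}} \;+\; \int_A\frac{\big((1-c)\pi p^+ - c\,p_g\big)^2}{\pi p^+ + p_g}\,d{\mathbf{x}}.
\end{equation*}

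Now minimize the right-hand side over densities $p_g$. The first integral is a constant, the second is $c^2(1-\beta)$ where $\beta:=\int_A p_g\in[0,1]$, and the third depends only on $p_g|_A$. For fixed $\beta$ the third integrand is convex in the value $p_g({\mathbf{x}})$ (any $(av+b)^2/(v+u)$ with $v+u>0$ is convex in $v$), so the constrained minimum over $\{p_g\ge0:\int_A p_g=\beta\}$ is attained at the stationary point of the Lagrangian; since the pointwise optimality condition turns out to be scale-invariant in $p_g$, this forces $p_g\propto p^+$ on $A$, i.e.\ $p_g=\beta p^+$. Substituting this back makes $\lim_\lambda V(G)$ an explicit convex function of $\beta$ (the value function of a convex program with a linear term added), and a short calculation shows its derivative with respect to the leaked mass $1-\beta$ equals $\pi^2/(\pi+1)^2>0$ at $\beta=1$; hence the optimum is $\beta=1$, and then $p_g=p^+$, which is the assertion of the theorem.

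I expect the main obstacle to be this last step rather than the computation of $D^*$: one has to (i) make the exchange of $\lambda\to\infty$ with the minimization over $G$ precise, and (ii) rule out the generator ``hiding'' probability mass on $B\cup C$ (outside $Supp(p^+)$) — intuitively such mass is penalized by $c^2$ per unit, but it also loosens the mass constraint in the $A$-integral, and only the explicit sign of the derivative $\pi^2/(\pi+1)^2$ settles which effect wins. The convexity facts and the pointwise quadratic minimizations are mechanical and should present no difficulty.
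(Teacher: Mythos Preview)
Your proposal is correct and structurally matches the paper's proof: both compute $D^*$ by pointwise quadratic minimization, substitute into $V(G)$, split the integral using the disjoint-support hypothesis, pass to the limit $\lambda\to\infty$, and reduce to a one-dimensional problem in the mass $\beta=\int_{Supp(p^+)}p_g$ (the paper writes $\alpha$). The one noteworthy difference is in how the $A$-integral is handled for fixed $\beta$: you solve the constrained minimization via a Lagrangian to identify the optimal shape $p_g=\beta p^+$ and then compute the value; the paper instead applies Jensen's inequality directly with $\varphi(x)=(x-c)^2$ against the probability density $\tfrac{1}{\pi+\alpha}(\pi p^+ + p_g)$ on $\mathcal{S}_1$, which yields the lower bound $(\pi+\alpha)\,\varphi\!\bigl(\tfrac{\pi}{\pi+\alpha}\bigr)$ without ever identifying the minimizing shape. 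Both routes land on the same explicit expression $\tfrac{\pi^2}{\pi+\beta}+\text{const}$, monotone decreasing in $\beta$, so $\beta=1$ and hence $p_g=p^+$. The Jensen step is shorter and avoids the Lagrange-multiplier bookkeeping; your route is more constructive and --- with the dominated-convergence remark and the separate treatment of the null region $C$ --- somewhat more careful about the analysis than the paper's sketch.
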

\begin{proof}
	Please refer to the Supplementary Materials.
\end{proof}

Here, we provide a sketch of the proof to gain some insights to the theorem. By deriving the derivatives of $V(D)$ in \eqref{Discrim_pol} w.r.t. $D(\cdot)$ and setting it to $0$, the optimal discriminator $D^*({\mathbf{x}})$ in \eqref{Opt_Discrim_Disjoint} can be easily obtained.  To see the convergence result $p_g({\mathbf{x}}) \to p^+({\mathbf{x}})$, substituting $p_d({\mathbf{x}}) = \pi p^+({\mathbf{x}}) + (1-\pi) p^-({\mathbf{x}})$ and the optimal discriminator $D^*({\mathbf{x}})$ in \eqref{Opt_Discrim_Disjoint} into the $V(G)$ in \eqref{Genera_pol} gives
\begin{align}
\begin{split}
    &V(G)
    \\&=
    \int_{\mathcal{S}}\!\bigg \{ \pi\left(\frac{\pi p^+ + (1-\pi)p^-}{\pi p^+ + (1-\pi)p^-+\lambda p^- +p_g}-c\right)^{2}\!\!p^{+}
    \\&\quad+\left(1\!-\!\pi\right)\left(\frac{\pi p^+ + (1-\pi)p^-}{\pi p^+ + (1-\pi)p^-+\lambda p^- +p_g} \!-\! c\right)^{2}\!\! p^{-}
    \\&\quad +\left(\frac{\pi p^+ + (1-\pi)p^-}{\pi p^+ + (1-\pi)p^-+\lambda p^- +p_g}-c\right)^{2}\!\! p_{g} 
    \\&\quad +\!\left( \frac{\pi p^+ + (1-\pi)p^-}{\pi p^+ + (1-\pi)p^-+\lambda p^- +p_g} \!-\! c \!\right)^{2}\!\!\! p^{-} \!\bigg \} \mathrm{d} \boldsymbol{x},
    \label{proof2:GEobjective1}
\end{split}
\end{align}
where the argument in the distribution is omitted for conciseness, {\it e.g.}, $p^+({\mathbf{x}})$ is abbreviated as $p^+$; and $\mathcal{S}$ represents the entire real space. 
From the assumption that the supports of $p^+(\mathbf{x})$ and $p^-(\mathbf{x})$ are disjoint, we have $p^-(\mathbf{x})=0$ when $p^+(\mathbf{x})>0$. Similarly, we must have $p^+(\mathbf{x})=0$ when $p^-(\mathbf{x})>0$. We thus divide the whole space $\mathcal{S}$ into two non-overlapped sub-spaces $\mathcal{S}_1$ and $\mathcal{S}_2$, with $p^+(\mathbf{x})\ge0$ and $p^-(\mathbf{x})=0$ in $\mathcal{S}_1$, and $p^-(\mathbf{x})\ge0$ and $p^+(\mathbf{x})=0$ in $\mathcal{S}_2$. Based on this observation, \eqref{proof2:GEobjective1} can be simplified as
\begin{align}
\begin{split}
    &V(G)=
    \int_{\mathcal{S}_1}\bigg \{\left(\frac{\pi p^+}{\pi p^+ +p_g}-c\right)^{2}\!\! (\pi p^{+} + p_g) \bigg \}\mathrm{d} \boldsymbol{x}
    \\& \;\;\; +\int_{\mathcal{S}_2} \!\!\! \bigg \{ \!\! \left(2-\pi\right)\!\left(\frac{(1-\pi)p^-}{(1-\pi)p^-+\lambda p^- +p_g}-c\right)^{2}\!\!p^{-}
    \\&\;\;\; +\left(\frac{(1-\pi)p^-}{(1-\pi)p^-+\lambda p^- +p_g}-c\right)^{2} \!\!p_{g} \bigg \} \mathrm{d} \boldsymbol{x}.
    \label{proof1:Gobjective2}
\end{split}
\end{align}
When $\lambda$ is set to be a very lage number, we can see that $\frac{(1-\pi)p^-(\mathbf{x})}{(1-\pi)p^-(\mathbf{x})+\lambda p^-(\mathbf{x}) +p_g(\mathbf{x})}$ converges to 0. Based on this observation, \eqref{proof1:Gobjective2} can be further written as
\begin{align}
\begin{split}
    V(G) =& \int_{\mathcal{S}_1}\bigg \{\left(\frac{\pi p^+}{\pi p^+ +p_g}-c\right)^{2}\!\! (\pi p^{+} + p_g) \bigg \}\mathrm{d} \boldsymbol{x} 
    \\&+ \int_{\mathcal{S}_2}\!\!\bigg \{ c^2(2-\pi) p^-\!+\!c^2p_g \bigg \} \mathrm{d} \boldsymbol{x}.
    \label{proof1:Gobjective3}
\end{split}
\end{align}
Define the function $\varphi(x) \triangleq (x-c)^2$. Obviously, $\varphi(x)$ is a convex function, thus according to Jensen inequality, we must have $\varphi \!\left(\int_{-\infty}^{\infty} g(x)p(x) \mathrm{d} x\right)\!\!\le\!\! \int_{-\infty}^{\infty} \varphi\left(g(x)\right)\!p(x)\ \mathrm{d} {x} $ for any  distribution $p(x)$ and function $g(\cdot)$. By denoting $\int_{\mathcal{S}_1}p_g \mathrm{d} \boldsymbol{x}$ as $\alpha$, then we must have $\int_{\mathcal{S}_2}p_g \mathrm{d} \boldsymbol{x} = 1 - \alpha$. Combining with the inequality above, we can infer from \eqref{proof1:Gobjective3} that
\begin{align}
    V(G) \ge  (\pi+\alpha) \ \varphi\left(\frac{\pi}{\pi+\alpha}\right)+c^2(3- \pi - \alpha).
    \label{proof1:Gobjective5}
\end{align}
It can be easily shown that the r.h.s. of \eqref{proof1:Gobjective5} is a  monotonic decreasing function of $\alpha$, thus the r.h.s. of \eqref{proof1:Gobjective5} is minimized when $\alpha$ is equal to 1. Thus,  the inequality
\begin{align}
    V(G) \ge (1+\pi)\ \varphi\left(\frac{\pi}{1+\pi}\right) +c^2 \left(2-\pi\right)
    \label{proof1:optimalg1}
\end{align}
always holds, which is obtained by setting $\alpha=1$. On the other hand, if we substitute $p_g=p^+$ into \eqref{proof1:Gobjective3}, we obtain
\begin{align}
    V(G) = (1+\pi)\ \varphi\left(\frac{\pi}{1+\pi}\right) +c^2 \left(2-\pi\right).
    \label{proof1:V(G)pg_equal_p+}
\end{align}
Comparing \eqref{proof1:V(G)pg_equal_p+} to \eqref{proof1:optimalg1}, we can see that $V(G)$ attains its global minima when $p_g=p^+$. Therefore, if $D(\cdot)$ and $G(\cdot)$ are updated according to \eqref{Discrim_pol} and \eqref{Genera_pol}, under the specified conditions, the generator distribution will converge to the target distribution $p^+({\mathbf{x}})$. The rigorous and detailed proof is given in the Supplementary.

In practice, the disjoint support condition $Supp(p^+({\mathbf{x}})) \cap Supp(p^-({\mathbf{x}})) = \emptyset$ could be considered being satisfied when the desired and contamination instances look sufficiently different. But when they share many similarities, the generator's ability of only generating desired instances could be compromised. For the parameter $\lambda$, theoretically, it should be set very large. But in practice, since we may not be able to find a classifier to separate the target and contamination instances, if $\lambda$ is set too large, the classifier is likely to classify all instances to $0$, which, obviously, will weaken the discriminator's ability of distinguishing between the target and contamination instances. Hence, there should be a balance on the choice of $\lambda$. We observe that it only has a minor influence on the performance as long as it is not set too large or too small ({\it e.g.,} $1\le \lambda \le 5$). We simply set it to $1$ in all experiments of this paper.

\subsection{PuriGAN with Three-Level Discriminator}\label{PuriGAN_three-level}

For the discriminator of two-level PuriGAN, it outputs the same value $0$ for both contamination instances from $p^-({\mathbf{x}})$ and generated instances from $p_g({\mathbf{x}})$, making it lack the ability to distinguish between the two types of insances. To further improve the generation performance, we propose to further augment the discriminator by requiring it output three different values for instances from ${\mathcal{X}}$, ${\mathcal{X}}^-$ and $p_g({\mathbf{x}})$, respectively. To this end, we propose to update the discriminator as follows
\begin{align} \label{Discrim_Relax}
	\min_D V(D) = &{\mathbb{E}}_{{\mathbf{x}}\sim p_d({\mathbf{x}})}\!\left[(D({\mathbf{x}}) - 1)^2\right] \nonumber \\
	& + {\mathbb{E}}_{\mathbf{x} \sim p_g({\mathbf{x}})}\!\left[(D({\mathbf{x}})-0)^2\right]\nonumber \\
	&+ {\mathbb{E}}_{{\mathbf{x}} \sim p^-({\mathbf{x}})} \! \left[(D({\mathbf{x}})- d)^2\right] ,
\end{align}
where $d$ is a value that will be specifically set later. Since the discriminator is designed to output three different values, it should possess some ability to distinguish the three types of instances. With the augmented discriminator derived from \eqref{Discrim_Relax}, the generator can be trained by encouraging the discriminator $D(\cdot)$ to output $c$ for all instances, that is,
\begin{align} \label{Generator_Relax}
	\min_G V(G) = &{\mathbb{E}}_{{\mathbf{x}}\sim p_{d}({\mathbf{x}})}\!\! \left[(D({\mathbf{x}}) \!-\! c)^2\right]  \nonumber \\ 
	&\!+\! {\mathbb{E}}_{{\mathbf{z}} \sim p({\mathbf{z}})}\!\! \left[(D(G({\mathbf{z}})) \!-\! c)^2\right] \nonumber \\ 
	&\!+\!   {\mathbb{E}}_{{\mathbf{x}}\sim p^-({\mathbf{x}})}\!\! \left[(D(\mathbf{x}) \!-\! c)^2\right],
\end{align}
where $c$ could be any value within $(0, 1)$. We can also prove that under some mild conditions, the generator distribution $p_g({\mathbf{x}})$ will converge to the target distribution.
\begin{theorem}
	\label{theorem_contaminated}
	When $D(\cdot)$ and $G(\cdot)$ are updated according to \eqref{Discrim_Relax} and \eqref{Generator_Relax}, the optimal discriminator is
	\begin{equation}
	\label{optimal_three_level_discri}
		D^*({\mathbf{x}}) = \frac{p_d({\mathbf{x}}) + d \cdot p^-({\mathbf{x}})}{p_d({\mathbf{x}}) + p_g({\mathbf{x}}) + p^-({\mathbf{x}})};
	\end{equation}
 Moreover, if $d$ is set as 
 \begin{equation}
 	d=\frac{2\pi-1}{\pi+1},
 	\end{equation}
  the generator distribution $p_g({\mathbf{x}})$ will converge to the target distribution $p^+({\mathbf{x}})$.
\end{theorem}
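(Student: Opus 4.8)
\medskip
\noindent The plan is to mirror the two‑step structure used for Theorem~\ref{theorem_disjoint}. First I would obtain the optimal discriminator by minimizing the integrand of $V(D)$ in \eqref{Discrim_Relax} pointwise: for each $\mathbf{x}$ the expression $p_d(D-1)^2 + p_g D^2 + p^-(D-d)^2$ is a strictly convex quadratic in the scalar $D(\mathbf{x})$, so setting its derivative to zero and solving gives $D^*(\mathbf{x}) = \frac{p_d(\mathbf{x}) + d\,p^-(\mathbf{x})}{p_d(\mathbf{x}) + p_g(\mathbf{x}) + p^-(\mathbf{x})}$, which is exactly \eqref{optimal_three_level_discri}. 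This part is a routine calculation.

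For the convergence claim I would substitute $D^*$ back into $V(G)$ in \eqref{Generator_Relax}. Since the three terms of $V(G)$ are $(D-c)^2$ integrated against $p_d$, $p_g$ and $p^-$ respectively, they combine into the single integral
\begin{equation}
V(G) = \int_{\mathcal{S}} \big(D^*(\mathbf{x})-c\big)^2\big(p_d + p_g + p^-\big)\,\mathrm{d}\mathbf{x} = \int_{\mathcal{S}} \frac{\big(N(\mathbf{x}) - c\,Q(\mathbf{x})\big)^2}{Q(\mathbf{x})}\,\mathrm{d}\mathbf{x},
\end{equation}
where I abbreviate $N \triangleq p_d + d\,p^-$ and $Q \triangleq p_d + p_g + p^-$, and note $\int N\,\mathrm{d}\mathbf{x} = 1+d$ and $\int Q\,\mathrm{d}\mathbf{x} = 3$. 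Next I would apply the Cauchy--Schwarz inequality to $\tfrac{N-cQ}{\sqrt{Q}}$ and $\sqrt{Q}$ (equivalently Jensen's inequality with $t\mapsto t^2$ against the probability measure $\tfrac13 Q\,\mathrm{d}\mathbf{x}$) to get
\begin{equation}
V(G) \;\ge\; \frac{1}{3}\left(\int_{\mathcal{S}}\big(N(\mathbf{x}) - c\,Q(\mathbf{x})\big)\,\mathrm{d}\mathbf{x}\right)^{2} = \frac{(1+d-3c)^2}{3},
\end{equation}
with equality if and only if $D^*(\mathbf{x}) = N(\mathbf{x})/Q(\mathbf{x})$ is a.e.\ constant on $\{Q>0\}$; matching the integral constraints forces that constant to equal $\frac{1+d}{3}$.

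Finally I would convert the equality condition $D^*\equiv\frac{1+d}{3}$ into a linear identity in the densities and solve for $p_g$, obtaining the unique candidate $p_g = \frac{(2-d)p_d + (2d-1)p^-}{1+d}$; substituting $p_d = \pi p^+ + (1-\pi)p^-$ and then the prescribed $d = \frac{2\pi-1}{\pi+1}$, the coefficient of $p^-$ collapses to $0$ and the coefficient of $p^+$ becomes $1$, so $p_g = p^+$. Conversely, plugging $p_g = p^+$ and this $d$ back into $D^*$ yields the constant $\frac{\pi}{1+\pi}\in(0,1)$, so the lower bound is genuinely attained there. Hence $V(G)$ has a unique global minimizer $p_g = p^+$, which is a bona fide probability density, and therefore the generator distribution converges to the target distribution $p^+(\mathbf{x})$.

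The step I expect to be the main obstacle is the rigorous treatment of the equality case: showing that equality in Cauchy--Schwarz genuinely forces $D^*$ to be constant almost everywhere on the support of $Q$ rather than merely bounding it, that the integral conditions pin down the value of that constant, and that the induced linear equation for $p_g$ has $p^+$ as its \emph{unique} solution precisely because of the choice $d=\frac{2\pi-1}{\pi+1}$ — for a generic $d$ the forced $p_g$ need not even be a nonnegative, normalized density. It is also worth emphasizing, in contrast to Theorem~\ref{theorem_disjoint}, that this argument requires neither a disjoint‑support hypothesis nor a $\lambda\to\infty$ limit; the three‑level discriminator makes the exact target $p^+$ achievable with finite parameters.
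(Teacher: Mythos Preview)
Your proposal is correct and follows essentially the same route as the paper: pointwise minimization for $D^*$, then the identity $V(G)=\int (D^*-c)^2 Q\,\mathrm{d}\mathbf{x}$ with $Q=p_d+p_g+p^-$, followed by Jensen/Cauchy--Schwarz to obtain the lower bound $3\big(\tfrac{1+d}{3}-c\big)^2=3\,\varphi\big(\tfrac{\pi}{\pi+1}\big)$, and finally verification that $p_g=p^+$ with $d=\tfrac{2\pi-1}{\pi+1}$ attains it. Your treatment is slightly sharper than the paper's in that you extract the equality condition of Cauchy--Schwarz to argue uniqueness of the minimizer, whereas the paper only checks attainment at $p_g=p^+$ without discussing uniqueness.
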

\begin{proof}
	Please refer to the Supplementary Materials.
\end{proof}

The proof of Theorem \ref{theorem_contaminated} is very different from that of Theorem  \ref{theorem_disjoint} since the subtle premises of disjoint support $Supp(p^+({\mathbf{x}})) \cap Supp(p^-({\mathbf{x}})) = \emptyset$ and infinite weighting parameter $\lambda$ in Theorem \ref{theorem_disjoint} are not required. The key to prove Theorem \ref{theorem_contaminated} is to find an appropriate value for $d$ such that the generator distribution $p_g({\mathbf{x}})$ induced from the updating equations equals to target distribution $p^+({\mathbf{x}})$. We refer readers to the Supplementary for detailed and rigorous proof.

Theorem \ref{theorem_contaminated} does not rely on the subtle disjoint support and infinite weighting parameter $\lambda$ condition, but it requires to know the ratio of desired instances $\pi$. In practice, despite it is difficult to know the exact value of $\pi$, many methods have been developed to estimate the ratio that a type of instances accounts for in a dataset \cite{ramaswamy2016mixture,christoffel2016class,jain2016estimating}. Thus, we can use these existing methods to estimate the value of $\pi$. To evaluate the influence of using a estimated $\pi$, we conduct sensitive analysis to the parameter $\pi$ in our experiments, revealing that the generation performance is not sensitive to small estimation error of $\pi$. Thus, an estimate of $\pi$ is enough to deliver a competitive performance for PuriGAN. It is also worth pointing out that under the special case of $\pi = 0.5$, we can see that the updating rules of $G(\cdot)$ and $D(\cdot)$ in three-level PuriGAN are exactly the same as those in two-level PuriGAN with $\lambda=1$.

\section{Applications}
\label{sec_application_examples}

In addition to generate novel instances from $p^+({\mathbf{x}})$, PuriGAN can be applied to lots of downstream tasks. In this section, two examples of them are demonstrated.

\begin{table*}[!tb]
\centering
\small
\setlength\tabcolsep{3pt}
\renewcommand\arraystretch{0.95}
{\begin{tabular}{cccccccccc}
\toprule
\makecell[c]{Data}   & \makecell[c]{LSGAN} & \makecell[c]{NDA} &  \makecell[c]{GenPU} & \makecell[c]{Rumi-LSGAN}& \makecell[c]{PU-LSGAN} & \makecell[c]{PU-NDA} &  \makecell[c]{PuriGAN$_1$} & \makecell[c]{PuriGAN$_2$} \\
\midrule

MNIST  & 26.93 $\pm$ 5.7 &21.94 $\pm$ 8.9 & 21.28 $\pm$ 6.0 & 13.31 $\pm$ 2.4 & 15.21 $\pm$ 3.2 & 10.35 $\pm$ 3.1&9.71 $\pm$ 1.9 &\textbf{9.51 $\pm$ 1.0} \\

F-MNIST  & 62.44 $\pm$ 6.8 & 58.88 $\pm$ 14.3 & 58.73 $\pm$ 8.0 & 37.24 $\pm$ 4.4 & 46.94 $\pm$ 5.1 & 44.43 $\pm$ 5.4 & 37.61 $\pm$ 4.6 &\textbf{34.86 $\pm$ 3.2} \\

SVHN   & 28.50 $\pm$ 4.9 & 27.58 $\pm$ 5.2  & 26.87 $\pm$ 5.6 & 21.08 $\pm$ 2.5 & 26.44 $\pm$ 3.9 & 23.21 $\pm$ 3.5 & 20.32 $\pm$ 2.4 & \textbf{19.63 $\pm$ 3.6} \\

CelebA & 49.29 $\pm$ 1.9 & 52.93 $\pm$ 2.3 & 45.81 $\pm$ 1.8 & 42.37 $\pm$ 1.9 & 44.75 $\pm$ 2.3  & 46.34 $\pm$ 2.8  & 36.43 $\pm$ 1.5 & \textbf{35.67 $\pm$ 1.5} \\

CIFAR-10 & 61.08 $\pm$ 9.9 & 72.95 $\pm$ 10.7 &  62.59 $\pm$ 10.8 & 56.28 $\pm$ 10.7 & 59.26 $\pm$ 10.7 & 70.12 $\pm$ 11.8  & 54.70 $\pm$ 10.4 & \textbf{52.70 $\pm$ 10.6}\\
\bottomrule
\end{tabular}
\vspace{-2mm}
\caption{Comparison of FID scores$\downarrow$ on different datasets under $\gamma_p=0.4$ and $\gamma_c=0.2$, where PuriGAN$_1$ and PuriGAN$_2$ denote PuriGAN using two- and three-level discriminator, respectively.}
\vspace{-4mm}
\label{Overall_fidscore}}
\end{table*}

\subsection{Anomaly Detection on Contaminated Datasets}  
One of the widely used anomaly detection approaches are to train a generative model (GAN \cite{goodfellow2014generative} or VAE \cite{VAE2014}) on a dataset that only contains normal samples to capture the distribution of normal samples. Anomalies can then be detected using the criteria derived from the generative models, such as the density value, reconstruction error \cite{AnoGAN17,zenati2018adversarially,akcay2018ganomaly} etc. However, in many scenarios, the collected dataset is often mixed with a proportion of anomalous instances. Thus, if we directly train a generative model on the contaminated dataset, it would learn the mixture distribution of nomral instances and anomalies, compromising its ability to detect anomalies. A remedy to this is to collect a small number of anomalies and then train PuriGAN on the two datasets to only learn the distribution of normal instances. In our experiments, we employ the output probability of PuriGAN's discriminator as the detection criteria by noticing that the value partially indicates the normality of input instance. Obviously, more sophisticated criteria can be used, such as leveraging the features in the discriminator or using the reconstruction error etc. But this is beyond the focus of this paper, hence we leave it for future work.

\subsection{PU-learning} 
The task of PU-learning is to classify an unlabelled dataset into two classes when an extra dataset containing only one class of instances is available \cite{kiryo2017positive,du2015convex}. Obviously, the setting of PU-learning fits with PuriGAN naturally. To perform PU-learning, we propose to train the two-level PuriGAN on the two provided datasets and then use the discriminator to classify the unlabelled instances by noticing that the discriminator is designed to distinguish between the two types of instances. Comparing to traditional PU-learning methods that directly train a classifier, the generation part in PuriGAN plays a role of data augmentation by generating new training samples, which is potentially able to lead to a more competitive performance.

\begin{figure*}[!tb]
    \centering
    \begin{minipage}{0.2\linewidth}
        \centering
        \includegraphics[width=1\linewidth]{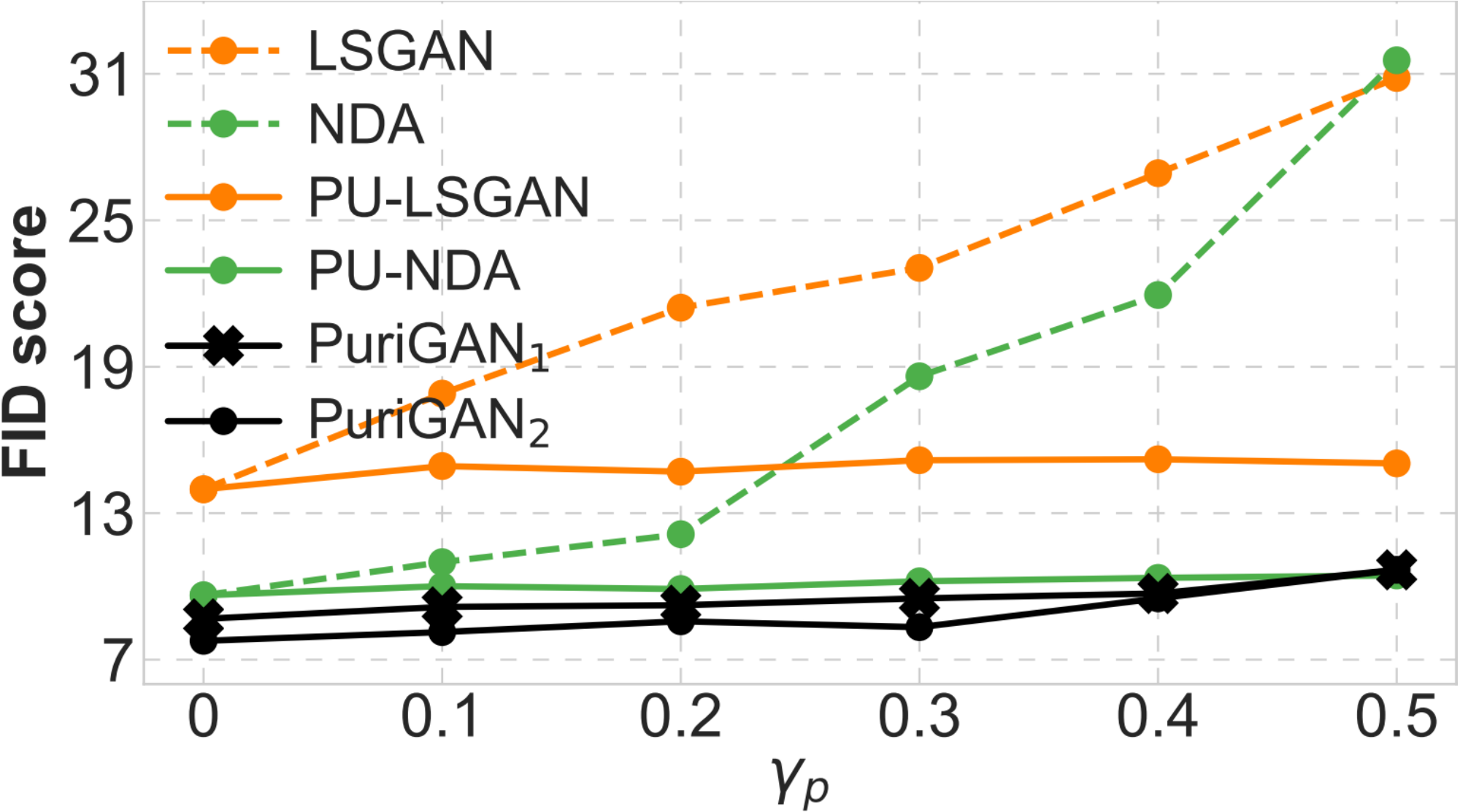}
        \vspace{-6mm}
        \caption*{\scriptsize MNIST}
    \end{minipage}
    \hspace{0.03\linewidth}
    \begin{minipage}{0.2\linewidth}
        \centering
        \includegraphics[width=1\linewidth]{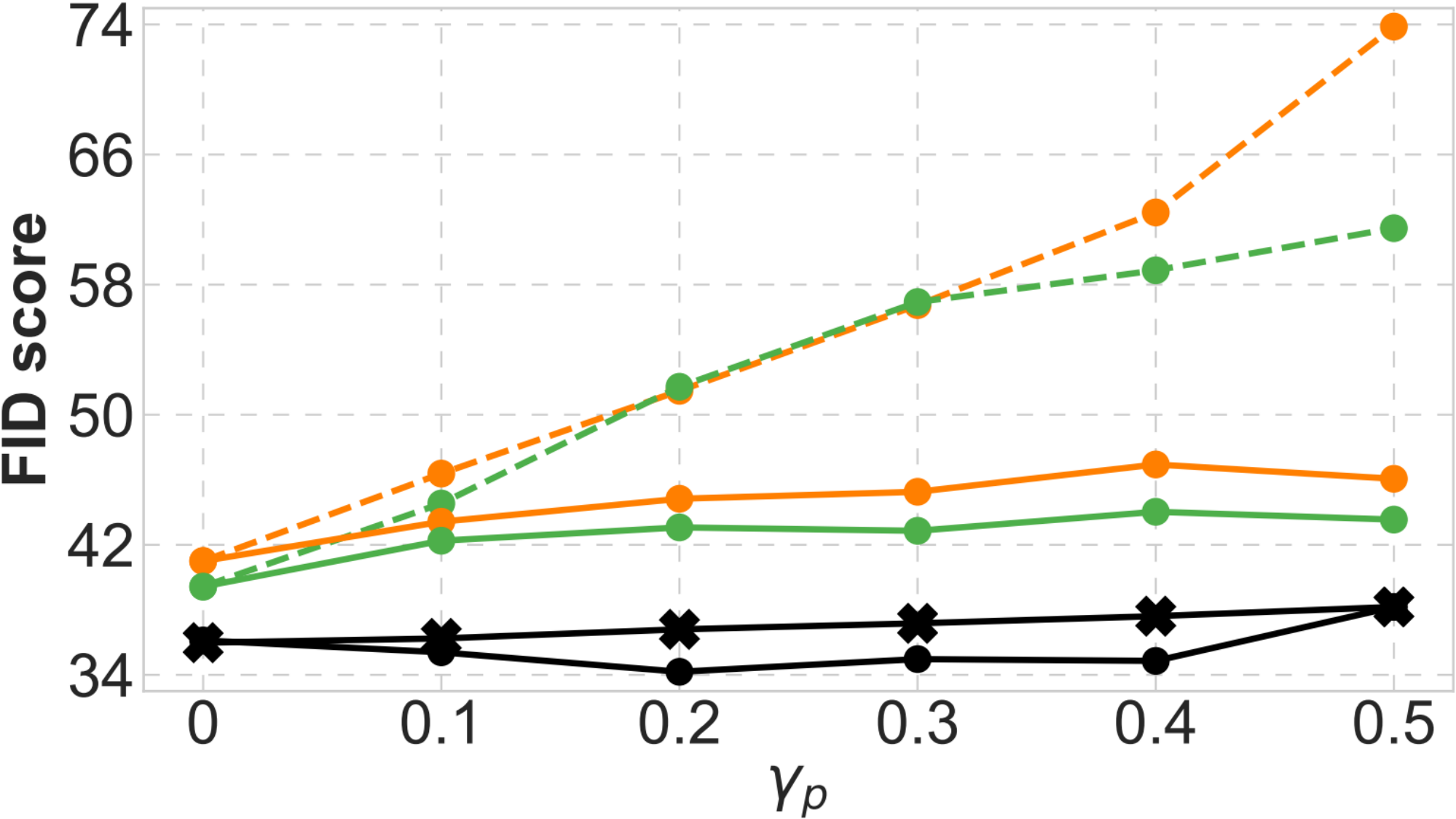}
        \vspace{-6mm}
        \caption*{\scriptsize F-MNIST}
    \end{minipage}
    \hspace{0.03\linewidth}
    \begin{minipage}{0.2\linewidth}
        \centering
        \includegraphics[width=1\linewidth]{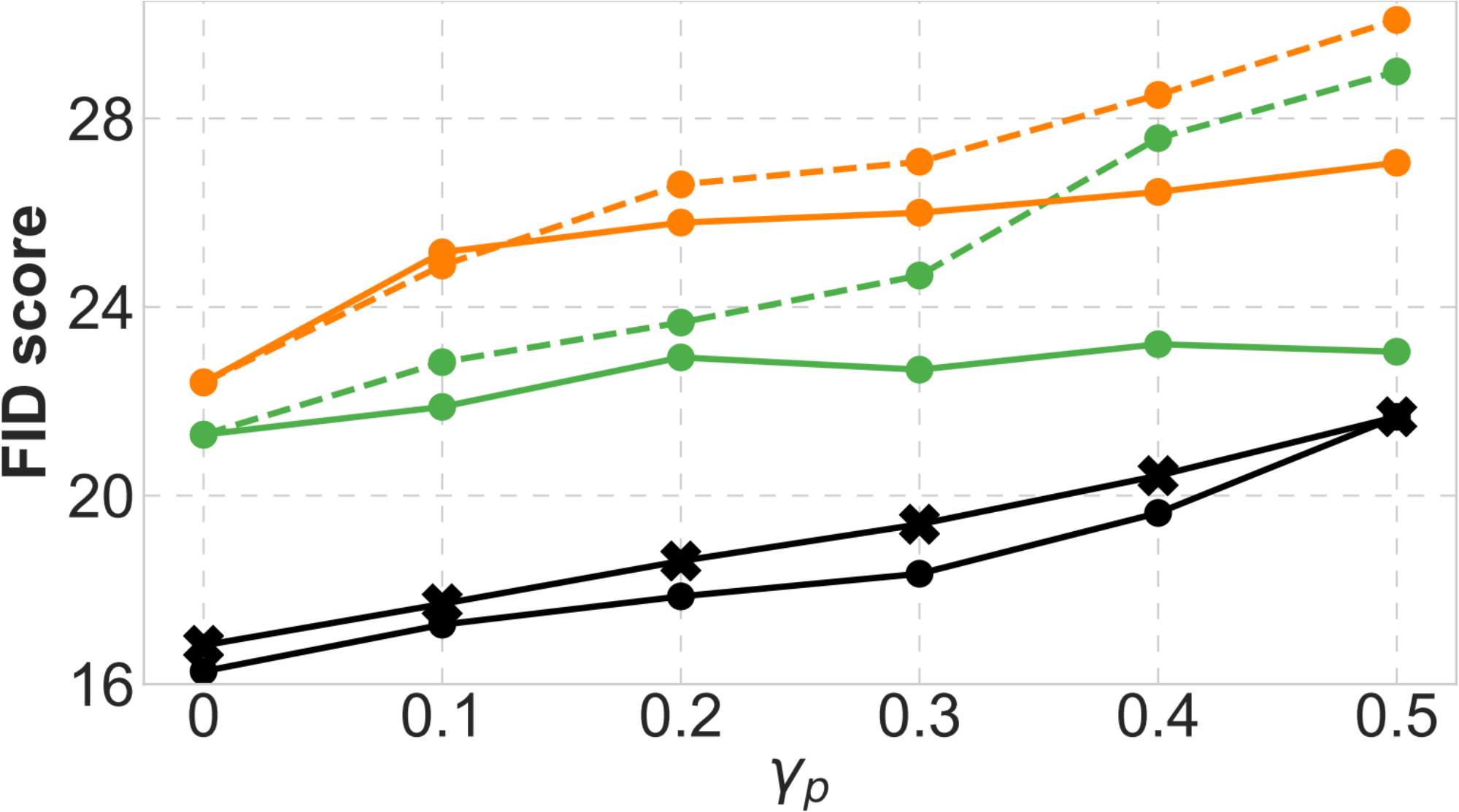}
        \vspace{-6mm}
        \caption*{\scriptsize SVHN}
    \end{minipage}
    \hspace{0.03\linewidth}
    \begin{minipage}{0.2\linewidth}
        \centering
        \includegraphics[width=1\linewidth]{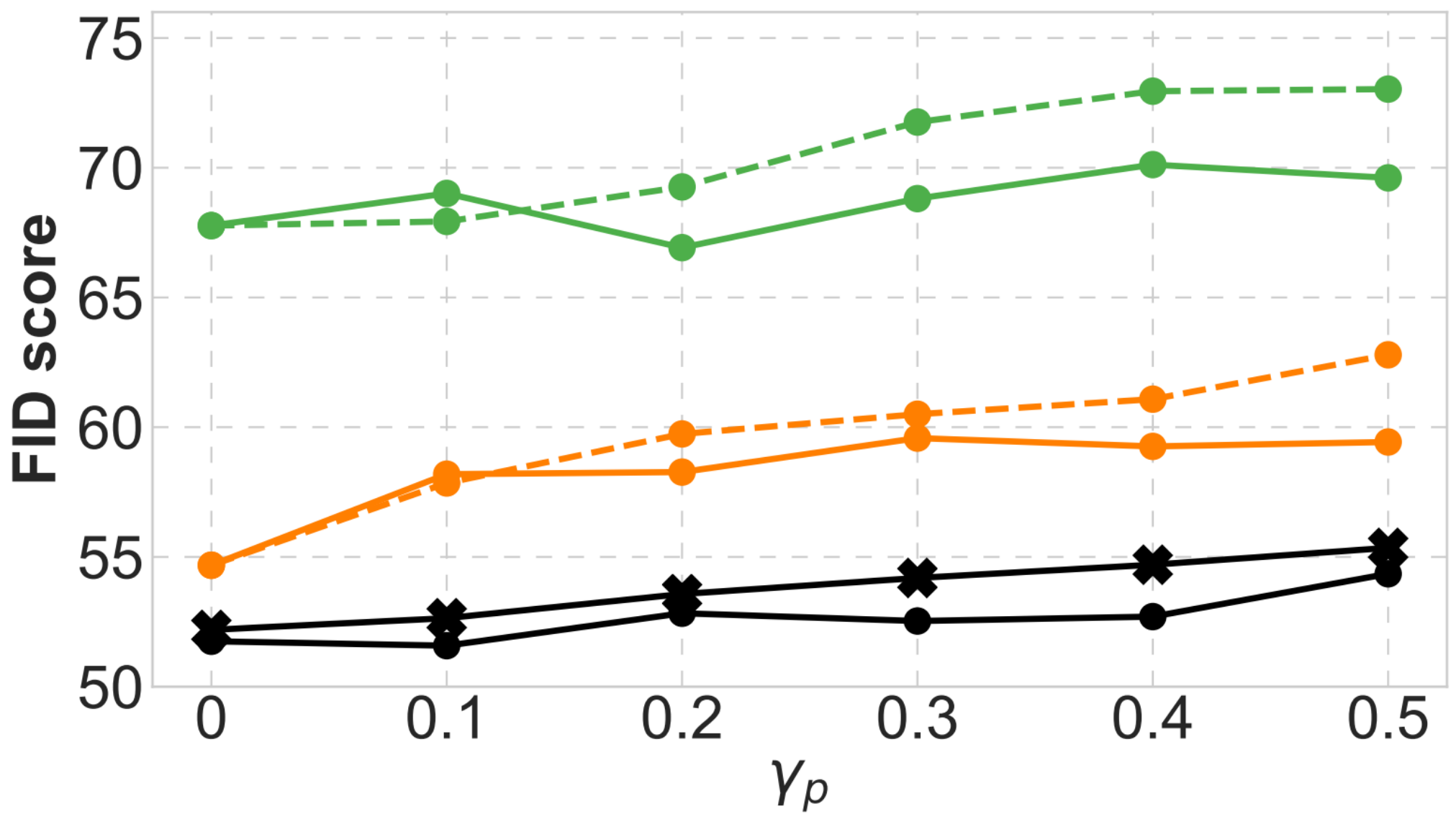}
        \vspace{-6mm}
        \caption*{\scriptsize CIFAR-10}
    \end{minipage}
\vspace{-2mm}
\caption{FID score as a function of contamination ratio
$\gamma_p$ under a fixed $\gamma_c=0.2$, where PuriGAN$_1$ and PuriGAN$_2$ denotes PuriGAN using two-level and three-level discriminator, respectively.}
\vspace{-1mm}
\label{FID-contamination-ratio}
\end{figure*}

\begin{figure*}[!tb]
    \centering
    \begin{minipage}{0.2\linewidth}
        \centering
        \includegraphics[width=1\linewidth]{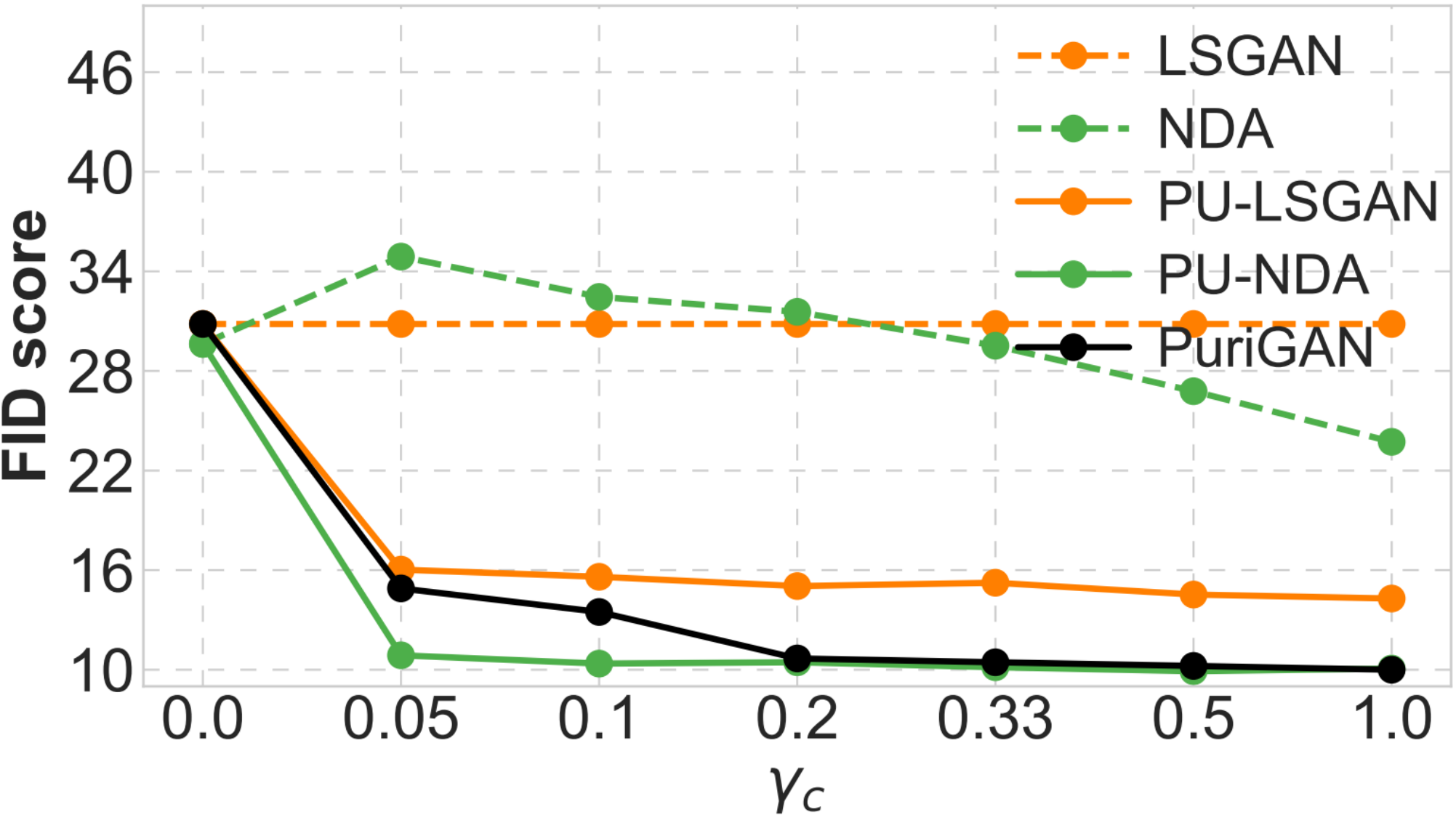}
        \vspace{-6mm}
        \caption*{\scriptsize MNIST}
    \end{minipage}
    \hspace{0.03\linewidth}
    \begin{minipage}{0.2\linewidth}
        \centering
        \includegraphics[width=1\linewidth]{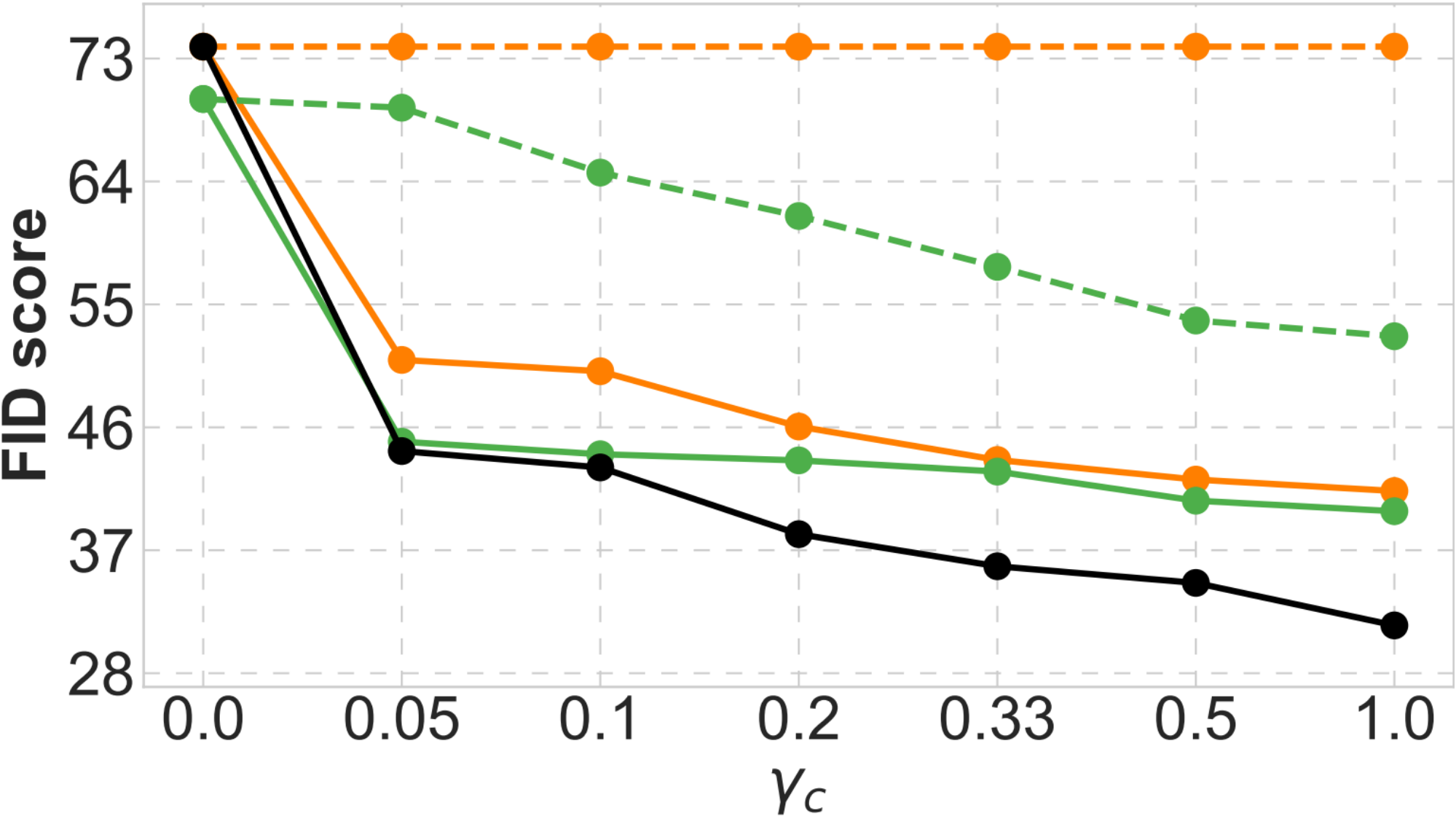}
        \vspace{-6mm}
        \caption*{\scriptsize F-MNIST}
    \end{minipage}
    \hspace{0.03\linewidth}
    \begin{minipage}{0.2\linewidth}
        \centering
        \includegraphics[width=1\linewidth]{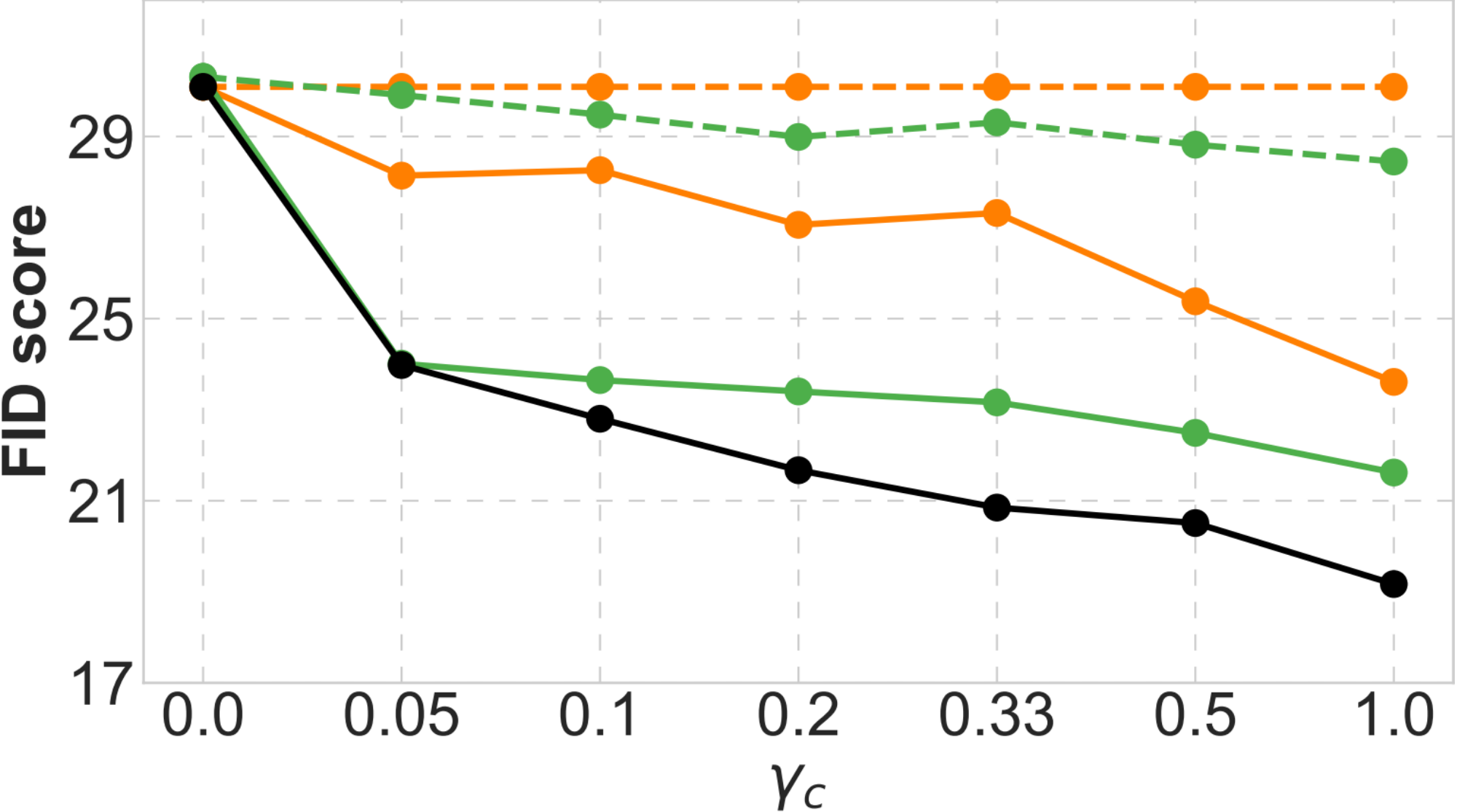}
        \vspace{-6mm}
        \caption*{\scriptsize SVHN}
    \end{minipage}
    \hspace{0.03\linewidth}
    \begin{minipage}{0.2\linewidth}
        \centering
        \includegraphics[width=1\linewidth]{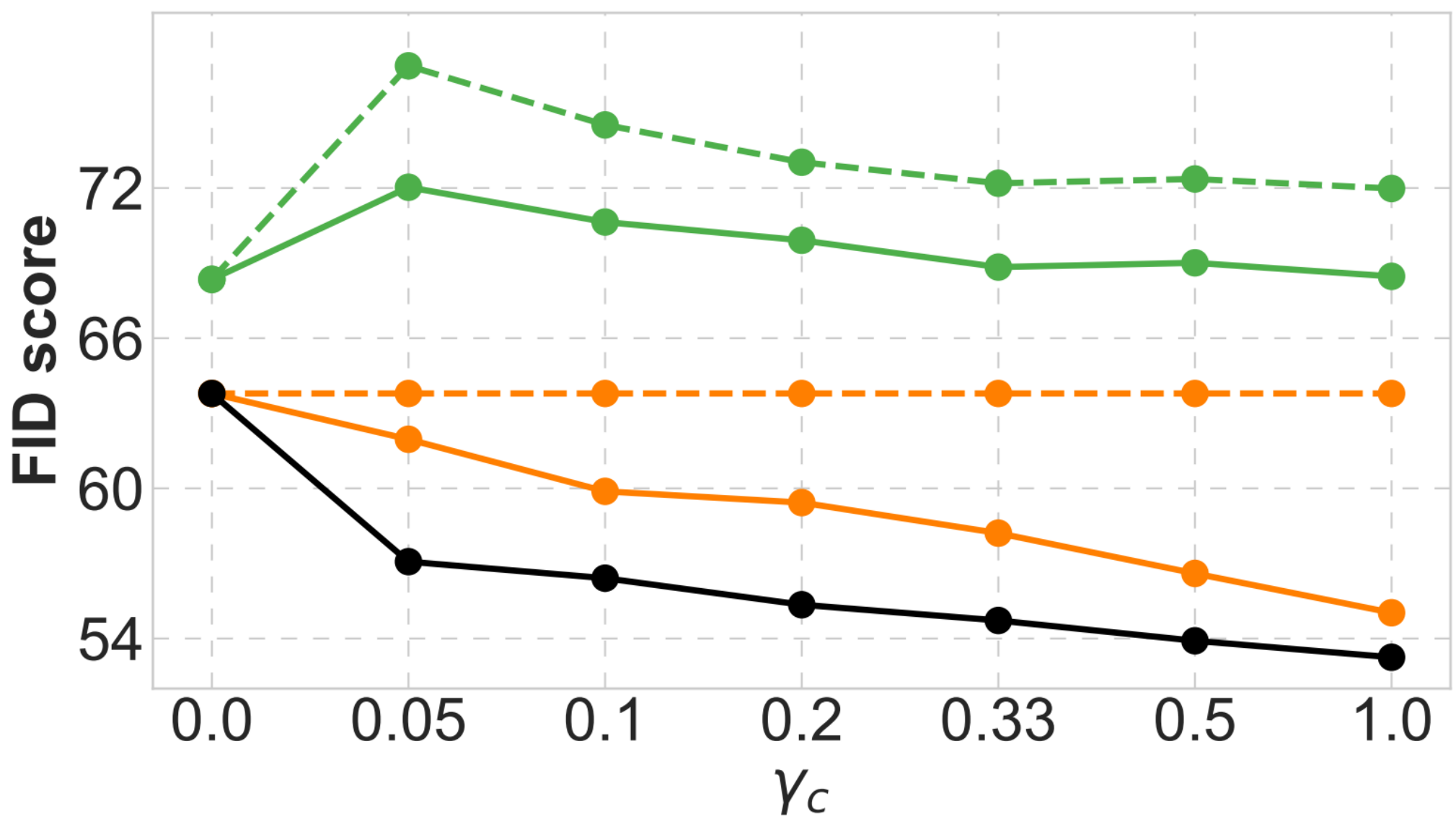}
        \vspace{-6mm}
        \caption*{\scriptsize CIFAR-10}
    \end{minipage}
\vspace{-2mm}
\caption{FID score as a function of $\gamma_c$, the ratio between the collected contamination instances and target instances, under a fixed $\gamma_p=0.5$ .}
\vspace{-1mm}
\label{FID-collected-ratio}
\end{figure*}

\section{Related Work}

Generative adversarial networks are known for their strong capability to generate realistic-looking samples through adversarial training in an unsupervised way \cite{goodfellow2014generative}. 
Since the seminal work of GAN, many kinds of variants of GAN have been proposed to further improve its modeling ability and training stability by using new model architectures \cite{radford2015unsupervised,karras2018progressive,brock2018large,karras2019style}, different distance metrics or divergences \cite{arjovsky2017wasserstein,nowozin2016f,mao2017least} and novel training techniques \cite{miyato2018spectral,wu2021gradient}.
Although these models improve the generation performance or training stability of GANs significantly, they are all established on the assumption that all instances in the training datasets are desirable. When facing with contaminated datasets, they do not have the ability to counter the influences of contamination instances. Two recent works that are relevant to our PuriGAN are the negative data augmentation (NDA) \cite{sinha2020negative} and Rumi-GAN \cite{asokan2020teaching}, which both explicitly teach a GAN what not to learn by leveraging an extra negative dataset that are composed of undesired instances. Although the two models also make use of an extra negative dataset during the training, their primary goal of using negative dataset is to make GANs avoid generating undesired instances due to the strong generalization abilities of GANs. However, they are still established on the assumption that the training dataset is clean and thus cannot deal with the scenarios with contaminated datasets, which, however, is the main focus of this paper.

Another line of works that are relevant to our paper is PU-learning \cite{kiryo2017positive,du2015convex,kato2018learning}, which aim to classify an unlabelled dataset by leveraging an extra dataset solely composed of one class of instances. Recently, some works have proposed to leverage the generation ability of GANs to perform this task. For examples, GenPU in \cite{hou2017generative} proposed to train an array of generators and discriminators to distinguish between the positive and negative instances in the unlabelled datasets. PAN \cite{hu2021predictive} proposed to train the PU-learning classifier under the GAN framework by viewing instances selected by the classifier as the generated instances. However, these works generally focus on how to obtain a better classifier/discriminator rather than how to generate high-quality desired instances.

\vspace{-0.5mm}
\section{Experiments}
\label{experiments}
\subsection{Experimental Setups}

\begin{figure*}[!tb]
    \centering
    \begin{minipage}{0.2\linewidth}
        \centering
        \includegraphics[width=1\linewidth]{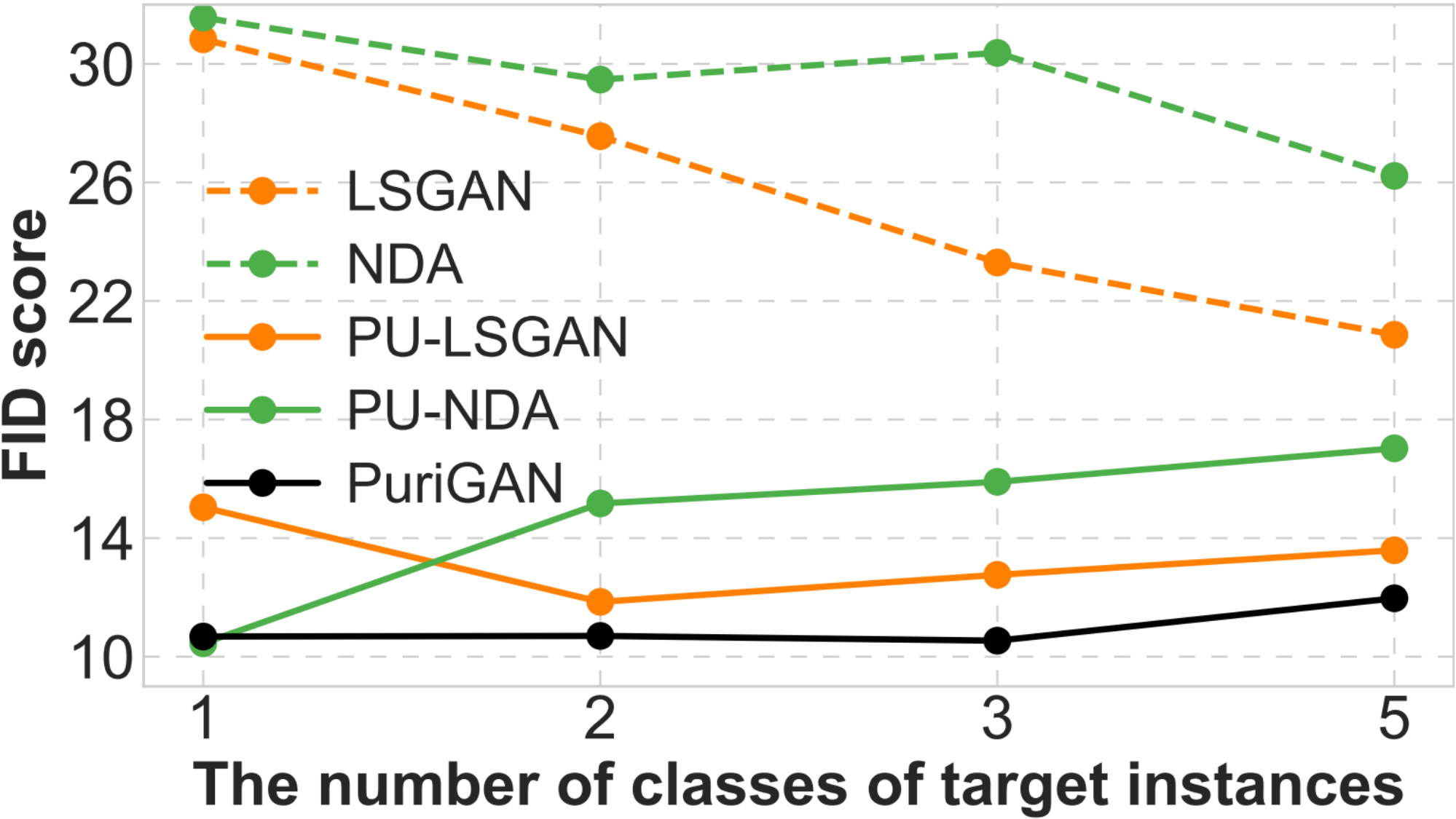}
        \vspace{-6mm}
        \caption*{\scriptsize MNIST}
    \end{minipage}
    \hspace{0.03\linewidth}
    \begin{minipage}{0.2\linewidth}
        \centering
        \includegraphics[width=1\linewidth]{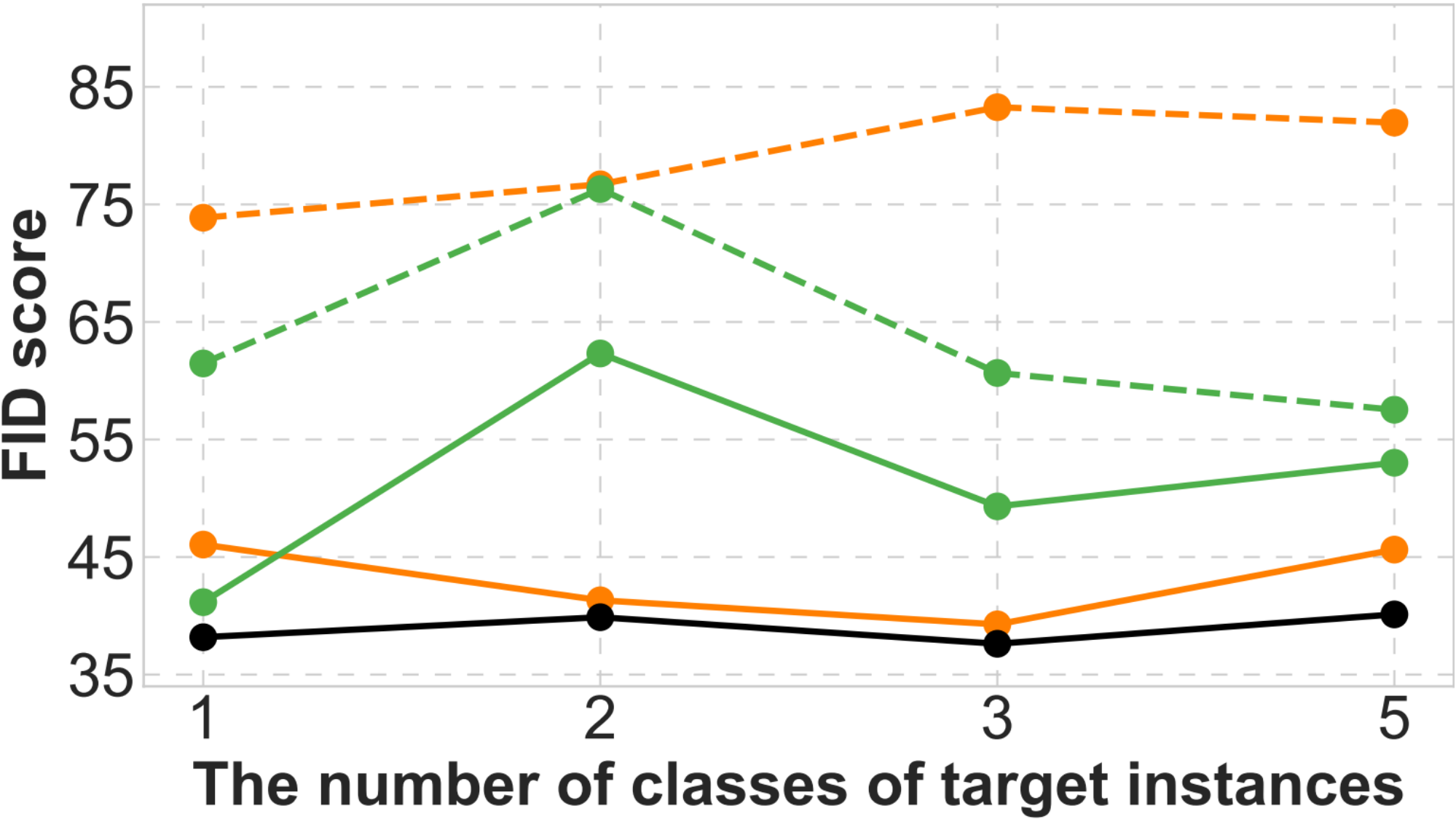}
        \vspace{-6mm}
        \caption*{\scriptsize F-MNIST}
    \end{minipage}
    \hspace{0.03\linewidth}
    \begin{minipage}{0.2\linewidth}
        \centering
        \includegraphics[width=1\linewidth]{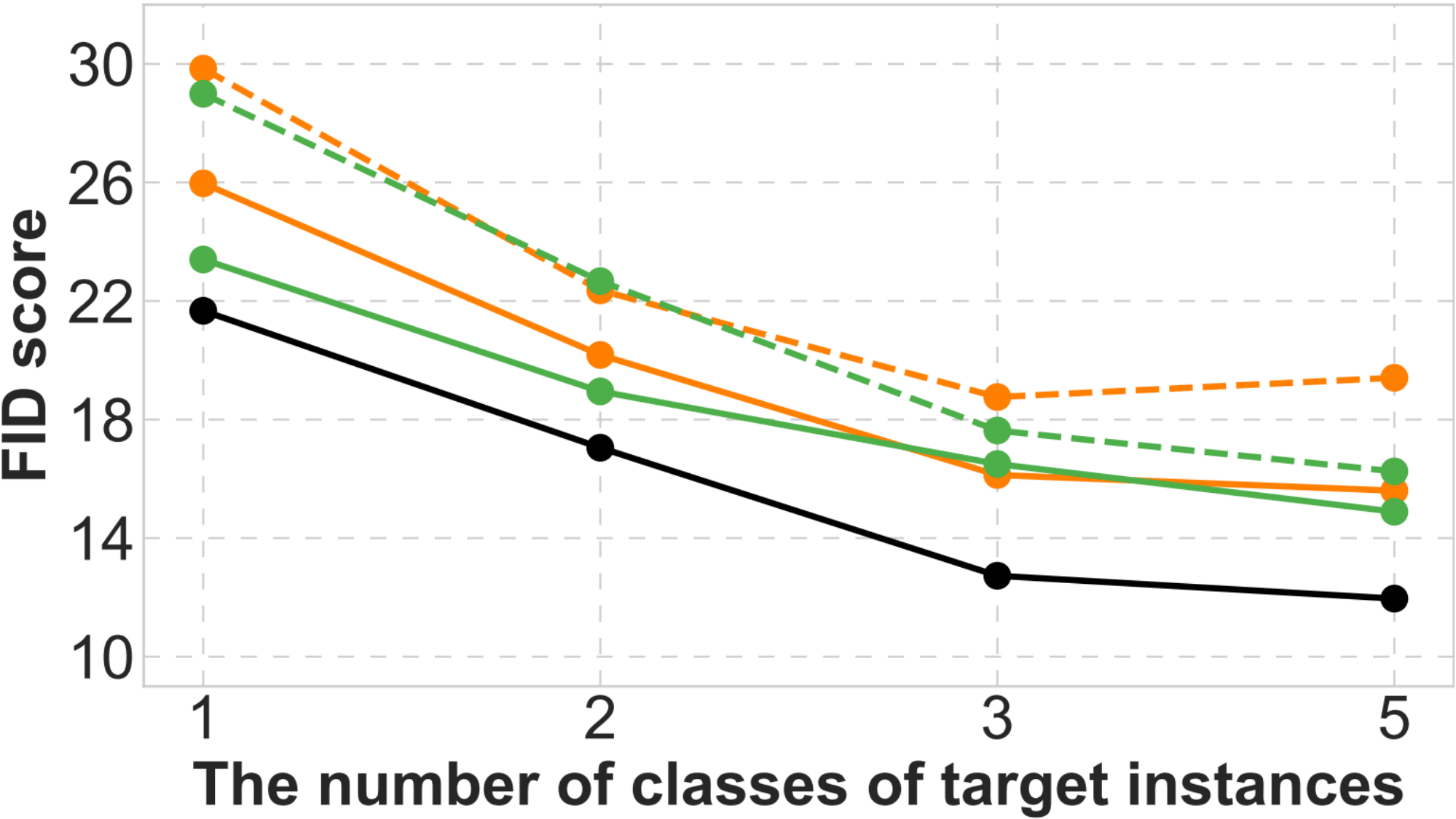}
        \vspace{-6mm}
        \caption*{\scriptsize SVHN}
    \end{minipage}
    \hspace{0.03\linewidth}
    \begin{minipage}{0.2\linewidth}
        \centering
        \includegraphics[width=1\linewidth]{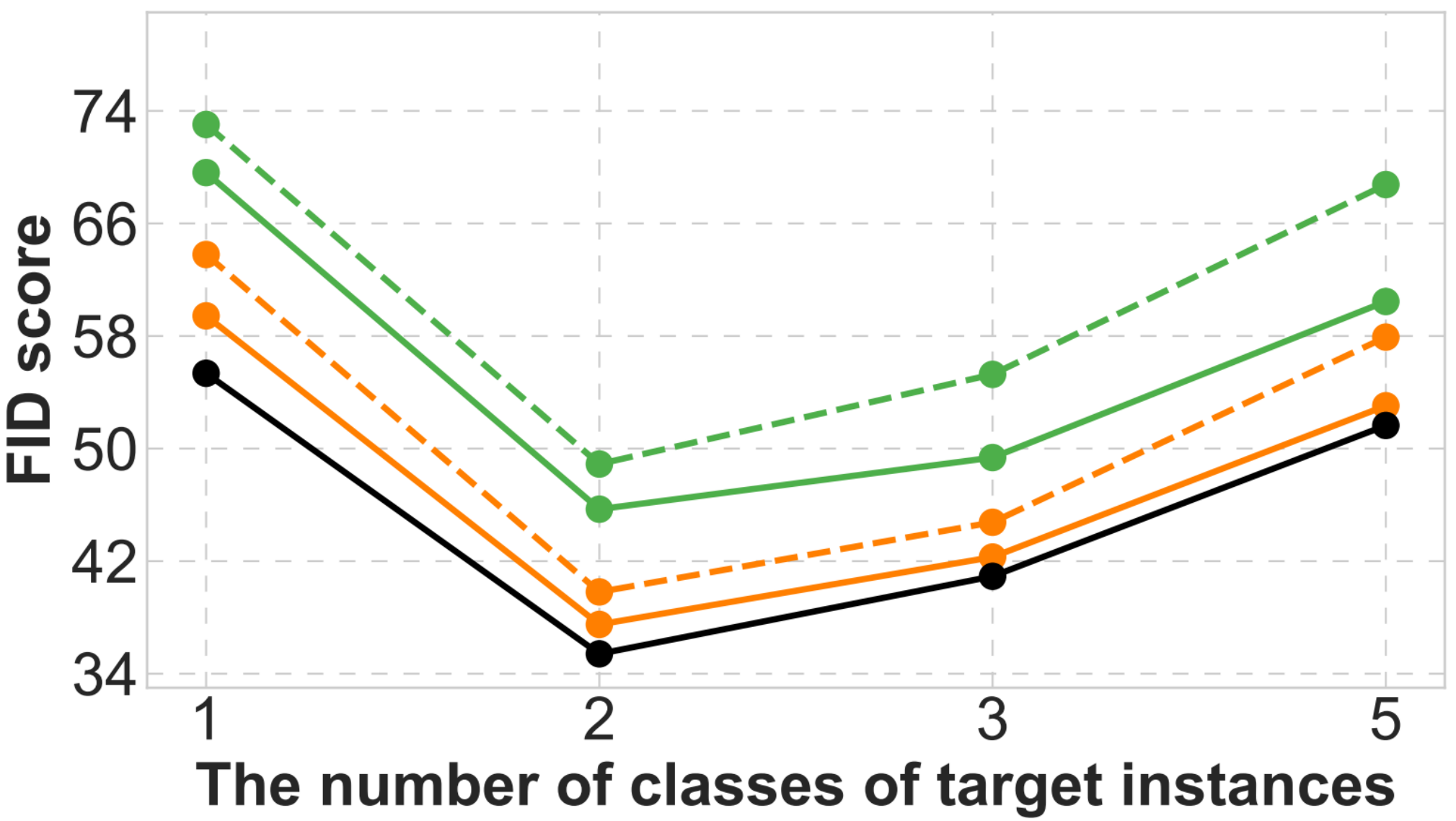}
        \vspace{-6mm}
        \caption*{\scriptsize CIFAR-10}
    \end{minipage}
\vspace{-2mm}
\caption{FID score as a function of the number of classes of target instances under the scenario of $\gamma_p=0.5$ and $\gamma_c=0.2$.}
\label{FID-types-ratio}
\vspace{-2mm}
\end{figure*}

\vspace{-0.5mm}
\paragraph{Evaluation}
To evaluate the generation performance of PuriGAN, for datasets MNIST, F-MNIST, SVHN and CIFAR-10, we randomly select one category and view its instances as the desired instances, while viewing a proportion of instances from another five categories randomly selected from the remaining nine as contamination instances. The two types of instances constitute the final contaminated datasets for training. For dataset CelebA, since it does not have a label, it is partitioned into two subsets according to its attribute value `bald', with images from each subset viewed as desired and contaminated instances, respectively. For each contaminated dataset, the number of desired instances is fixed, while the number of contamination instances is controlled by the contamination ratio $\gamma_p$, which is defined as the ratio between the number of contamination instances and total instances in the training dataset. On the other hand, the number of available contamination instances is controlled by parameter $\gamma_c$, which is defined as the ratio between the number of available contamination instances and total instances 
in the training dataset. The trained models are evaluated on the desired instances in testing dataset with the widely used criteria of Fréchet inception distance (FID) \citep{heusel2017gans}, which is computed by following the protocol in \cite{asokan2020teaching}. For each dataset, we repeat the random selections and training processes for ten times and the averaged results are reported as the final performance. 

\paragraph{Baselines}
We compare PuriGAN with the following baselines. 1) \emph{LSGAN} \cite{mao2017least}: it is developed for clean datasets and is not able to leverage the collected contamination instances; 2) \emph{NDA} \cite{sinha2020negative}: it is able to leverage the collected contamination instances to boost generation quality, but it is also developed to only work on clean datasets; 3) \emph{GenPU} \cite{hou2017generative}: it is a GAN-based PU-learning method that is partially able to work on contaminated datasets. 4) \emph{Rumi-LSGAN} \cite{asokan2020teaching}: It can also leverage the contamination instances but requires the dataset ${\mathcal{X}}$ to be clean. 5)  \emph{PU-LSGAN}: it is a two-stage training method by combining PU-learning and LSGAN, in which the PU-learning employs the recently proposed nnPU method \cite{kiryo2017positive}; 6) \emph{PU-NDA}: it is a combination of PU-learning and NDA;

\subsection{Performance on Image Generation}
\label{main_experiment}

Table \ref{Overall_fidscore} presents the FID scores of the proposed PuriGANs and comparable models on four datasets under the specific contamination ratio $\gamma_p=0.4$ and available contamination ratio $\gamma_c=0.2$. From the table, it can be observed that the proposed PuriGANs perform significantly better than LSGAN and NDA, which demonstrates that the proposed PuriGANs are effective to counter the influence of contamination instances in the training datasets comparing to traditional GANs. Moreover, comparing to the two-stage methods that employ PU-learning to remove contamination instances first, we can see that these two-stage methods still lag behind our proposed PuriGANs, especially on relatively complex datasets F-MNIST and CIFAR-10. This is because PU-learning tends to perform well on simple datasets, but unsatisfactorily on complex ones, resulting in relatively small improvements on complex datasets. 

\begin{figure*}[!tb]
    \centering
    \begin{minipage}{0.25\linewidth}
        \centering
        \includegraphics[width=1\linewidth]{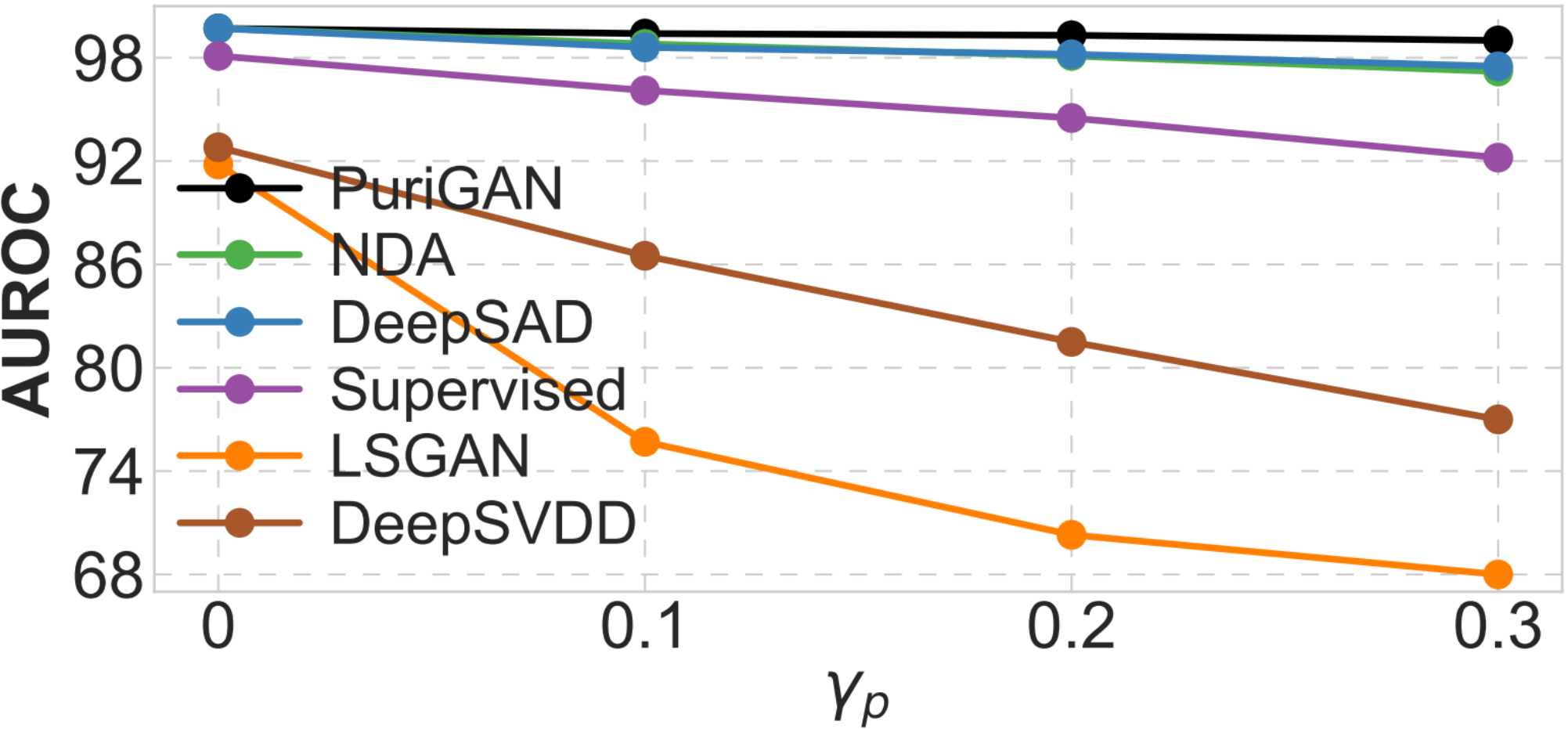}
        \vspace{-6mm}
        \caption*{\scriptsize MNIST}
    \end{minipage}
    \hspace{0.06\linewidth}
    \begin{minipage}{0.25\linewidth}
        \centering
        \includegraphics[width=1\linewidth]{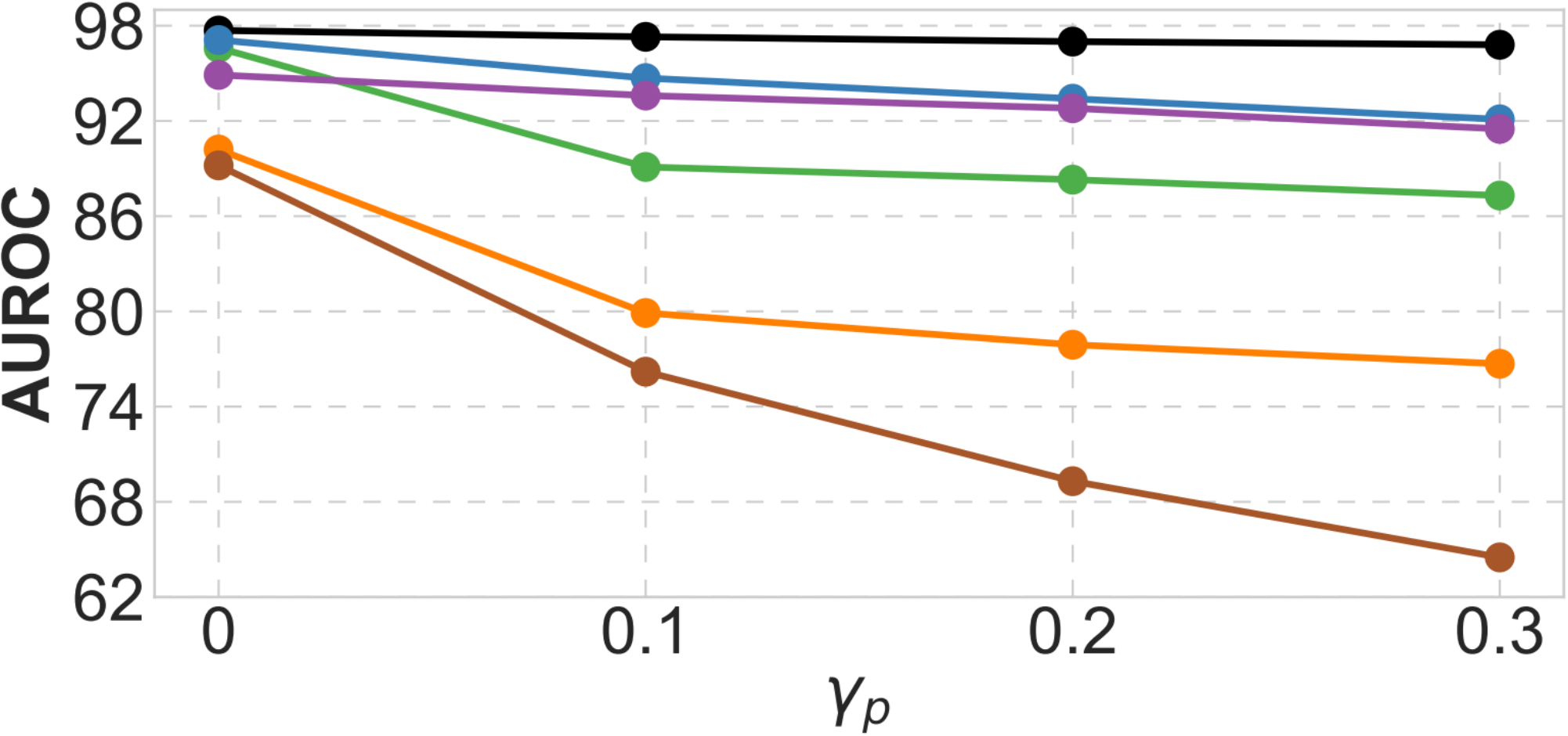}
        \vspace{-6mm}
        \caption*{\scriptsize F-MNIST}
    \end{minipage}
    \hspace{0.06\linewidth}
    \begin{minipage}{0.25\linewidth}
        \centering
        \includegraphics[width=1\linewidth]{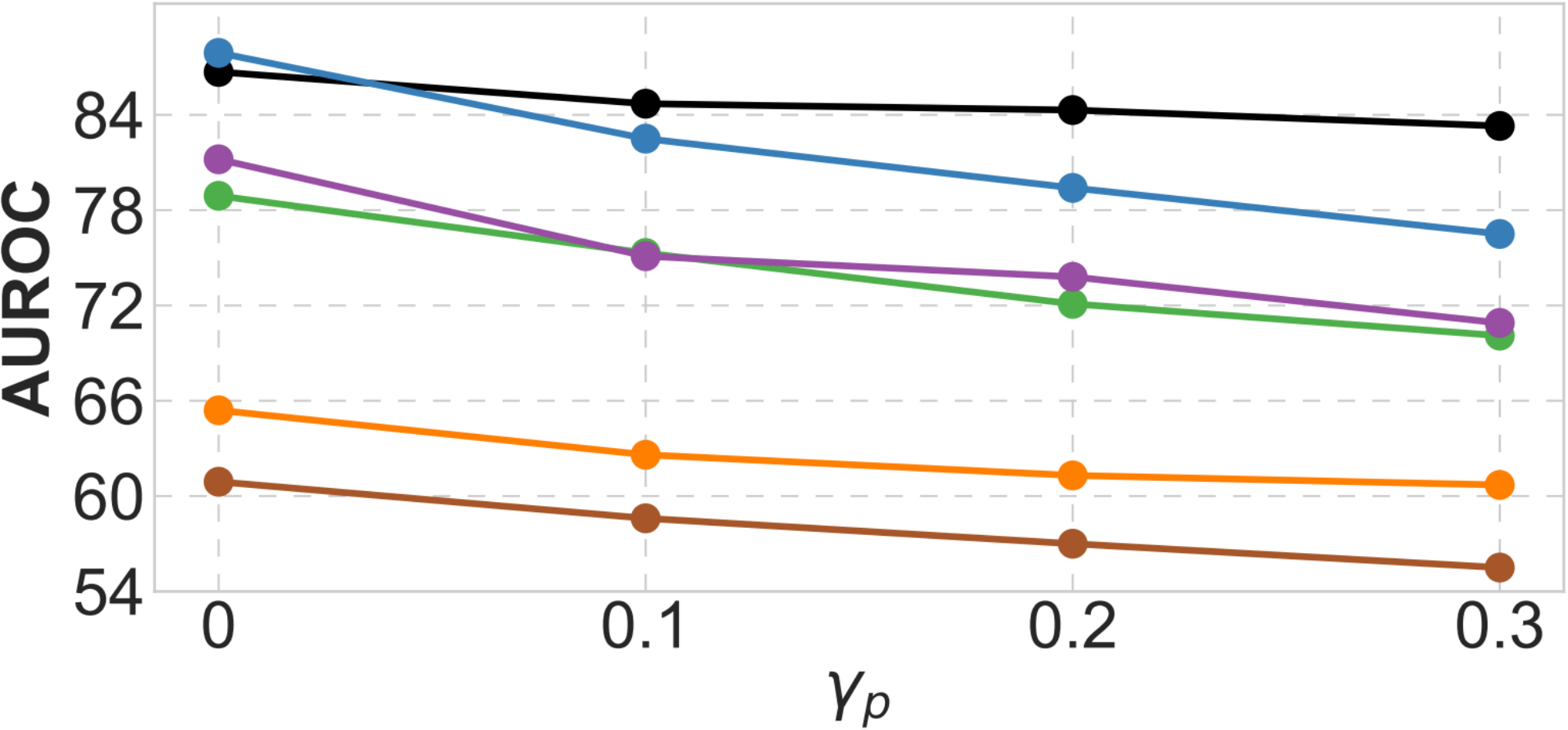}
        \vspace{-6mm}
        \caption*{\scriptsize CIFAR-10}
    \end{minipage}
\vspace{-2mm}
\caption{AUROC of semi-supervised anomaly detection under different contamination ratio of $\gamma_p$ and fixed $\gamma_c=0.05$.}
\label{tablep}
\end{figure*}

\paragraph{Impact of Contamination Ratio $\gamma_p$} 
Fig. \ref{FID-contamination-ratio} shows how FID scores vary as a function of the contamination ratio $\gamma_p$ when the available contamination ratio $\gamma_c$ is fixed to $0.2$. It can be seen that as $\gamma_p$ increases, the FID scores for LSGAN and NDA steadily increases on all four datasets, too, which implies the decrease of generated images' quality. This is easy to understand because the two GANs do not have any ability to counter the influences of contamination instances. By incorporating the PU-learning, we can see that the FID scores of both PU-LSGAN and PU-NDA increase much slowly, which indicates the effectiveness of the two-stage training methods in countering the influence of contamination. Comparing the two-stage methods to our PuriGANs, we can see that our models still yield better FID scores, which may be partially attributed to the joint consideration of purification and generation in our proposed PuriGAN. Lastly, we can see from Fig. \ref{FID-contamination-ratio} that PuriGAN with three-level discriminator overall performs better than PuriGAN with two-level discriminator. This is in consistent with our expectation because PuriGAN with three-level discriminator does not rely on the subtle disjoint support condition. In addition, we also see that the two PuriGANs have the same FID score at $\gamma_p=0.5$. That is because the updating rules of the two models become the same at this special case. 

\paragraph{Impact of Available Contamination Ratio $\gamma_c$}
\label{collected_anomalies}
Fig. \ref{FID-collected-ratio} shows how FID scores vary as a function of $\gamma_c$ when the contamination ratio $\gamma_p$ is fixed to $0.5$. Due to $\gamma_p=0.5$, the two PuriGANs become equivalent, thus we only illustrate the performance of one PuriGAN. From Fig. \ref{FID-collected-ratio}, it can be seen that the performance of PuriGAN can be steadily improved as more contamination instances are collected on all considered datasets, demonstrating the effectiveness of PuriGAN in leveraging the extra contamination instances to counter the influences of contamination. Although the two-stage methods PU-LSGAN and PU-NDA can also benefit from more available contamination instances, they are not as effective as the proposed PuriGAN on relatively complex datasets F-MNIST and CIFAR-10.  

\paragraph{Impact of the Number of Classes of Target Instances}
\label{normal_cls}
In the previous experiments, we only use instances from one class as the target instances. In this section, we investigate how the generation performance is affected when the target instances are composed of more classes of instances. Fig. \ref{FID-types-ratio} shows how the FID scores vary as the number of classes of target instances under the scenario of $\gamma_p=0.5$ and $\gamma_c=0.2$. It can be seen from Fig. \ref{FID-types-ratio} that the proposed PuriGAN still performs the best among all comparable baselines under the scenario with more classes of target instances.

\subsection{Performance on Downstream Tasks}

\paragraph{Anomaly Detection with Contaminated Datasets} For evaluation, we select instances from one category and a proportion of instances from the remaining categories to construct a contaminated dataset ${\mathcal{X}}$. Moreover, we also assume the availability of another dataset ${\mathcal{X}}^-$ that is only composed of instances from the remaining categories. The goal of this task is to detect the anomalies, which are those belonging to the remaining categories, by leveraging a model trained on ${\mathcal{X}}$ and ${\mathcal{X}}^-$. As discussed in Applications section, the output value of PuriGAN's discriminator can be used to detect anomalies. To better evaluate PuriGAN, we compare it with several representative unsupervised methods, \emph{Deep SVDD} \cite{ruff2018deep},  \emph{LSGAN} \cite{mao2017least}, and semi-supervised methods, the recently developed \emph{Deep SAD} \cite{DeepSAD} and \emph{NDA} \cite{sinha2020negative}, which can leverage the collected dataset ${\mathcal{X}}^-$ for detection.  In addition, we also train a binary classifier by treating instances from ${\mathcal{X}}$ and ${\mathcal{X}}^-$ as 1 and 0, respectively, and then use it to detect anomalies. The area under receiver operating characteristic curve (AUROC) is used as the evaluation criteria. For each setting, the experiments were run ten times, and their average is reported as the final performance. From Figure \ref{tablep}, it can be observed that the PuriGAN-based method has a very stable performance across different ratios of contamination in the training dataset. By contrast, the performances of all compared unsupervised and semi-supervised methods deteriorate steadily as $\gamma_p$ increases. This further corroborates the advantages of our proposed PuriGAN in countering the influences of contamination in the training datasets.

\textbf{\begin{table}[!t]
\centering
\setlength\tabcolsep{2.2pt} 
\renewcommand\arraystretch{0.95}
{
\begin{tabular}{l|cc|cc|cc|cc} 
\toprule
\multirow{2}{*}{Dataset} & \multicolumn{2}{c|}{20News} & \multicolumn{2}{c|}{IMDB} & \multicolumn{2}{c|}{MNIST} & \multicolumn{2}{c}{CIFAR-10}  \\ 
\cmidrule{2-9}
                         & F1    & Acc                & F1    & Acc              & F1    & Acc               & F1    & Acc                 \\ 
\midrule
a-GAN                   & 63.5 & 68.7              & 73.0 & 70.6            & 94.7 & 95.0            & 76.2  & 83.1  \\
UPU                     & 59.1 & 53.0             & 70.4 & 69.9            & 94.2 & 94.3             & 86.2 & 89.0        \\
nnPU                    & 78.5 & 78.1              & 76.2 & 74.6            & 95.4 & 95.4             & 86.1 & 88.4              \\
nnPUSB                 & 75.9 & 75.6              & 74.2 & 71.9            & 95.6 & 95.6             & 86.6 & 88.6   \\
PAN                 & 81.1 & 81.1              & 77.1 & 78.8            & 96.5 & 96.4             & 87.2 & 89.7                  \\
PuriGAN                 & \textbf{85.7} & \textbf{84.6}              & \textbf{79.7} & \textbf{78.9}            & \textbf{96.8}& \textbf{96.9} & \textbf{88.7} & \textbf{90.9}    \\
\bottomrule
\end{tabular}}
\caption{F1 score and accuracy of different PU-learning methods.}
\label{pu-results}
\end{table}
}

 
\paragraph{PU-learning}
Following the setups in the paper PAN \cite{hu2021predictive}, the proposed PuriGAN is evaluated on two text and two image datasets on this task. F1-score and accuracy are employed as the performance criteria. From Table \ref{pu-results}, it can be seen that PuriGAN outperforms all baselines on the considered datasets. This is probably because the generator in PuriGAN approximately plays a role of data augmentation by generating new training samples continuously, thereby leading to a more competitive performance comparing with traditional PU-learning methods that directly train a classifier.

\section{Conclusion}
In this paper, we studied the problem of how to train GANs to only generate target instances when a contaminated training dataset is presented. To this end, with the introduction of another extra dataset composed of only contamination instances, a purified generative adversarial network framework (PuriGAN) is proposed, which is achieved by augmenting the discriminator in traditional GANs to endow it with the ability to distinguish between the desired and undesired instances. We prove that the proposed PuriGANs are guaranteed to converge to the target distribution under some mild conditions. Extensive experiments are conducted to demonstrate the superior performance of the proposed PuriGANs in generating images only from desired categories. Moreover, we also apply it to the downstream tasks of semi-supervised anomaly detection and PU-learning, which shows that PuriGAN can deliver the best performance over comparable baselines on both tasks.

\section*{Acknowledgements}
This work is supported by the National Natural Science Foundation of China (No. 62276280, U1811264, 62276279), Key R\&D Program of Guangdong Province (No. 2018B010107005), Natural Science Foundation of Guangdong Province (No. 2021A1515012299), Science and Technology Program of Guangzhou (No. 202102021205), and CAAI-Huawei MindSpore Open Fund.

{\small
\bibliography{aaai23}
}

\clearpage
\appendix
\twocolumn[
\begin{@twocolumnfalse}
\section*{\centering{\Large Supplementary Materials of Leveraging Contaminated Datasets to Learn Clean-Data Distribution with Purified Generative Adversarial Networks}}
\vspace{15mm}
\end{@twocolumnfalse}
]

\theoremstyle{definition}
\setcounter{tocdepth}{4}
\setcounter{secnumdepth}{0}
\setcounter{theorem}{0} 

\section{Proof of Theorem 1}
\begin{theorem}
	\label{theorem_disjoint_appendix}
	When $D(\cdot)$ and $G(\cdot)$ are updated according to

\begin{align} 
	\min_D V(D) = & {\mathbb{E}}_{{\mathbf{x}}\sim p_{d}({\mathbf{x}})}\!\!\left[(D({\mathbf{x}}) \!-\! 1)^2\right]\nonumber\\
	&\!+\!  {\mathbb{E}}_{{\mathbf{x}} \sim p_g({\mathbf{x}})}\!\! \left[(D({\mathbf{x}}) \!-\! 0)^2\right] \nonumber \\
	&\!+ \lambda {\mathbb{E}}_{{\mathbf{x}}\sim p^-({\mathbf{x}})}\!\! \left[(D(\mathbf{x}) \!-\! 0)^2\right],
	\tag{\ref{Discrim_pol}}
\end{align}

\begin{align} 
	\min_G V(G) =& {\mathbb{E}}_{{\mathbf{x}}\sim p_{d}({\mathbf{x}})}\!\! \left[(D({\mathbf{x}}) \!-\! c)^2\right]\nonumber \\
	&\!+\! {\mathbb{E}}_{{\mathbf{z}} \sim p({\mathbf{z}})}\!\! \left[(D(G({\mathbf{z}})) \!-\! c)^2\right]\nonumber \\
	&\!+\!   {\mathbb{E}}_{{\mathbf{x}}\sim p^-({\mathbf{x}})}\!\! \left[(D(\mathbf{x}) \!-\! c)^2\right], \tag{\ref{Genera_pol}}
\end{align}

	the optimal discriminator is
	\begin{equation}
		D^*({\mathbf{x}}) = \frac{p_d(\mathbf{x})}{p_d({\mathbf{x}}) + p_g({\mathbf{x}}) + \lambda p^-({\mathbf{x}})};
		\tag{\ref{Opt_Discrim_Disjoint}}
	\end{equation}
Moreover, by supposing the support of target and contamination distributions are disjoint, {\it i.e.}, $Supp(p^+({\mathbf{x}})) \cap Supp(p^-({\mathbf{x}})) = \emptyset$, and letting $\lambda \to + \infty$, the generator distribution $p_g({\mathbf{x}})$ will converge to the target distribution $p^+({\mathbf{x}})$.
\end{theorem}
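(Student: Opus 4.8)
The plan is to prove the two assertions in turn, using the optimal-discriminator formula to rewrite the generator objective and then sandwiching that objective between a Jensen lower bound and its value at $p_g = p^+$. For the first assertion I would write \eqref{Discrim_pol} as a single integral, $V(D) = \int\big[(D(\mathbf{x})-1)^2 p_d(\mathbf{x}) + D(\mathbf{x})^2 p_g(\mathbf{x}) + \lambda D(\mathbf{x})^2 p^-(\mathbf{x})\big]\,d\mathbf{x}$, and minimise the integrand pointwise. For fixed $\mathbf{x}$ this is a quadratic in $y = D(\mathbf{x})$ with nonnegative leading coefficient $p_d(\mathbf{x}) + p_g(\mathbf{x}) + \lambda p^-(\mathbf{x})$, so its minimiser is $y = p_d(\mathbf{x})/\big(p_d(\mathbf{x}) + p_g(\mathbf{x}) + \lambda p^-(\mathbf{x})\big)$, which is \eqref{Opt_Discrim_Disjoint}; on the set where the denominator vanishes, which is $p_d$-, $p_g$- and $p^-$-null, the value of $D$ is immaterial.

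For the convergence claim, since $G(\mathbf{z})\sim p_g$, substituting $D^*$ into \eqref{Genera_pol} collapses it to $V(G) = \int (D^*(\mathbf{x})-c)^2\big(p_d(\mathbf{x})+p_g(\mathbf{x})+p^-(\mathbf{x})\big)\,d\mathbf{x}$. Using $p_d = \pi p^+ + (1-\pi)p^-$ together with the disjoint-support hypothesis, I split the space into $\mathcal{S}_1 = \{p^- = 0\}$ and $\mathcal{S}_2 = \{p^+ = 0\}$, on which $\int_{\mathcal{S}_1}p^+ = \int_{\mathcal{S}_2}p^- = 1$. On $\mathcal{S}_1$ one has $D^* = \pi p^+/(\pi p^+ + p_g)$ and $p_d + p_g + p^- = \pi p^+ + p_g$; on $\mathcal{S}_2$ one has $D^* = (1-\pi)p^-/\big((1-\pi)p^- + \lambda p^- + p_g\big)$ and $p_d + p_g + p^- = (2-\pi)p^- + p_g$. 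As $\lambda\to+\infty$, $D^*$ tends to $0$ at $p^-$-a.e.\ point of $\mathcal{S}_2$, so the $\mathcal{S}_2$ integrand converges pointwise a.e.\ to $c^2\big((2-\pi)p^- + p_g\big)$; because $(D^*-c)^2 \le 1$ uniformly in $\lambda$ while $p_d + p_g + p^-$ is integrable (total mass $3$), dominated convergence gives $\lim_\lambda V(G) = \int_{\mathcal{S}_1}\big(\frac{\pi p^+}{\pi p^+ + p_g} - c\big)^2(\pi p^+ + p_g)\,d\mathbf{x} + c^2\int_{\mathcal{S}_2}\big((2-\pi)p^- + p_g\big)\,d\mathbf{x}$.

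Next put $\varphi(t) = (t-c)^2$ and $\alpha = \int_{\mathcal{S}_1}p_g$, so $\int_{\mathcal{S}_2}p_g = 1-\alpha$. Treating $(\pi p^+ + p_g)/(\pi+\alpha)$ as a probability density on $\mathcal{S}_1$ and applying Jensen to $\varphi$ composed with $\mathbf{x}\mapsto \pi p^+/(\pi p^+ + p_g)$ bounds the first integral below by $(\pi+\alpha)\,\varphi\big(\pi/(\pi+\alpha)\big)$, while the second equals $c^2(3-\pi-\alpha)$. Expanding $\varphi$, the total lower bound $(\pi+\alpha)\varphi(\pi/(\pi+\alpha)) + c^2(3-\pi-\alpha)$ simplifies to $\pi^2/(\pi+\alpha) - 2\pi c + 3c^2$, which is strictly decreasing in $\alpha\in[0,1]$ and hence minimised at $\alpha = 1$, so $\lim_\lambda V(G) \ge (1+\pi)\varphi(\pi/(1+\pi)) + c^2(2-\pi)$. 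Substituting $p_g = p^+$, whose support lies in $\mathcal{S}_1$ so that $\alpha = 1$, into the rewritten objective returns exactly this value, so the bound is tight there. To upgrade ``the minimum is attained at $p_g=p^+$'' to genuine convergence, I would trace the equality cases: $\alpha = 1$ forces $p_g$ to vanish on $\mathcal{S}_2$, and equality in Jensen forces $\pi p^+/(\pi p^+ + p_g)$ to be constant, equal to $\pi/(1+\pi)$, wherever $\pi p^+ + p_g > 0$; this rearranges to $p_g = p^+$ there, and it also excludes $p_g$ from charging the part of $\mathcal{S}_1$ where $p^+ = 0$ (there $D^* = 0 \ne \pi/(1+\pi)$), so altogether $p_g = p^+$ a.e., and since the generator step minimises $V(G)$, $p_g\to p^+$.

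The step I expect to be the main obstacle is the rigorous treatment of the $\lambda\to+\infty$ limit — justifying that it may be moved under the integral sign, which I would do by dominated convergence using the $\lambda$-free dominating function $p_d + p_g + p^-$ and the uniform bound $(D^*-c)^2\le 1$ — together with the equality analysis of Jensen's inequality needed to turn a statement about the infimum being attained into the statement that $p_g = p^+$ is the essentially unique minimiser, in particular correctly handling the regions where one density vanishes and the ratio defining $D^*$ is nominally $0/0$.
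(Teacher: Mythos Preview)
Your proposal is correct and follows essentially the same route as the paper: derive $D^*$ by pointwise quadratic minimisation, substitute it into $V(G)$, split the domain along $\{p^-=0\}$ and $\{p^+=0\}$, take the $\lambda\to\infty$ limit, and then use Jensen's inequality with the convex function $\varphi(t)=(t-c)^2$ together with the parameter $\alpha=\int_{\mathcal{S}_1}p_g$ to show the lower bound is attained at $p_g=p^+$. Your treatment is in fact slightly more careful than the paper's, since you justify the $\lambda\to\infty$ passage via dominated convergence and you analyse the equality case of Jensen to obtain uniqueness of the minimiser, neither of which the paper spells out.
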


\begin{proof}
    For convenience, we use $p_d$, $p^+$, $p^-$, $p_g$ to replace the $p_d(\mathbf{x})$, $p^+(\mathbf{x})$, $p^-(\mathbf{x})$, $p_g(\mathbf{x})$. Converting expectations into integral form, \eqref{Discrim_pol} can be rewritten as:
\begin{align}
\begin{split}
    V(D)=&
    \int_{\mathcal{S}}\bigg \{\left(D({\mathbf{x}})-1\right)^{2} p_d(\mathbf{x})+\left(D({\mathbf{x}})-0\right)^{2} p_{g}(\mathbf{x})
     \nonumber \\
     & +\lambda\left(D({\mathbf{x}})-0\right)^{2} p^{-}(\mathbf{x}) \bigg \} \mathrm{d} \boldsymbol{x},
\end{split}
\end{align}
where $\mathcal{S}$ represents the entire integration space. To derive the optimal discriminator, we only need to find a discriminator $D(\mathbf{x})$ that can minimize the integrand in $V(D)$ at every value of $\mathbf{x}$. Thus, by setting the derivative of integrand in $V(D)$ w.r.t. $D(\mathbf{x})$ to $0$, that is, $2\!\left(D({\mathbf{x}})-1\right) p_d(\mathbf{x})+2\left(D({\mathbf{x}})-0\right) p_{g}(\mathbf{x}) +2\lambda\left(D({\mathbf{x}})-0\right) p^{-}(\mathbf{x})=0$, we obtain the optimal discriminator as

\begin{equation}
		D^*({\mathbf{x}}) = \frac{p_d(\mathbf{x})}{p_d({\mathbf{x}}) + p_g({\mathbf{x}}) + \lambda p^-({\mathbf{x}})}.
		\tag{\ref{Opt_Discrim_Disjoint}}
\end{equation}

Because $p_g(\mathbf{x})$ denotes the distribution of generated samples, that is, $G({\mathbf{z}}) \sim p_g({\mathbf{x}})$, \eqref{Genera_pol} can be equivalently written as:
\begin{align}
\begin{split}
    V(G)=&
    \int_{\mathcal{S}}\bigg \{ \left(D({\mathbf{x}})-c\right)^{2} p_{d}(\mathbf{x})
    +\left(D({\mathbf{x}})-c\right)^{2} p_{g}(\mathbf{x})
   \\
   & +\left(D({\mathbf{x}})-c\right)^{2} p^{-}(\mathbf{x})\bigg \} \mathrm{d} \boldsymbol{x}.
    \label{proof1_app:Gobjective1}
\end{split}
\end{align}
Substituting $p_d({\mathbf{x}}) = \pi p^+({\mathbf{x}}) + (1-\pi) p^-({\mathbf{x}})$ into $D^*({\mathbf{x}})$ gives

 \begin{equation}\label{app_Opt_Discrim_Disjoint_p-}
    D^*({\mathbf{x}}) = \frac{\pi p^+({\mathbf{x}}) + (1-\pi)p^-({\mathbf{x}})}{\pi p^+({\mathbf{x}}) + (1-\pi)p^-({\mathbf{x}}) +p_g({\mathbf{x}}) +\lambda p^-({\mathbf{x}})} .
\end{equation}
Then, by substituting $p_d({\mathbf{x}}) = \pi p^+({\mathbf{x}}) + (1-\pi) p^-({\mathbf{x}})$ and the optimal discriminator  \eqref{app_Opt_Discrim_Disjoint_p-} into  \eqref{proof1_app:Gobjective1}, the objective function of generator can be rewritten as:
\begin{align}
\begin{split}
    V(G)&=
    \int_{\mathcal{S}}\!\bigg \{ \pi\left(\frac{\pi p^+ + (1-\pi)p^-}{\pi p^+ + (1-\pi)p^-+\lambda p^- +p_g}-c\right)^{2}\!\!p^{+}
    \\&\quad+\left(1-\pi\right)\left(\frac{\pi p^+ + (1-\pi)p^-}{\pi p^+ + (1-\pi)p^-+\lambda p^- +p_g}-c\right)^{2}\!\! p^{-}
    \\&\quad +\left(\frac{\pi p^+ + (1-\pi)p^-}{\pi p^+ + (1-\pi)p^-+\lambda p^- +p_g}-c\right)^{2}\!\! p_{g} 
    \\&\quad +\left(\frac{\pi p^+ + (1-\pi)p^-}{\pi p^+ + (1-\pi)p^-+\lambda p^- +p_g}-c\right)^{2}\!\! p^{-}\bigg \} \mathrm{d} \boldsymbol{x}.
    \label{proof2_app:GEobjective1}
\end{split}
\end{align}

From the assumption that $p^+(\mathbf{x})$ and $p^-(\mathbf{x})$ are disjoint, we have $p^-(\mathbf{x})=0$ when $p^+(\mathbf{x})>0$, and $p^+(\mathbf{x})=0$ when $p^-(\mathbf{x})>0$. We thus can divide the whole space $\mathcal{S}$ into $\mathcal{S}_1$ and $\mathcal{S}_2$ two subspaces. In $\mathcal{S}_1$, $p^+(\mathbf{x})\ge0$ and $p^-(\mathbf{x})=0$, in $\mathcal{S}_2$, $p^-(\mathbf{x})\ge0$ and $p^+(\mathbf{x})=0$. Based on this assumption, \eqref{proof2_app:GEobjective1} can be further written as: 
\begin{align}
\begin{split}
    &V(G)=\\
    &\int_{\mathcal{S}_1}\bigg \{\!\pi\!\left(\frac{\pi p^+}{\pi p^+ +p_g}-c\right)^{2}\!\! p^{+} 
    +\left(\frac{\pi p^+}{\pi p^+ +p_g}-c\right)^{2}\!\!p_{g} \bigg \}\mathrm{d} \boldsymbol{x}
    \\&\quad+\int_{\mathcal{S}_2}\bigg \{ \left(1-\pi\right)\!\left(\frac{(1-\pi)p^-}{(1-\pi)p^-+\lambda p^- +p_g}-c\right)^{2}\!\!p^{-}
    \\&\quad +\left(\frac{(1-\pi)p^-}{(1-\pi)p^-+\lambda p^- +p_g}-c\right)^{2}\!\!p^{-}
    \\&\quad +\left(\frac{(1-\pi)p^-}{(1-\pi)p^-+\lambda p^- +p_g}-c\right)^{2} \!\!p_{g} \bigg \} \mathrm{d} \boldsymbol{x}.
    \label{proof1_app:Gobjective2}
\end{split}
\end{align}
When $\lambda$ is set to be a very lager number, we can see that $\frac{(1-\pi)p^-(\mathbf{x})}{(1-\pi)p^-(\mathbf{x})+\lambda p^-(\mathbf{x}) +p_g(\mathbf{x})}$ converges to 0. Based on this, \eqref{proof1_app:Gobjective2} can be written as:
\begin{align}
\begin{split}
    V(G)\!=\!&
    \int_{\mathcal{S}_1}\!\!\!\bigg \{ \pi\!\left(\frac{\pi p^+}{\pi p^+\!+\!p_g}\! - \!c\right)^{2}\!\!\! p^{+} 
     \!\!+\!\!\left(\frac{\pi p^+}{\pi p^+ \!+\!p_g}\!-\!c\right)^{2}\!\!\! p_{g} \bigg \}\mathrm{d} \boldsymbol{x}\\
   & \!+\!\int_{\mathcal{S}_2}\!\!\bigg \{ c^2(2-\pi) p^-\!+\!c^2p_g \bigg \} \mathrm{d} \boldsymbol{x}.
    \label{proof1_app:Gobjective3}
\end{split}
\end{align}
Let $\varphi(x)=(x-c)^2$. Because $\varphi(x)$ is a convex function, it has a property that $\varphi \!\left(\int_{-\infty}^{\infty} g(x)f(x) \mathrm{d} \boldsymbol{x}\right)\!\!\le\!\! \int_{-\infty}^{\infty} \varphi\left(g(x)\right)\!f(x)\ \mathrm{d} \boldsymbol{x} $, where $f(x)\ge 0$ and $\int_{-\infty}^{\infty} f(x)  \mathrm{d} \boldsymbol{x}\!=\!1$.
Without loss of generality, we denote  $\int_{\mathcal{S}_1}p_g \mathrm{d} \boldsymbol{x}$ as $\alpha$, then it can be easily seen that $\int_{\mathcal{S}_2}p_g \mathrm{d} \boldsymbol{x} = 1 - \alpha$. Now, from \eqref{proof1_app:Gobjective3}, we can derive the following relation
\begin{align}
\begin{split}
    &V(G)
    \\&=(\pi+\alpha)\!\!\int_{\mathcal{S}_1} \frac{1}{\pi+\alpha} \bigg \{\varphi\!\left(\frac{\pi p^+}{\pi p^+ +p_g}\right) \!\left(\pi p^{+}+ p_g\right) \bigg \}\mathrm{d} \boldsymbol{x}\\
   & \quad+\int_{\mathcal{S}_2}\!\! \bigg \{ c^2(2-\pi) p^-\!+\!c^2p_g  \bigg \} \mathrm{d} \boldsymbol{x}
    \\&\ge (\pi+\alpha) \ \varphi\left(\frac{1}{\pi+\alpha} \int_{\mathcal{S}_1} \pi p^+ \mathrm{d} \boldsymbol{x}\right)
   \\&\quad+\int_{\mathcal{S}_2}\!\! \bigg \{ c^2(2-\pi) p^-\!+\!c^2p_g  \bigg \} \mathrm{d} \boldsymbol{x}
    \\&=  (\pi+\alpha) \ \varphi\left(\frac{\pi}{\pi+\alpha}\right)+c^2(1-\alpha) + c^2 \left(2-\pi\right)
    \\&= c\left[(\pi+1)c-2\pi\right]+\frac{\pi^2}{\pi+\alpha}+c^2 \left(2-\pi\right). \label{proof1_app:Gobjective6}
\end{split}
\end{align}
Obviously, \eqref{proof1_app:Gobjective6} is monotonically decreasing function w.r.t. $\alpha \in [0, 1]$ and its minimum value can be obtained as $\alpha=1$, with the minimum value equal to

\begin{align}
    (1+\pi)\ \varphi\left(\frac{\pi}{1+\pi}\right) +c^2 \left(2-\pi\right) .
    \label{proof1_app:optimalg1}
\end{align}

On the other hand, if we substitute $p_g=p^+$ into \eqref{proof1_app:Gobjective3}, we obtain
\begin{align}
\begin{split}
    \widetilde V(G)
    &=(1+\pi)\! \int_{\mathcal{S}_1} \!\! \frac{1}{1+\pi} \bigg \{ \varphi\!\left(\frac{\pi p^+}{\pi p^+ \!+\!p_g}\right) \!\!\left(\pi p^{+}\!+\!p_g\right)\!\!\bigg \}\mathrm{d} \boldsymbol{x}\!
    \\
    &\quad+\!\int_{\mathcal{S}_2}\!\!\bigg\{c^2(2\!-\!\pi) p^-\!+\!c^2p_g\bigg \} \mathrm{d} \boldsymbol{x}
    \\&= (1+\pi)\!\int_{\mathcal{S}_1}\!\!\frac{1}{1+\pi} \bigg \{ \varphi\!\left(\frac{\pi p^+}{\pi p^+ \!+\!p_g}\right)\!\! \left(\pi p^{+}\!+\!p_g\right) \!\!\bigg \}\mathrm{d} \boldsymbol{x} \!
   \\& \quad+\!\int_{\mathcal{S}_2}\!\!\bigg\{c^2(2\!-\!\pi) p^-\!+\!c^2\!\cdot0\!\bigg \}  \mathrm{d} \boldsymbol{x}
    \\&= (1+\pi)\ \varphi\!\left(\frac{\pi}{1+\pi}\right) \!+\!c^2(2-\pi).
    \label{proof1_app:optimal_g2}
\end{split}
\end{align}
From \eqref{proof1_app:optimalg1}, it is known that $(1+\pi) \ \varphi\left(\frac{\pi}{1+\pi}\right) +c^2 \left(2-\pi\right)$ is the minimum value of $V(G)$. From \eqref{proof1_app:optimal_g2}, we can see that $V(G)$ can attain this minimum value by setting $p_g(\mathbf{x}) = p^+(\mathbf{x})$. Therefore, we can conclude that the minimum value of $V(G)$ can be obtained at $p_g(\mathbf{x})=p^+(\mathbf{x})$.
\end{proof}

\section{Proof of Theorem 2}
\begin{theorem}
	\label{theorem_contaminated_appendix}
	When $D(\cdot)$ and $G(\cdot)$ are updated according to
	\begin{align}
	\min_D V(D)  = & {\mathbb{E}}_{{\mathbf{x}}\sim p_d({\mathbf{x}})}\!\left[(D({\mathbf{x}}) - 1)^2\right] \nonumber \\
	& +  {\mathbb{E}}_{\mathbf{x} \sim p_g({\mathbf{x}})}\!\left[(D({\mathbf{x}})-0)^2\right]\nonumber \\
	& +  {\mathbb{E}}_{{\mathbf{x}} \sim p^-({\mathbf{x}})} \! \left[(D({\mathbf{x}})- d)^2\right] ,
    \tag{\ref{Discrim_Relax}}
\end{align}

and 
\begin{align}
	\min_G V(G) =& {\mathbb{E}}_{{\mathbf{x}}\sim p_{d}({\mathbf{x}})}\!\! \left[(D({\mathbf{x}}) \!-\! c)^2\right]\nonumber  \!  \\
	& +\! {\mathbb{E}}_{{\mathbf{z}} \sim p({\mathbf{z}})}\!\! \left[(D(G({\mathbf{z}})) \!-\! c)^2\right]\nonumber \! \\
	& +\!   {\mathbb{E}}_{{\mathbf{x}}\sim p^-({\mathbf{x}})}\!\! \left[(D(\mathbf{x}) \!-\! c)^2\right],
	\tag{\ref{Generator_Relax}}
\end{align}
the optimal discriminator is
\begin{equation}
		D^*({\mathbf{x}}) = \frac{p_d({\mathbf{x}}) + d \cdot p^-({\mathbf{x}})}{p_d({\mathbf{x}}) + p_g({\mathbf{x}}) + p^-({\mathbf{x}})};
		\tag{\ref{optimal_three_level_discri}}
\end{equation}
 Moreover, if $d$ is set as 
 \begin{align}
      d=\frac{2\pi-1}{\pi+1},\nonumber
 \end{align}
the generator distribution $p_g({\mathbf{x}})$ will converge to the target distribution $p^+({\mathbf{x}})$.
\end{theorem}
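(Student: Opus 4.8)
The plan is to handle the two assertions separately, the first being routine and the second the real content. For the optimal discriminator, I would rewrite $V(D)$ as an integral $\int \{(D(\mathbf{x})-1)^2 p_d + D(\mathbf{x})^2 p_g + (D(\mathbf{x})-d)^2 p^-\}\,d\mathbf{x}$ and minimise the integrand pointwise in $D(\mathbf{x})$; differentiating and setting to zero gives $D(\mathbf{x})(p_d+p_g+p^-) = p_d + d\,p^-$, i.e. exactly $D^*(\mathbf{x}) = \frac{p_d(\mathbf{x})+d\,p^-(\mathbf{x})}{p_d(\mathbf{x})+p_g(\mathbf{x})+p^-(\mathbf{x})}$, valid off the null set where the denominator vanishes. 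This parallels the corresponding step in Theorem~\ref{theorem_disjoint_appendix}.

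For the convergence claim, the key first observation is that all three terms of $V(G)$ drive $D$ toward the \emph{same} target $c$, so substituting $D^*$ collapses the objective to $V(G) = \int (D^*(\mathbf{x})-c)^2\,(p_d+p_g+p^-)\,d\mathbf{x}$. Writing $N \triangleq p_d + d\,p^-$ and $Q \triangleq p_d + p_g + p^-$, so that $D^* = N/Q$, this is $V(G) = \int \frac{(N-cQ)^2}{Q}\,d\mathbf{x} = \int \frac{N^2}{Q}\,d\mathbf{x} - 2c\int N\,d\mathbf{x} + c^2\int Q\,d\mathbf{x}$. The crucial point is that $N$ is independent of $p_g$, and that $\int N\,d\mathbf{x} = 1+d$ and $\int Q\,d\mathbf{x} = 3$ are both constants (using $\int p^+ = \int p^- = \int p_g = 1$). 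Hence minimising $V(G)$ over admissible $p_g$ reduces to minimising $\int \frac{N^2}{Q}\,d\mathbf{x}$ subject to $p_g\ge 0$, $\int p_g = 1$.

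To finish, I would apply Cauchy--Schwarz with $f = N/\sqrt{Q}$ and $g = \sqrt{Q}$, obtaining $\bigl(\int N\,d\mathbf{x}\bigr)^2 \le \int \frac{N^2}{Q}\,d\mathbf{x}\cdot\int Q\,d\mathbf{x}$, i.e. $\int \frac{N^2}{Q}\,d\mathbf{x} \ge \frac{(1+d)^2}{3}$, with equality iff $N = \lambda Q$ a.e. for a constant $\lambda$. A small refinement is needed because $N$ vanishes outside $\mathrm{Supp}(p^+)\cup\mathrm{Supp}(p^-)$: restricting the inequality to that support and noting that any mass $p_g$ places outside only decreases the restricted $\int Q$ (raising the lower bound) shows such $p_g$ is strictly suboptimal, so the optimum is supported there and the equality condition applies. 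The equality case $Q = \kappa N$ together with $\int p_g = 1$ gives $p_g = (\kappa-1)p_d + (\kappa d - 1)p^-$ with $\kappa = 3/(1+d)$; substituting $p_d = \pi p^+ + (1-\pi)p^-$ and matching coefficients of $p^+$ and $p^-$ against $p^+$ shows this equals $p^+$ precisely when $d = \frac{2\pi-1}{\pi+1}$ (then $\kappa = \frac{\pi+1}{\pi}$). Conversely, I would verify directly that $p_g = p^+$ makes $D^* \equiv \frac{\pi}{\pi+1}$ constant and attains the bound, so it is the unique global minimiser. The main obstacle is pinning down $d$: one has to reverse-engineer it from the demand that the Cauchy--Schwarz equality density be exactly $p^+$ (equivalently, that $N$ be proportional to $p_d + p^- + p^+$), which is the only thing in the argument that is not mechanical bookkeeping.
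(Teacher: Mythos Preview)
Your proof is correct. The paper's argument is closely related but phrased via Jensen's inequality rather than Cauchy--Schwarz: after substituting $D^*$ it writes $V(G)=3\int \varphi(N/Q)\,\tfrac{Q}{3}\,d\mathbf{x}$ with $\varphi(t)=(t-c)^2$ and applies Jensen against the probability density $Q/3$ to get $V(G)\ge 3\varphi\!\bigl(\tfrac{1+d}{3}\bigr)=3\varphi\!\bigl(\tfrac{\pi}{\pi+1}\bigr)$, then checks directly that $p_g=p^+$ attains this bound. For the quadratic $\varphi$ this is the same inequality as your Cauchy--Schwarz step, so the two routes are analytically equivalent. Where your version differs is in the bookkeeping: by expanding $(N-cQ)^2/Q$ and isolating $\int N^2/Q$ as the only $p_g$-dependent term you make it transparent that the minimiser does not depend on $c$; and by running the equality condition $Q=\kappa N$ forward you \emph{derive} $d=\tfrac{2\pi-1}{\pi+1}$ and obtain uniqueness of the optimum, whereas the paper takes $d$ as given and only verifies that $p_g=p^+$ reaches the lower bound without arguing uniqueness. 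Your treatment of mass placed outside $\mathrm{Supp}(p^+)\cup\mathrm{Supp}(p^-)$ is likewise a refinement absent from the paper's proof.
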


\begin{proof}
    For convenience, we use $p_d$, $p^+$, $p^-$, $p_g$ to replace the $p_d(\mathbf{x})$, $p^+(\mathbf{x})$, $p^-(\mathbf{x})$, $p_g(\mathbf{x})$, converting expectations into integral form, \eqref{Discrim_Relax} and  \eqref{Generator_Relax} can be rewritten as:
    \begin{align}
    \begin{split}
    V(D) =& 
    \int_{\mathcal{S}}\bigg \{\left(D({\mathbf{x}})-1\right)^{2} p^{+}+\left(D({\mathbf{x}})-0\right)^{2} p_{g}
   \\& +\left(D({\mathbf{x}})-d\right)^{2} p^{-} \bigg \} \mathrm{d} \boldsymbol{x},
    \label{proof2:Dobjective}
\end{split}
\end{align}
where $\mathcal{S}$ represents the entire integration space. To derive the optimal discriminator, we only need to find a discriminator $D(x)$ that can minimize the integrand in $V(D)$ at every value of $x$. Thus, by setting the derivative of integrand in $V(D)$ w.r.t. $D(x)$ to $0$, that is, $2\!\left(D({\mathbf{x}})-1\right) p_d(\mathbf{x})+2\left(D({\mathbf{x}})-0\right) p_{g}(\mathbf{x}) +2\left(D({\mathbf{x}})-d\right) p^{-}(\mathbf{x})=0$, we obtain the optimal discriminator as
\begin{equation}
		D^*({\mathbf{x}}) = \frac{p_d({\mathbf{x}}) + d \cdot p^-({\mathbf{x}})}{p_d({\mathbf{x}}) + p_g({\mathbf{x}}) + p^-({\mathbf{x}})}.
		\tag{\ref{optimal_three_level_discri}}
\end{equation}

Because $p_g(\mathbf{x})$ denotes the distribution of generated samples, that is, $G({\mathbf{z}}) \sim p_g({\mathbf{x}})$, \eqref{Generator_Relax} can be equivalently written as:
\begin{align}
\begin{split}
    V(G)=&
    \int_{\mathcal{S}}\bigg \{  \left(D({\mathbf{x}})-c\right)^{2} p^{+}({\mathbf{x}})
    +\left( D({\mathbf{x}})-c\right)^{2} p_g({\mathbf{x}})
    \\&
    +\left(D({\mathbf{x}})-c\right)^{2} p^{-}({\mathbf{x}})\bigg \} \mathrm{d} \boldsymbol{x}.
    \label{proof2:Gobjective}
\end{split}
\end{align}
Substituting $p_d({\mathbf{x}}) = \pi p^+({\mathbf{x}}) + (1-\pi) p^-({\mathbf{x}})$ into \eqref{optimal_three_level_discri} gives
\begin{align}
    D^*({\mathbf{x}}) = \frac{\pi p^+({\mathbf{x}}) +\left(1-\pi\right)p^-({\mathbf{x}}) +d p^-({\mathbf{x}})}{\pi p^+({\mathbf{x}})+\left(1-\pi\right)p^-({\mathbf{x}})+p^-({\mathbf{x}})+p_g({\mathbf{x}})}.
    \label{proof2:Doptimal}
\end{align}
Then, by substituting $p_d({\mathbf{x}}) = \pi p^+({\mathbf{x}}) + (1-\pi) p^-({\mathbf{x}})$ and the optimal discriminator \eqref{proof2:Doptimal} into \eqref{proof2:Gobjective}, we can write the objective function of the generator as
\begin{align}
\begin{split}
    V(G) & =\int_{\mathcal{S}}\bigg \{\pi\left(\frac{\pi p^+ \!+\!(1\!-\!\pi)p^- \!+\!d p^-}{\pi p^+ +(1\!-\!\pi)p^- \!+\!p^- \!+\!p_g }\!-\!c\right)^{2}p^{+}\\
    &\!\! \quad +\!\left(1\!-\!\pi\right)\left(\frac{\pi p^+ \!+\!(1\!-\!\pi)p^- \!+\!d p^-}{\pi p^+ \!+\!(1\!-\!\pi)p^- \!+\!p^- \!+\!p_g }\!-\!c\right)^{2}p^{-}\\
    &\!\! \quad +\!\left(\frac{\pi p^+ \!+\!(1\!-\!\pi)p^- \!\!d p^-}{\pi p^+ \!+\!(1\!-\!\pi)p^- \!+\!p^- \!+\!p_g }\!-\!c\right)^{2}\! p^{-}\\
    &\!\!\quad +\!\left(\frac{\pi p^+ \!+\!(1\!-\!\pi)p^-\!+\!d p^-}{\pi p^+ \!+\!(1\!-\!\pi)p^- \!+\!p^- \!\!p_g }\!-\!c\right)^{2}\! p_{g} \bigg \} \mathrm{d} \boldsymbol{x}\\
    &\!\!=\int_{\mathcal{S}}\!\!\left(\frac{\pi p^+ \!+\!(1\!-\!\pi)p^- \!+\!d p^-}{\pi p^+ \!+\!(1\!-\!\pi)p^- \!+\!p^- \!+\!p_g }\!-\!c\right)^{2}  \\& \quad \cdot\left(\pi p^+\! +\!(1\!-\!\pi)p^- \!+\!p^- \!+\!p_g\right) \mathrm{d} \boldsymbol{x}.
    \label{proof3:Geq1}
\end{split}
\end{align}

Let $\!\varphi(x) \!=\! \left(x-c\right)^2$. Because $\varphi(x)$ is a convex function, it has a property that $\varphi \!\left(\int_{-\infty}^{\infty} g(x)f(x) \mathrm{d} \boldsymbol{x}\right)\!\!\le\!\! \int_{-\infty}^{\infty} \varphi\left(g(x)\right)\!f(x)\ \mathrm{d} \boldsymbol{x} $, where $\int_{-\infty}^{\infty} f(x)  \mathrm{d} \boldsymbol{x}\!=\!1$. Now, we can infer from \eqref{proof3:Geq1} the following relation
\begin{align}
\begin{split}
    V(G) &\!= \! 3\int_{\mathcal{S}}\! \frac{1}{3}\varphi\left(\!\frac{\pi p^+ \!+\!(1\!-\!\pi)p^- \!+\!d p^-}{\pi p^+ \!+\!(1\!-\!\pi)p^- \!+\!p^- \!+\!p_g }\right)\!\!
    \\& \quad \cdot \left(\pi p^+\! +\!(1\!-\!\pi)p^- \!+\!p^- \!+\!p_g\right)\mathrm{d} \boldsymbol{x}\\
    &\!\!\ge\!3\ \varphi \bigg \{\! \int_{\mathcal{S}}\! \frac{1}{3}\!\left(\!\frac{\pi p^+ \!+\!(1\!-\!\pi)p^- \!+\!d p^-}{\pi p^+ \!+\!(1\!-\!\pi)p^- \!+\!p^- \!+\!p_g }\right)\!\!
    \\& \quad \cdot \left(\pi p^+\! +\!(1\!-\!\pi)p^- \!+\!p^- \!+\!p_g\right)\! \mathrm{d} \boldsymbol{x} \bigg \}\\
    &\!\!= \!3\  \varphi\bigg \{\! \int_{\mathcal{S}}\! \frac{1}{3} 
    \left(\pi p^+ \!+\!(1\!-\!\pi)p^- \!+\! dp^-  \!\right)\!\mathrm{d} \boldsymbol{x} \bigg\}.
    \label{proof2:Geq3}
\end{split}
\end{align}
Because $p^{+},p^{-},p_{g}$ are all probability distribution, their integrations over ${\mathcal{S}}$ should be equal to 1. Thus,  \eqref{proof2:Geq3} can be further written as
\begin{align}
\begin{split}
    V(G) &\ge 3\  \varphi\bigg \{\! \int_{\mathcal{S}}\! \frac{1}{3} \left(\pi p^+ \!+\!(1\!-\!\pi)p^- \!+\! dp^-  \!\right)\!\mathrm{d} \boldsymbol{x} \bigg\}\\
    &= 3\ \varphi\left(\frac{\pi}{\pi+1}\right),
    \label{proof2:Geq4}
\end{split}
\end{align}
which indicates the minimum value of $V(G)$ is $3\ \varphi\left(\frac{\pi}{\pi+1}\right)$.

On the other hand, if we substitute $p_{g} = p^{+}$ into \eqref{proof3:Geq1}, we can obtain 
\begin{align}
\begin{split}
    \label{proof2:Geq5}
    \widetilde V(G) & = \int_{\mathcal{S}}\!\varphi\left(\!\frac{\pi p^+ \!+\!(1\!-\!\pi)p^- \!+\!d p^-}{\pi p^+ \!+\!(1\!-\!\pi)p^- \!+\!p^- \!+\!p^+ }\right)\!\!
    \\& \quad \cdot \left(\pi p^+\! +\!(1\!-\!\pi)p^- \!+\!p^- \!+\!p^+\right)\mathrm{d} \boldsymbol{x}.\\
\end{split}
\end{align}
Substituting $d=\frac{2\pi-1}{\pi+1}$ into \eqref{proof2:Geq5} gives
\begin{align}
\begin{split}
    \widetilde V(G) & = \int_{\mathcal{S}}\!\varphi\left(\!\frac{\pi p^+ \!+\!(1\!-\!\pi)p^- \!+\!d p^-}{\pi p^+ \!+\!(1\!-\!\pi)p^- \!+\!p^- \!+\!p^+ }\right)\!\!
    \\& \quad \cdot \left(\pi p^+\! +\!(1\!-\!\pi)p^- \!+\!p^- \!+\!p^+\right)\mathrm{d} \boldsymbol{x}\\
    & = \!\int_{\mathcal{S}}\!\!\bigg \{\!\varphi\!\left(\frac{\pi}{\pi+1}\right)\!\left(\left(1+\pi\right) p^+ +\left(2-\pi\right)p^-\right)\!\!\bigg \}\mathrm{d} \boldsymbol{x}\\
    &  =\! 3\ \varphi\left(\frac{\pi}{\pi+1}\right).
    \label{proof2:Geq6}
\end{split}
\end{align}
From \eqref{proof2:Geq4}, it is known that $3\ \varphi\left(\frac{\pi}{\pi+1}\right)$ is the minimum value of $V(G)$. From \eqref{proof2:Geq6}, we can see that $V(G)$ can attain this minimum value by setting $p_g(\mathbf{x}) = p^+(\mathbf{x})$. Therefore, we can conclude that the minimum value of $V(G)$ can be obtained at $p_g(\mathbf{x}) = p^+(\mathbf{x})$.
\end{proof}

\vspace{5mm}
\section{Additional Experimental Results and Training Details}
\paragraph{Dataset}
In the task of image generation, our model is evaluated on five image datasets.   i) \emph{MNIST}: consisting of 60000 training and 10000 testing handwritten digit images of 28$\times$28 from 10 classes; ii) \emph{F-MNIST}: consisting of 60000 training and 10000 testing fashion images of 28$\times$28 from 10 classes; iii) \emph{SVHN}: containing 73257 training and 26032 testing house-number images obtained from Google Street View; iv) \emph{CIFAR10}: A dataset consisting of 50000 training and 10000 testing images from ten classes; v) \emph{CelebA}: a large-scale face dataset consisting of 200K celebrity images, with each associated with 40 attribute annotations. In semi-supervised anomaly detection and PU-learning, our model is evaluated on commonly used datasets for the two tasks.
\paragraph{Sensitivity Analysis to Estimation Error of $\pi$}

\begin{figure*}[h]
\centering
    \subfigure[MNIST]{
	\begin{minipage}{0.31\linewidth}
		\centering
		\includegraphics[width=0.95\linewidth]{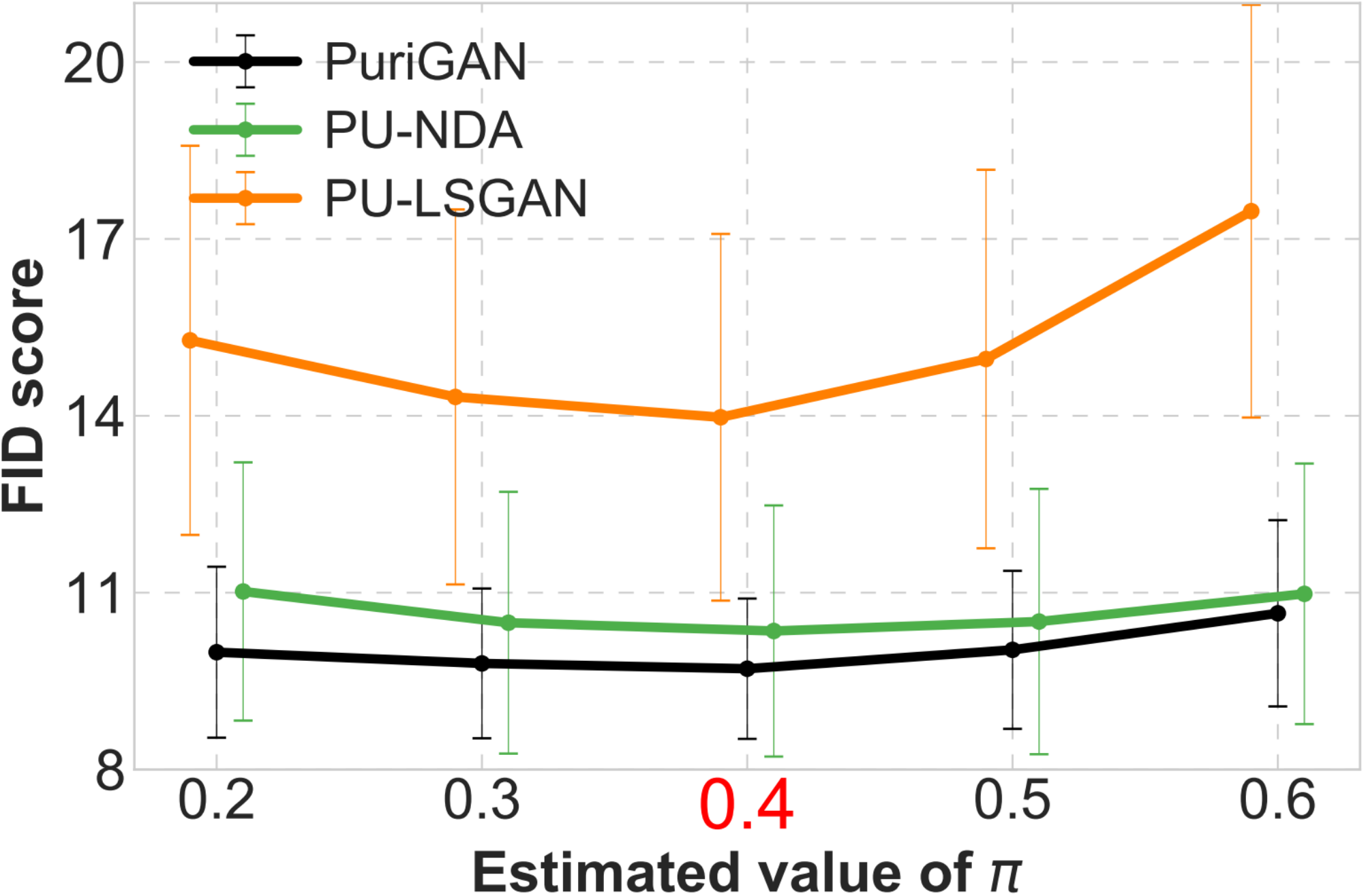}\\
	    \vspace{2mm}
		\includegraphics[width=0.95\linewidth]{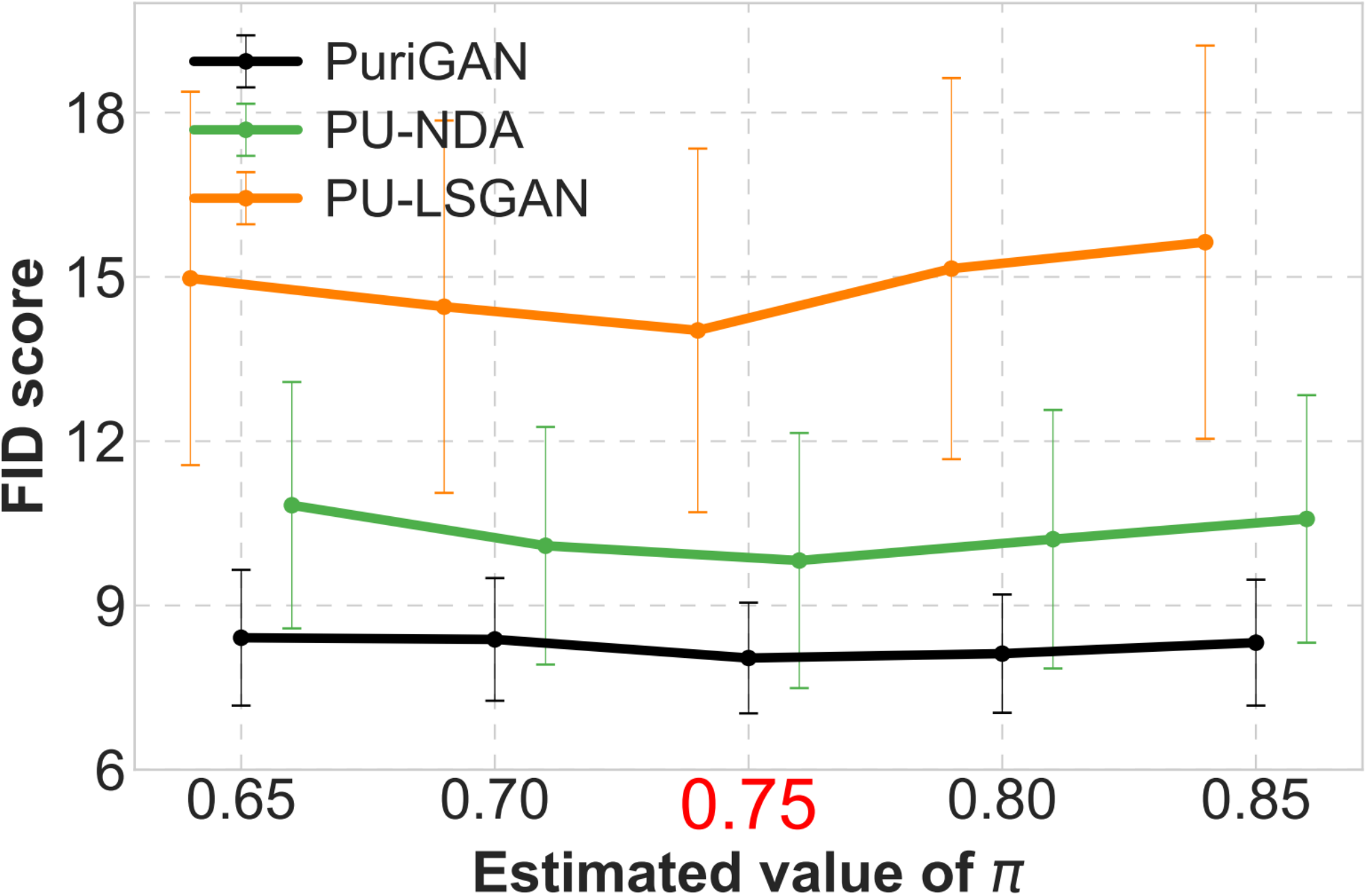}\\
	\vspace{2mm}
	\end{minipage}}
	\subfigure[FMNIST]{
	\begin{minipage}{0.31\linewidth}
		\centering
		\includegraphics[width=0.95\linewidth]{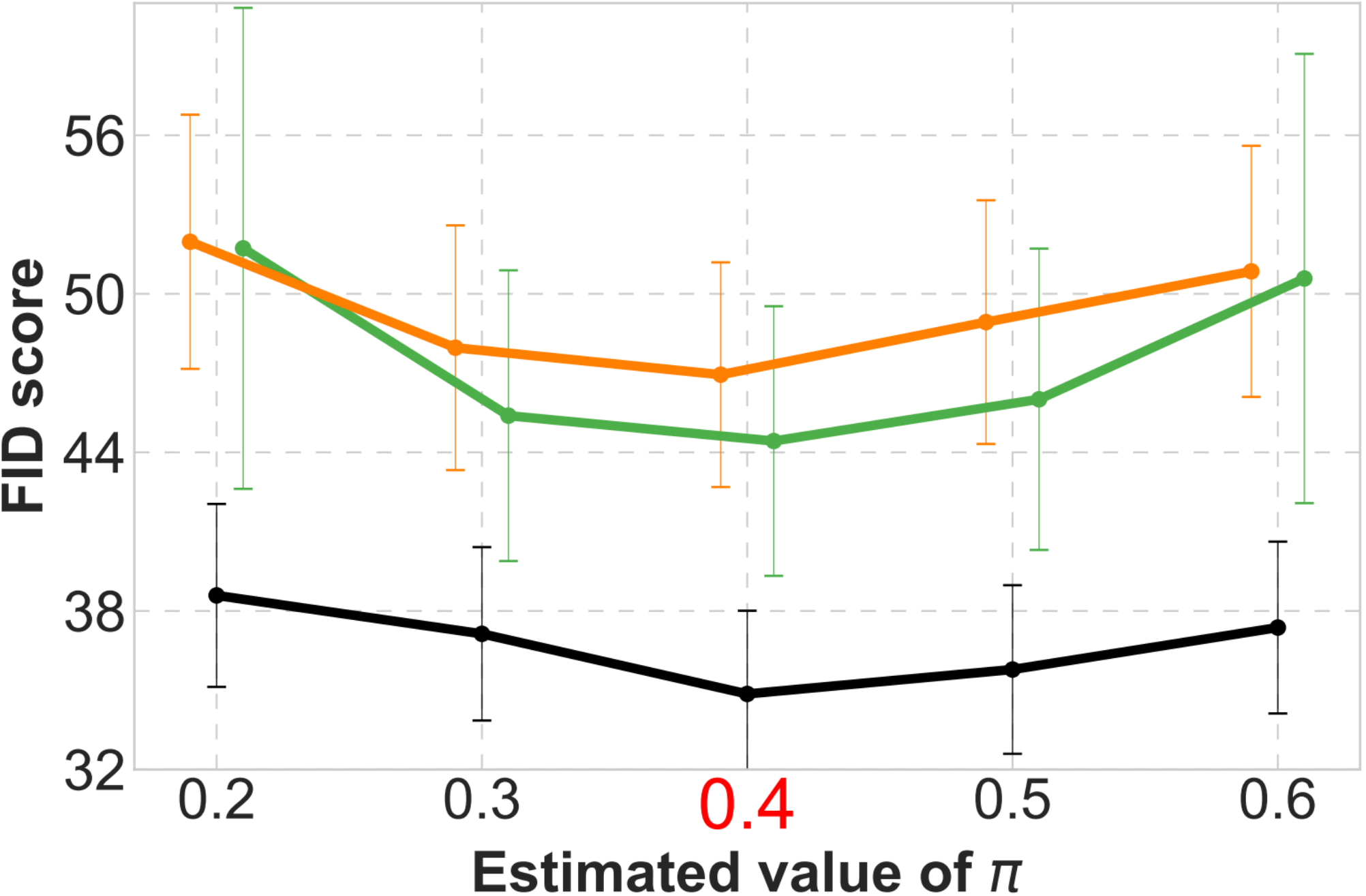}\\
		\vspace{2mm}
		\includegraphics[width=0.95\linewidth]{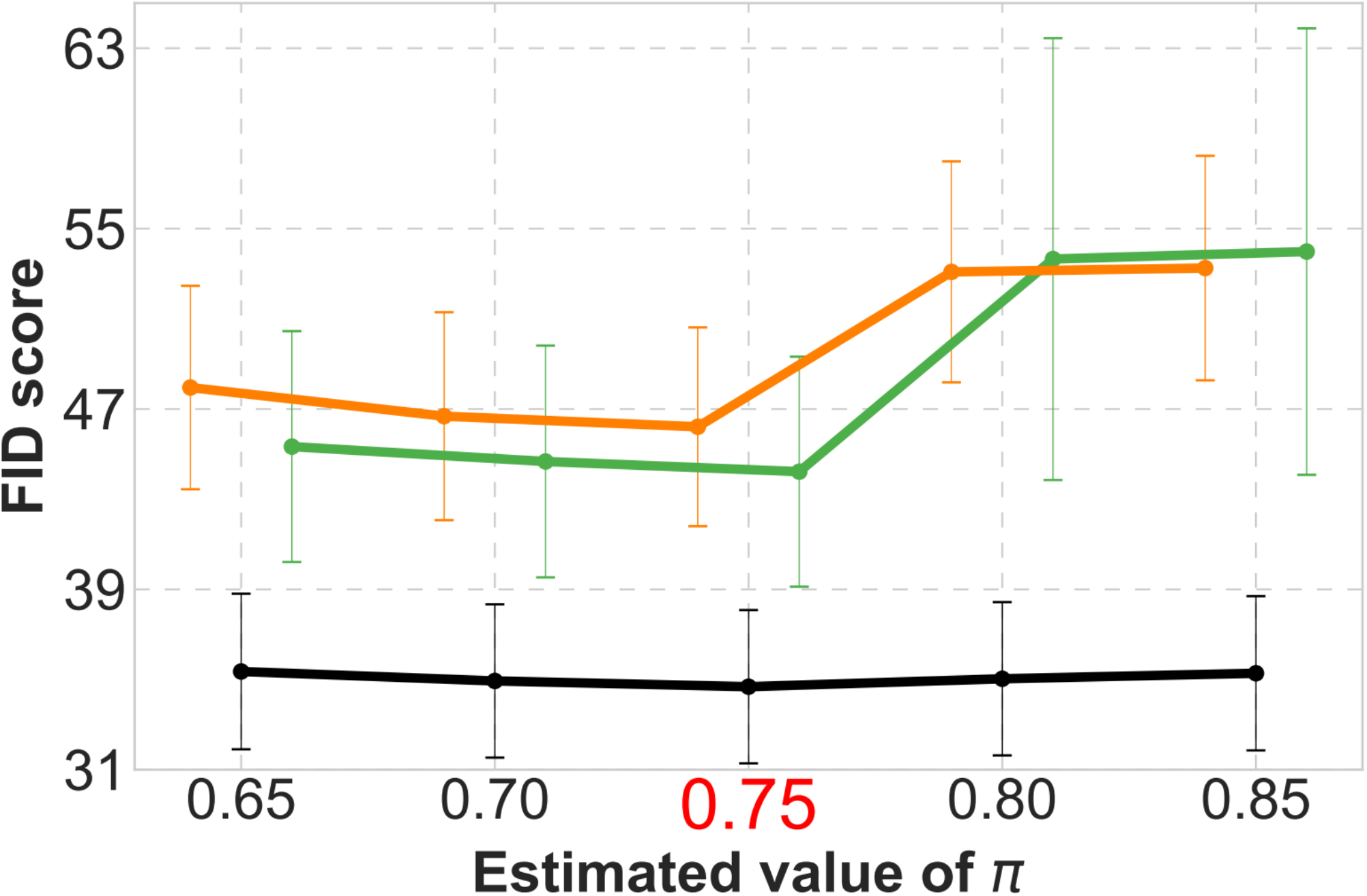}\\
	\vspace{2mm}
	\end{minipage}}
	\subfigure[SVHN]{
	\begin{minipage}{0.31\linewidth}
		\centering
		\includegraphics[width=0.95\linewidth]{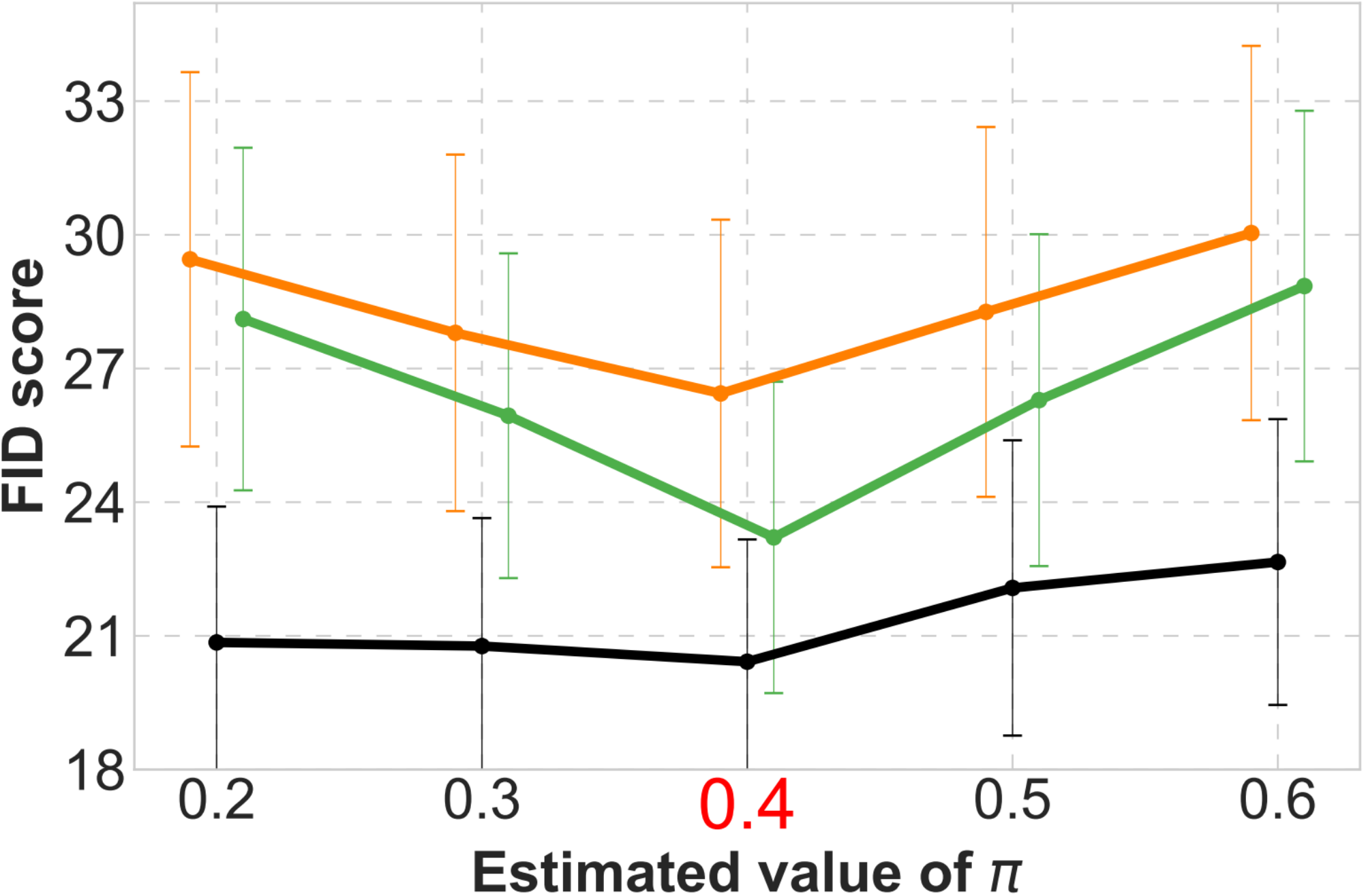}\\
		\vspace{2mm}
		\includegraphics[width=0.95\linewidth]{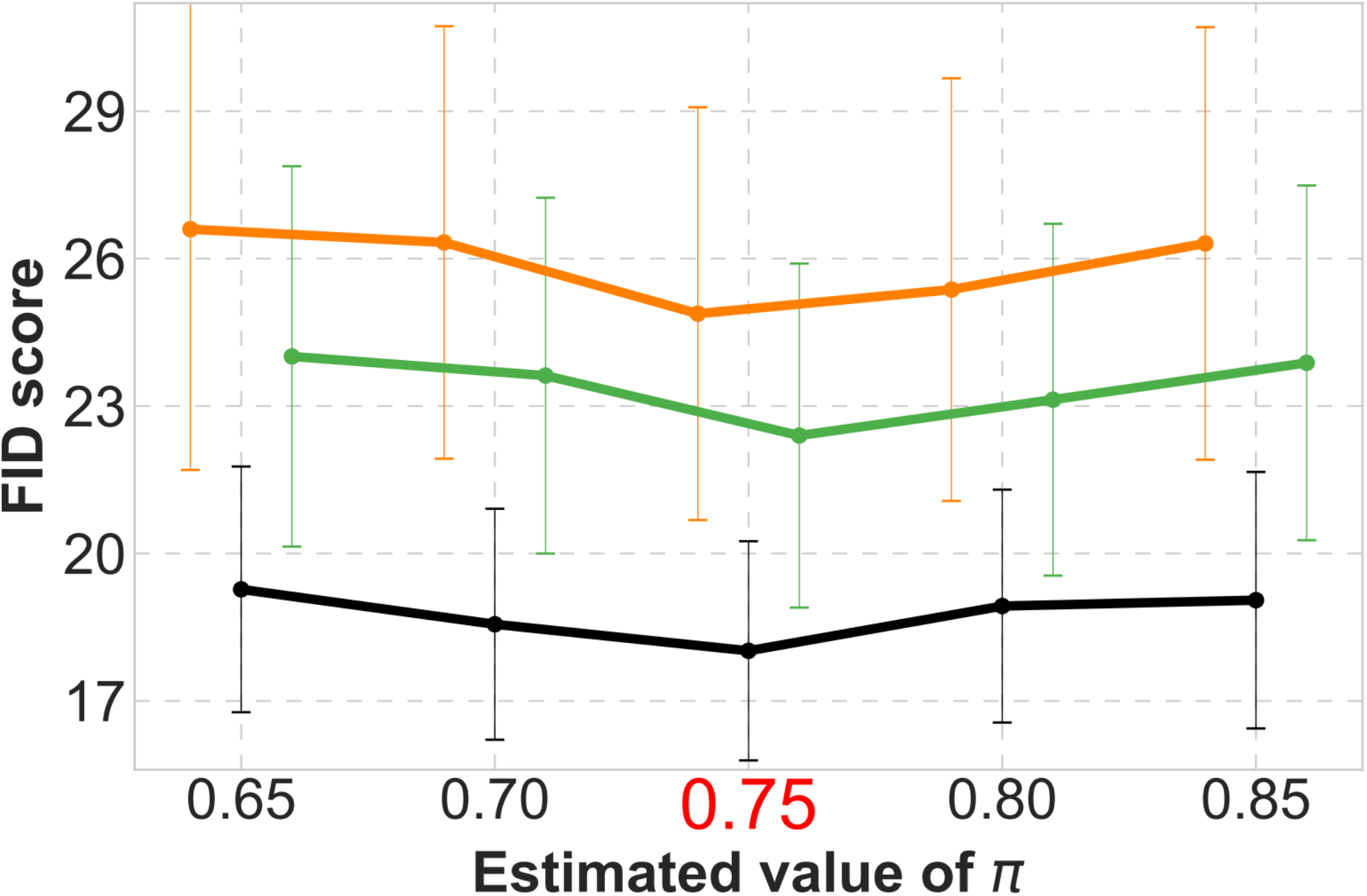}\\
	\vspace{2mm}
	\end{minipage}}
    \caption{Sensitivity analysis to the estimation error of $\pi$ under different scenarios. (The correct $\pi$ value is marked in red.)}
    \label{est_pi}
\end{figure*}

We conduct experiments to investigate the impact of the estimation error of $\pi$ under different scenarios. We compare our proposed PuriGAN with two-stage methods,  which also need to estimate the value of $\pi$ in the first PU-learning stage. The final results are shown in Figure \ref{est_pi}, in which the ground-truth value of $\pi$ is marked in red. From the figure, it can be observed that the the performance of our proposed model is not sensitive to the estimation error of $\pi$, with only a slight performance drop observed on inaccurately estimated $\pi$. In contrast, two-stage methods seem to depend more on the estimation accuracy of $\pi$, especially on complex datasets F-MNIST and SVHN. This is due to the wrong value of $\pi$ causes the PU-learning classifier to learn the wrong boundaries, which will lead to a poor discrimination between the contaminated and target instances. The bad performance of the PU-learning stage will be passed down to the second stage, causing the even worse performance of the whole model. Our end-to-end model simultaneously performs purification and generation, on the other hand, is more robust to the estimation error of $\pi$.

\begin{table}[t]
\centering
\small
\setlength\tabcolsep{4pt}
{\begin{tabular}{ccccccccc}
\toprule
\makecell[c]{Data} & \makecell[c]{$\gamma_p$}  & \makecell[c]{$\lambda=1$}  & \makecell[c]{$\lambda=3$} &  \makecell[c]{$\lambda=5$} \\
\cmidrule(lr){1-1} \cmidrule(lr){2-2} \cmidrule(lr){3-3} \cmidrule(lr){4-4} \cmidrule(lr){5-5} 
& 0.1 & \textbf{9.04 $\pm$ 1.5} & 9.16 $\pm$ 1.7 & 9.41 $\pm$ 1.6 \\
MNIST & 0.3 & 9.51 $\pm$ 1.5 & \textbf{9.38 $\pm$ 1.5} & 9.75 $\pm$ 1.5 \\
& 0.5 &  \textbf{10.68 $\pm$ 1.6} & 12.03 $\pm$ 1.8 & 12.89 $\pm$ 2.0 \\
\cmidrule(lr){1-1} \cmidrule(lr){2-2} \cmidrule(lr){3-3} \cmidrule(lr){4-4} \cmidrule(lr){5-5} 
& 0.1 & \textbf{36.24 $\pm$ 3.2} & 37.57 $\pm$ 3.7 & 37.76 $\pm$ 3.6 \\
F-MNIST & 0.3 & \textbf{37.17 $\pm$ 3.2} & 38.05 $\pm$ 3.6 & 38.22 $\pm$ 3.7 \\
& 0.5 &  \textbf{38.18 $\pm$ 3.1} & 39.60 $\pm$ 3.3 & 40.97 $\pm$ 3.6 \\
\cmidrule(lr){1-1} \cmidrule(lr){2-2} \cmidrule(lr){3-3} \cmidrule(lr){4-4} \cmidrule(lr){5-5} 
& 0.1 & 17.79 $\pm$ 2.1 & 17.41 $\pm$ 2.0 & \textbf{16.90 $\pm$ 1.9} \\
SVHN & 0.3 & 19.30 $\pm$ 2.5 & \textbf{18.08 $\pm$ 2.0} & 18.14 $\pm$ 2.1 \\
& 0.5 &  21.67 $\pm$ 2.9 & 20.42 $\pm$ 2.7 & \textbf{19.58 $\pm$ 2.6} \\

\bottomrule
\end{tabular}
\caption{Sensitivity analysis w.r.t $\lambda$ under different contamination ratio scenarios.}

\label{lambda_re}}
\vspace{5mm}
\end{table}

\begin{table}[t]
\centering
\small
\setlength\tabcolsep{4pt}
{\begin{tabular}{ccccccccc}
\toprule
\makecell[c]{Data} & \makecell[c]{$\gamma_p$}  & \makecell[c]{$c=0.5$}  & \makecell[c]{$c=0.6$} &  \makecell[c]{$c=0.7$} \\
\cmidrule(lr){1-1} \cmidrule(lr){2-2} \cmidrule(lr){3-3} \cmidrule(lr){4-4} \cmidrule(lr){5-5} 
& 0.1 & \textbf{9.04 $\pm$ 1.5} & 9.23 $\pm$ 1.5 & 9.34 $\pm$ 1.5 \\
MNIST & 0.3 & \textbf{9.51 $\pm$ 1.5} & 10.23 $\pm$ 1.7 & 9.69 $\pm$ 1.5 \\
& 0.5 &  \textbf{10.68 $\pm$ 1.6} & 12.25 $\pm$ 2.0 & 11.67 $\pm$ 1.8 \\
\cmidrule(lr){1-1} \cmidrule(lr){2-2} \cmidrule(lr){3-3} \cmidrule(lr){4-4} \cmidrule(lr){5-5} 
& 0.1 & \textbf{36.24 $\pm$ 3.2} & 36.45 $\pm$ 3.2 & 37.13 $\pm$ 3.5 \\
F-MNIST & 0.3 & 37.17 $\pm$ 3.2 & \textbf{36.85 $\pm$ 3.1} & 38.44 $\pm$ 3.6 \\
& 0.5 &  38.18 $\pm$ 3.1 &  \textbf{37.89 $\pm$ 3.0} & 38.79 $\pm$ 3.2 \\
\cmidrule(lr){1-1} \cmidrule(lr){2-2} \cmidrule(lr){3-3} \cmidrule(lr){4-4} \cmidrule(lr){5-5} 
& 0.1 & \textbf{17.79 $\pm$ 2.1} & 18.01 $\pm$ 2.2 & 17.90 $\pm$ 2.1 \\
SVHN & 0.3 & 19.30 $\pm$ 2.5 & 19.79 $\pm$ 2.5 & \textbf{18.83 $\pm$ 2.1} \\
& 0.5 & 21.67 $\pm$ 2.9 & 22.57 $\pm$ 3.1 & \textbf{20.98 $\pm$ 2.6} \\

\bottomrule
\end{tabular}
\caption{Sensitivity analysis w.r.t $c$ under different contamination ratio scenarios.}

\label{c_re}}
\vspace{5mm}
\end{table}

\begin{table*}[!t]
\centering
\small
\setlength\tabcolsep{2.2pt}
{\begin{tabular}{ccccccccc}
\toprule
\makecell[c]{Data} & \makecell[c]{$\gamma_p$}  & \makecell[c]{LSGAN}  & \makecell[c]{NDA}  &\makecell[c]{GenPU}  &\makecell[c]{PU-LSGAN} &  \makecell[c]{PU-NDA} & \makecell[c]{PuriGAN$_1$} & \makecell[c]{PuriGAN$_2$} \\
\cmidrule(lr){1-1} \cmidrule(lr){2-2} \cmidrule(lr){3-3} \cmidrule(lr){4-4} \cmidrule(lr){5-5} \cmidrule(lr){6-6} \cmidrule(lr){7-7} \cmidrule(lr){8-8} \cmidrule(lr){9-9}
MNIST& 0 & 13.98 $\pm$ 3.1  & 9.63 $\pm$ 2.4 &  9.31 $\pm$ 3.8 & 13.98 $\pm$ 3.1 & 9.63 $\pm$ 2.4 & 8.67 $\pm$ 1.8 & \textbf{7.77 $\pm$ 1.2} \\ 
& 0.1 & 17.90 $\pm$ 3.5 & 10.99 $\pm$ 2.1 & 11.42 $\pm$ 3.9 & 14.93 $\pm$ 3.1 & 10.01 $\pm$ 3.0 & 9.16 $\pm$ 1.7 & \textbf{8.13 $\pm$ 1.4} \\
& 0.2 &  21.42 $\pm$ 2.9 & 12.47 $\pm$ 1.9 & 13.67 $\pm$ 4.3 & 14.70 $\pm$ 3.0 & 9.90 $\pm$ 2.9 & 9.23 $\pm$ 1.7 &\textbf{8.57 $\pm$ 1.5} \\
& 0.3 & 23.05 $\pm$ 4.8 & 18.61 $\pm$ 7.4 & 17.17 $\pm$ 5.5 & 15.17 $\pm$ 3.2  & 10.21 $\pm$ 3.1 & 9.51 $\pm$ 1.5 & \textbf{8.33 $\pm$ 1.0}\\
& 0.4 & 26.93 $\pm$ 5.7 & 21.94 $\pm$ 8.9 &  21.28 $\pm$ 6.0 & 15.21 $\pm$ 3.2 & 10.35 $\pm$ 3.1 & 9.71 $\pm$ 1.9 &\textbf{9.51 $\pm$ 1.0} \\
& 0.5 & 30.83 $\pm$ 9.0 & 31.56 $\pm$ 11.6 & 23.10 $\pm$ 8.2 & 15.04 $\pm$ 3.1 & \textbf{10.45 $\pm$ 3.0} & 10.68 $\pm$ 1.6 &10.68 $\pm$ 1.6\\
\cmidrule(lr){1-1} \cmidrule(lr){2-2} \cmidrule(lr){3-3} \cmidrule(lr){4-4} \cmidrule(lr){5-5} \cmidrule(lr){6-6} \cmidrule(lr){7-7} \cmidrule(lr){8-8} \cmidrule(lr){9-9}
F-MNIST & 0 & 41.00 $\pm$ 4.9 & 39.43 $\pm$ 3.6 &  40.75 $\pm$ 3.4 & 41.00 $\pm$ 4.9 & 39.43 $\pm$ 3.6 & \textbf{35.98 $\pm$ 3.3} &36.12 $\pm$ 3.1  \\
& 0.1 & 46.38 $\pm$ 4.6 & 44.54 $\pm$ 5.1 &  39.41 $\pm$ 4.8 & 43.42 $\pm$ 5.3 & 42.25 $\pm$ 5.2 & 36.24 $\pm$ 3.2&\textbf{35.39 $\pm$ 3.4} \\
& 0.2 & 51.47 $\pm$ 4.4 & 51.72 $\pm$ 9.1 & 45.23 $\pm$ 7.7 & 44.83 $\pm$ 5.0 & 43.07 $\pm$ 5.3 & 36.80 $\pm$ 3.4 &\textbf{34.19 $\pm$ 3.3} \\
& 0.3 & 56.74 $\pm$ 4.9 &56.93 $\pm$ 11.0 & 51.12 $\pm$ 6.5 & 45.26 $\pm$ 4.9  & 42.86 $\pm$ 5.2 & 37.17 $\pm$ 3.2 & \textbf{34.97 $\pm$ 3.5}\\
& 0.4 & 62.44 $\pm$ 6.8 & 58.88 $\pm$ 14.3 &  58.73 $\pm$ 8.0 & 46.94 $\pm$ 5.1 &44.43 $\pm$ 5.4 & 37.61 $\pm$ 3.4 &\textbf{34.86 $\pm$ 3.2} \\
&0.5 & 73.87 $\pm$ 9.0 & 61.47 $\pm$ 11.6 & 63.04 $\pm$ 8.5 & 46.06 $\pm$ 4.8 & 43.55 $\pm$ 5.2 & \textbf{38.18 $\pm$ 3.1} &\textbf{38.18 $\pm$ 3.1}\\
\cmidrule(lr){1-1} \cmidrule(lr){2-2} \cmidrule(lr){3-3} \cmidrule(lr){4-4} \cmidrule(lr){5-5} \cmidrule(lr){6-6} \cmidrule(lr){7-7} \cmidrule(lr){8-8} \cmidrule(lr){9-9}
SVHN & 0 &  22.40 $\pm$ 2.6 & 20.29 $\pm$ 3.2 & 19.49 $\pm$ 3.6 & 22.40 $\pm$ 2.6  &20.29 $\pm$ 3.2 & 16.82 $\pm$ 2.0 & \textbf{16.27 $\pm$ 1.9 }\\
& 0.1 &  24.88 $\pm$ 3.9 & 22.83 $\pm$ 3.5 & 22.27 $\pm$ 3.5 & 25.16 $\pm$ 4.3  & 21.88 $\pm$ 3.3 &  17.70 $\pm$ 2.1 & \textbf{17.26 $\pm$ 2.1} \\
& 0.2 & 26.60 $\pm$ 4.2 & 23.67 $\pm$ 3.4 & 23.98 $\pm$ 5.0 & 25.79 $\pm$ 4.0 & 22.93 $\pm$ 3.5 &  18.61 $\pm$ 2.3 & \textbf{17.86 $\pm$ 2.1} \\
& 0.3 & 27.08 $\pm$ 4.9 & 24.67 $\pm$ 3.7 & 25.57 $\pm$ 5.5  & 27.00 $\pm$ 5.2  & 22.67 $\pm$ 3.4 & 19.39 $\pm$ 2.5 & \textbf{18.34 $\pm$ 2.3}\\
& 0.4 & 28.50 $\pm$ 4.9 & 27.58 $\pm$ 4.2 &  26.87 $\pm$ 5.6 & 26.44 $\pm$ 4.1 & 23.21 $\pm$ 3.5 & 20.42 $\pm$ 2.7 & \textbf{19.63 $\pm$ 2.5 }\\
& 0.5 & 30.09 $\pm$ 4.2 & 28.99 $\pm$ 4.3 &  28.43 $\pm$ 5.3  & 27.06 $\pm$ 4.8 & 23.40 $\pm$ 3.5 & \textbf{21.67 $\pm$ 2.9} &\textbf{21.67 $\pm$ 2.9}\\
\cmidrule(lr){1-1} \cmidrule(lr){2-2} \cmidrule(lr){3-3} \cmidrule(lr){4-4} \cmidrule(lr){5-5} \cmidrule(lr){6-6} \cmidrule(lr){7-7} \cmidrule(lr){8-8} \cmidrule(lr){9-9}
CELEBA & 0 &  36.41 $\pm$ 1.5 & 38.64 $\pm$ 1.8 &  34.82 $\pm$ 1.4 & 36.41 $\pm$ 1.5  &38.64 $\pm$ 1.8 & \textbf{31.45 $\pm$ 1.4} & 31.59 $\pm$ 1.4 \\ 
& 0.1 &  39.10 $\pm$ 1.9 & 41.56 $\pm$ 1.8 &  35.57 $\pm$ 1.8 & 40.20 $\pm$ 2.2  &39.58 $\pm$ 2.3  & 32.65 $\pm$ 1.3& \textbf{32.13 $\pm$ 1.5 }\\
& 0.2 &  42.87 $\pm$ 1.9 & 44.24 $\pm$ 2.0 &  37.38 $\pm$ 1.7 & 41.25 $\pm$ 2.3 & 42.66 $\pm$ 2.5  &  \textbf{33.85 $\pm$ 1.4}  & 34.16 $\pm$ 1.4 \\
& 0.3 & 45.11 $\pm$ 1.9 & 47.41 $\pm$ 2.0 &  39.97 $\pm$ 1.9  & 43.19 $\pm$ 2.4  & 45.91 $\pm$ 2.8 & \textbf{34.56 $\pm$ 1.4} & 35.40 $\pm$ 1.5 \\
& 0.4 & 49.29 $\pm$ 1.9 & 52.93 $\pm$ 2.3 &  45.81 $\pm$ 1.8 & 44.75 $\pm$ 2.3 & 46.34 $\pm$ 2.8 & 36.43 $\pm$ 1.5 & \textbf{35.67 $\pm$ 1.5} \\
& 0.5 &  55.25 $\pm$ 2.0  & 53.30 $\pm$ 2.5 & 48.06 $\pm$ 2.0  & 47.06 $\pm$ 2.6 & 47.03 $\pm$ 2.9 & \textbf{39.39 $\pm$ 1.5} &\textbf{ 39.39 $\pm$ 1.6}\\
\cmidrule(lr){1-1} \cmidrule(lr){2-2} \cmidrule(lr){3-3} \cmidrule(lr){4-4} \cmidrule(lr){5-5} \cmidrule(lr){6-6} \cmidrule(lr){7-7} \cmidrule(lr){8-8} \cmidrule(lr){9-9}
CIFAR-10 & 0 &  53.68 $\pm$ 9.5 & 67.77 $\pm$ 11.1 &  53.63 $\pm$ 9.4 & 53.68 $\pm$ 9.5  &67.77 $\pm$ 11.1 & 52.19 $\pm$ 8.8 & \textbf{51.75 $\pm$ 9.1} \\ 
& 0.1 &  57.85 $\pm$ 9.5 & 67.93 $\pm$ 10.8 &  54.47 $\pm$ 9.8 & 58.19 $\pm$ 11.4  & 69.01 $\pm$ 12.0  & 52.64 $\pm$ 8.9& \textbf{51.58 $\pm$ 10.4 }\\
& 0.2 &  59.74 $\pm$ 9.6 & 69.26 $\pm$ 11.0 &  57.68 $\pm$ 10.7 & 58.27 $\pm$ 11.2 & 66.92 $\pm$ 11.3  &  53.57 $\pm$ 9.4  & \textbf{52.83 $\pm$ 9.6} \\
& 0.3 & 60.50 $\pm$ 9.9 & 71.16 $\pm$ 11.4 & 59.84 $\pm$ 10.1  & 59.58 $\pm$ 11.3  & 68.81 $\pm$ 11.6 & 54.19 $\pm$ 9.8 & \textbf{52.53 $\pm$ 10.5} \\
& 0.4 & 61.08 $\pm$ 9.9 & 72.95 $\pm$ 10.7 &  62.59 $\pm$ 10.8 & 59.26 $\pm$ 10.7 & 70.12 $\pm$ 11.8 & 54.70 $\pm$ 10.4 & \textbf{52.70 $\pm$ 10.6} \\
& 0.5 &  63.79 $\pm$ 11.2  & 73.03 $\pm$ 11.0 & 65.06 $\pm$ 10.3  & 59.43 $\pm$ 11.3 & 69.61 $\pm$ 11.5 & \textbf{55.35 $\pm$ 10.5} &\textbf{ 55.35 $\pm$ 10.5}\\
\bottomrule
\end{tabular}
\caption{Complete FID scores$\downarrow$ on different datasets under various $\gamma_p$ scenarios, where PuriGAN$_1$ and PuriGAN$_2$ denote PuriGAN using two- and three-level discriminator, respectively.}

\label{Overall_fidscore_app}}
\end{table*}

\begin{table*}[t]
\centering
\small
\setlength\tabcolsep{2.2pt}
{\begin{tabular}{ccccccccc}
\toprule
\makecell[c]{Data} & \diagbox{model}{$\gamma_c$}& \makecell[c]{0} &\makecell[c]{0.05} & \makecell[c]{0.1} & \makecell[c]{0.2} & \makecell[c]{0.33} & \makecell[c]{0.5} & \makecell[c]{1.0} \\
\cmidrule(lr){1-1} \cmidrule(lr){2-2} \cmidrule(lr){3-3} \cmidrule(lr){4-4} \cmidrule(lr){5-5} \cmidrule(lr){6-6} \cmidrule(lr){7-7} \cmidrule(lr){8-8} \cmidrule(lr){9-9} 
MNIST & PuriGAN & 30.83 $\pm$ 9.0 &\textbf{14.90 $\pm$ 4.0} & \textbf{13.48 $\pm$ 3.0} & \textbf{10.68 $\pm$ 1.6} & \textbf{10.94  $\pm$ 1.3} & \textbf{10.42 $\pm$ 2.1} & \textbf{10.00 $\pm$ 1.9}\\
& PU-LSGAN & 30.83 $\pm$ 9.0 &16.03 $\pm$ 4.2 & 15.60 $\pm$ 4.2 & 15.04 $\pm$ 4.1 & 15.24 $\pm$ 4.2 & 14.53 $\pm$ 3.9 & 14.29 $\pm$ 3.9 \\
& PU-NDA & 29.63 $\pm$ 8.6 & 10.67 $\pm$ 2.4 & 10.55 $\pm$ 2.5 & 10.45 $\pm$ 2.4 & 10.15 $\pm$ 2.3 & 9.91 $\pm$ 2.4   & 10.07 $\pm$ 2.5 \\
& NDA & 29.63 $\pm$ 8.6 & 34.89 $\pm$ 12.6 & 32.46 $\pm$ 12.3 & 31.56 $\pm$ 11.6 & 29.52 $\pm$ 10.1 & 26.78 $\pm$ 7.4   & 23.73 $\pm$ 6.5 \\
& GenPU &29.63 $\pm$ 8.6 & 24.83 $\pm$ 11.1 & 24.19 $\pm$ 10.7 & 23.10 $\pm$ 10.2 & 22.72 $\pm$ 9.0 & 21.98 $\pm$ 8.1   &20.79 $\pm$ 8.0 \\
\cmidrule(lr){1-1} \cmidrule(lr){2-2} \cmidrule(lr){3-3} \cmidrule(lr){4-4} \cmidrule(lr){5-5} \cmidrule(lr){6-6} \cmidrule(lr){7-7} \cmidrule(lr){8-8} \cmidrule(lr){9-9} 
F-MNIST & PuriGAN &73.87 $\pm$ 9.0 & \textbf{44.26 $\pm$ 3.9} & \textbf{43.06 $\pm$ 3.7} & \textbf{38.18 $\pm$ 3.1} & \textbf{35.83 $\pm$ 2.9} & \textbf{34.61 $\pm$ 2.9} & \textbf{31.50 $\pm$ 2.5}\\
& PU-LSGAN & 73.87 $\pm$ 9.0 & 50.93 $\pm$ 5.2 & 50.10 $\pm$ 5.2 & 46.06 $\pm$ 4.8 & 43.61 $\pm$ 4.8 & 42.18 $\pm$ 5.1  & 41.32 $\pm$ 5.0 \\
& PU-NDA & 70.06 $\pm$ 8.8 & 44.95 $\pm$ 5.5 & 44.03 $\pm$ 5.3 & 43.55 $\pm$ 5.2 & 42.76 $\pm$ 4.7  & 40.63 $\pm$ 9.2 & 39.86 $\pm$ 3.6 \\
& NDA & 70.06 $\pm$ 8.8 & 69.40 $\pm$ 13.5 & 64.65 $\pm$ 12.3 & 61.47 $\pm$ 11.6 & 57.75 $\pm$ 10.2  & 53.82 $\pm$ 9.2 & 52.67 $\pm$ 9.4 \\
& GenPU & 70.06 $\pm$ 8.8 & 60.88 $\pm$ 8.7 & 62.31 $\pm$ 8.2 & 63.04 $\pm$ 8.5 & 58.93 $\pm$ 7.3 & 59.31 $\pm$ 8.2  & 61.29 $\pm$ 7.9\\
\cmidrule(lr){1-1} \cmidrule(lr){2-2} \cmidrule(lr){3-3} \cmidrule(lr){4-4} \cmidrule(lr){5-5} \cmidrule(lr){6-6} \cmidrule(lr){7-7} \cmidrule(lr){8-8} \cmidrule(lr){9-9} 
SVHN & PuriGAN &30.09 $\pm$ 4.2 & \textbf{23.98 $\pm$ 3.5} & \textbf{22.80 $\pm$ 3.1} & \textbf{21.67 $\pm$ 3.4} & \textbf{20.85 $\pm$ 3.0} & \textbf{20.51 $\pm$ 3.1} & \textbf{19.17 $\pm$ 3.0}\\
& PU-LSGAN & 30.09 $\pm$ 4.2 & 28.14 $\pm$ 4.9 & 28.26 $\pm$ 5.0 & 27.06 $\pm$ 4.8 & 27.32 $\pm$ 4.6 & 25.38 $\pm$ 4.3  & 23.61 $\pm$ 3.5 \\
& PU-NDA &30.31 $\pm$ 4.2& 24.01 $\pm$ 3.5 & 23.65 $\pm$ 3.6 & 23.40 $\pm$ 3.5 & 23.16 $\pm$ 3.4 & 22.49 $\pm$ 3.7  &21.62 $\pm$ 3.5\\
& NDA & 30.31 $\pm$ 4.2& 29.91 $\pm$ 4.4 & 29.48 $\pm$ 4.5 & 28.99 $\pm$ 4.3 & 29.31 $\pm$ 4.5 & 28.82 $\pm$ 4.2  &28.45 $\pm$ 4.2\\
& GenPU & 30.31 $\pm$ 4.2&  30.05 $\pm$ 5.7 & 29.10 $\pm$ 5.6 & 28.43 $\pm$ 5.3 & 28.75 $\pm$ 5.3 & 27.72 $\pm$ 5.2  &27.45 $\pm$ 3.4\\
\cmidrule(lr){1-1} \cmidrule(lr){2-2} \cmidrule(lr){3-3} \cmidrule(lr){4-4} \cmidrule(lr){5-5} \cmidrule(lr){6-6} \cmidrule(lr){7-7} \cmidrule(lr){8-8} \cmidrule(lr){9-9} 
CIFAR-10 & PuriGAN &63.79 $\pm$ 11.2 & \textbf{57.07 $\pm$ 10.5} & \textbf{56.41 $\pm$ 10.6} & \textbf{55.35 $\pm$ 10.5} & \textbf{54.72 $\pm$ 10.5} & \textbf{53.91 $\pm$ 10.5} & \textbf{53.25 $\pm$ 10.4}\\
& PU-LSGAN & 63.79 $\pm$ 11.2 & 61.96 $\pm$ 11.2 & 59.88 $\pm$ 11.0 & 59.43 $\pm$ 11.3 & 58.32 $\pm$ 10.6 & 56.6 $\pm$ 10.7  & 55.03 $\pm$ 10.6 \\
& PU-NDA &68.35 $\pm$ 11.0& 72.03 $\pm$ 11.8 & 70.64 $\pm$ 11.6 & 69.61 $\pm$ 11.5 & 68.84 $\pm$ 11.4 & 69.01 $\pm$ 11.6  &68.47 $\pm$ 11.5\\
& NDA & 68.35 $\pm$ 11.0& 76.88 $\pm$ 12.4 & 74.52 $\pm$ 11.7 & 73.03 $\pm$ 11.0 & 72.19 $\pm$ 11.2 & 72.82 $\pm$ 11.2  & 71.98 $\pm$ 11.2\\
& GenPU & 68.35 $\pm$ 11.0 &  66.9 $\pm$ 10.7 & 65.72 $\pm$ 10.4 & 65.06 $\pm$ 10.3 & 64.38 $\pm$ 10.4 & 63.65 $\pm$ 10.2  &62.89 $\pm$ 10.2\\

\bottomrule
\end{tabular}
\caption{Complete FID scores$\downarrow$ as a function of $\gamma_c$, the ratio between the collected contamination instances and target instances on different datasets.}
\label{tablel}}
\vspace{5mm}
\end{table*}

\begin{table*}[!t]
\centering
\footnotesize
\setlength\tabcolsep{2.6pt}
{\begin{tabular}{cccccccc}
\toprule
\makecell[c]{Data} & \makecell[c]{$k$}  & \makecell[c]{LSGAN} & \makecell[c]{NDA} & \makecell[c]{GenPU} & \makecell[c]{PU-LSGAN} & \makecell[c]{PU-NDA} & \makecell[c]{PuriGAN} \\
\cmidrule(lr){1-1} \cmidrule{2-2} \cmidrule(lr){3-3} \cmidrule(lr){4-4} \cmidrule(lr){5-5} \cmidrule(lr){6-6} \cmidrule(lr){7-7} \cmidrule(lr){8-8} 
MNIST & 1  & 30.83 $\pm$ 9.0  & 31.56  $\pm$ 11.6 & 23.10 $\pm$ 8.2 & 15.04 $\pm$ 3.1 & \textbf{10.45 $\pm$ 3.0} & 10.68 $\pm$ 1.6\\
& 2  & 27.56 $\pm$ 5.6 & 29.47 $\pm$ 9.6 & 21.32 $\pm$ 8.8 & 11.85 $\pm$ 2.4 &15.17 $\pm$ 4.7 & \textbf{10.70 $\pm$ 1.3} \\
 & 3  & 23.30 $\pm$ 4.7 & 30.37 $\pm$ 6.0 & 19.47 $\pm$ 8.0 & 12.76 $\pm$ 1.5 &15.90 $\pm$ 1.6 &\textbf{10.54 $\pm$ 1.2} \\
& 5  &20.86 $\pm$ 3.1 & 26.22 $\pm$ 3.5  & 16.68 $\pm$ 6.1 & 13.59 $\pm$ 1.8 & 17.03 $\pm$ 1.5 &\textbf{11.97 $\pm$ 1.6} \\
\cmidrule(lr){1-1} \cmidrule{2-2} \cmidrule(lr){3-3} \cmidrule(lr){4-4} \cmidrule(lr){5-5} \cmidrule(lr){6-6} \cmidrule(lr){7-7} \cmidrule(lr){8-8}
F-MNIST & 1  & 73.87 $\pm$ 9.0 & 61.47 $\pm$ 11.6 & 63.04 $\pm$ 8.5 & 46.06 $\pm$ 5.8 & 43.55 $\pm$ 5.1 & \textbf{38.18 $\pm$ 3.1}\\
& 2 & 76.71 $\pm$ 14.8 & 76.31 $\pm$ 15.5 & 73.30 $\pm$ 10.2 & 41.32 $\pm$ 6.3 & 62.32 $\pm$ 8.3 & \textbf{39.88 $\pm$ 6.7} \\
 & 3 & 79.89 $\pm$ 16.5 & 60.66 $\pm$ 13.6 & 85.16 $\pm$ 11.2 & 39.27 $\pm$ 7.9 &49.31 $\pm$ 5.9 & \textbf{37.62 $\pm$ 8.3} \\
& 5  & 68.78 $\pm$ 17.5 & 57.53 $\pm$ 9.7 & 78.68 $\pm$ 9.5 & 45.63 $\pm$ 8.0 & 53.02 $\pm$ 4.9  & \textbf{40.12 $\pm$ 7.2}\\
\cmidrule(lr){1-1} \cmidrule{2-2} \cmidrule(lr){3-3} \cmidrule(lr){4-4} \cmidrule(lr){5-5} \cmidrule(lr){6-6} \cmidrule(lr){7-7} \cmidrule(lr){8-8} 
SVHN & 1  & 30.09 $\pm$ 4.9 & 28.99 $\pm$ 4.3 & 28.43 $\pm$ 5.3 & 27.06 $\pm$ 4.8 & 23.40 $\pm$ 3.5 & \textbf{21.67 $\pm$ 3.4}\\
& 2  & 22.36 $\pm$ 5.8 & 22.67 $\pm$ 5.1 & 23.77 $\pm$ 6.0 & 20.18 $\pm$ 3.0 & 18.95 $\pm$ 3.8 & \textbf{17.05 $\pm$ 3.4} \\
& 3  & 18.76 $\pm$ 2.1 & 17.63 $\pm$ 2.1 & 20.11 $\pm$ 3.1 & 16.13 $\pm$ 1.9 &16.50 $\pm$ 2.4  &\textbf{12.73 $\pm$ 1.5} \\
& 5  &19.41 $\pm$ 2.2 & 16.25 $\pm$ 1.3 &15.68 $\pm$ 1.5 & 15.60 $\pm$ 2.0 & 14.89 $\pm$ 1.4 &\textbf{11.96 $\pm$ 1.5}\\
\cmidrule(lr){1-1} \cmidrule{2-2} \cmidrule(lr){3-3} \cmidrule(lr){4-4} \cmidrule(lr){5-5} \cmidrule(lr){6-6} \cmidrule(lr){7-7} \cmidrule(lr){8-8} 
CIFAR-10 & 1  & 63.79 $\pm$ 11.2 & 73.03 $\pm$ 11.0 & 65.06 $\pm$ 10.3 & 59.43 $\pm$ 11.3 & 69.61 $\pm$ 11.5 & \textbf{55.35 $\pm$ 10.5}\\
& 2  & 39.81 $\pm$ 2.4 & 48.90 $\pm$ 5.0 & 38.60 $\pm$ 3.7 & 37.50 $\pm$ 2.6 & 45.69 $\pm$ 5.4 & \textbf{35.41 $\pm$ 2.4} \\
& 3  & 44.76 $\pm$ 8.5 & 55.27 $\pm$ 11.1 & 43.44 $\pm$ 7.7 & 42.29 $\pm$ 8.9 & 49.36 $\pm$ 9.2  &\textbf{40.92 $\pm$ 7.8} \\
& 5  &57.92 $\pm$ 11.4 & 68.76 $\pm$ 12.5 &53.42 $\pm$ 11.1 & 53.04 $\pm$ 11.0 & 60.45 $\pm$ 11.9 &\textbf{51.64 $\pm$ 10.8}\\
\bottomrule
\end{tabular}
\caption{Complete FID scores$\downarrow$ as a function of the number of classes of target instances on different datasets.}

\label{tablek}}
\end{table*}

\paragraph{Investigate the Sensitivity to $\lambda$}
In this paragraph, we discuss the effect of different $\lambda$ on the model generation results. Theoretically, for the parameter $\lambda$, it should be set very large. But in practice, if $\lambda$ is set too large, the classifier is likely to classify all instances to 0, which, obviously, will weaken the discriminator’s ability to distinguish between the target and contamination instances. Moreover, a large $\lambda$ may affects the convergence speed of the generative adversarial network. Therefore, the value of $\lambda$ should be set by taking into account the influences of these factors. We list the results of different $\lambda$ values under different scenarios in Table \ref{lambda_re}. Overall, it can be seen that under the considered range $1\le \lambda\le 5$, the observed performances do not differ too much, approximately on the same level. Thus, the performance is not very sensitive to the specific value of $\lambda$, as long as it is not set too large or too small. The experiment results reveal that our model has some potential to achieve better performance with the more detailed tuning of the parameter $\lambda$.
 
\paragraph{Investigate the Sensitivity to $c$}
In this paragraph, we discuss the effect of different $c$ on the model generation results. Theoretically, 
the values of parameter $c$ has no influence on the convergence results of the theorem. We conduct the experiments and list the results of different $c$ value under different scenarios in Table \ref{c_re}. Overall, it can be observed that the performances do not differ too much. Therefore, an arbitrary choice of parameter $c$ can achieve satisfactory results.
 
\paragraph{Complete Results for the Impact of Contamination Ratio $\gamma_p$} Table \ref{Overall_fidscore_app} shows the complete FID scores under various $\gamma_p$.  It can be seen our model outperforms other baselines under almost all scenarios. NDA and LSGAN show a dramatic decrease in performance as the contamination ratio increases. This may because they do not have any ability to counter the influences of contamination instances.
Two-stage approaches performs more closely to our model. This phenomenon is because this kind of method employs a PU-learning classifier to remove contamination instances in the first stage. A remarkable performance gap between the PU-NDA and PU-LSGAN can be observed, which can be attributed to the ability of NDA to leverage collected contamination instances to boost generation performance, although it is primarily designed to work on clean datasets. The GAN-based PU-learning method GenPU can partially able to work on a contaminated dataset, but it focuses more on discriminative learning rather than generating learning. Moreover, GenPU incorporates an array of generators and discriminators, which require very careful tuning of their parameters and weights during the training phase.

\paragraph{Complete Results for the Impact of the Ratio of Collected Contamination Instances $\gamma_c$}
To investigate how the number of collected contamination instances affects the generation performance, we conduct experiments under different $\gamma_c$ where $\gamma_c$ is defined as the ratio between the number of available contamination instances and total desirable instances in the training dataset. The contamination ratio $\gamma_p$ is fixed to 0.5 under which the two PuriGAN variants become equivalent. Thus we only list the performance of one PuriGAN in Table \ref{tablel}. It can be seen that our model consistently outperforms the baseline models. When $\gamma_c$ is set to 0, PuriGAN and PU-LSGAN are reduced to unsupervised LSGAN, and other baselines are reduced to unsupervised original GAN, which can not leverage any collected contaminated information for better performance. As long as a small fraction of collected contamination instances are given, a significant performance improvement of PuriGAN can be observed. As the number of collected instances of contamination further increases, the performance of our proposed model can be steadily improved. These phenomena demonstrate the effectiveness of PuriGAN in leveraging the extra contamination instances to counter the influences of contamination.

\begin{table*}[!t]
\setlength\tabcolsep{2.5pt} 
\centering
\small
{
\begin{tabular}{ccccc|ccccc}
\toprule
\makecell[c]{\\Data} & \makecell[c]{\\$\gamma_p$} & \makecell[c]{Deep\\ SVDD} & \makecell[c]{CAE \\} & \makecell[c]{LSGAN \\ } & \makecell[c]{Deep\\ SAD } & \makecell[c]{\makecell[c]{Supervised\\ Classifier}} & \makecell[c]{NDA \\}& \makecell[c]{PuriGAN\\}\\
\cmidrule(lr){1-1} \cmidrule(lr){2-2} \cmidrule(lr){3-3} \cmidrule(lr){4-4} \cmidrule(lr){5-5} \cmidrule(lr){6-6} \cmidrule(lr){7-7} \cmidrule(lr){8-8} \cmidrule(lr){9-9} \cmidrule(lr){10-10} 
& .00 & 92.8 $\pm$ 4.9 & 92.9 $\pm$ 5.7 &91.8 $\pm$ 6.4 & \textbf{99.7 $\pm$ 0.3} & 98.1 $\pm$ 0.8 & \textbf{99.7 $\pm$ 0.2} & \textbf{99.7 $\pm$ 0.3} \\
& .10 & 86.5 $\pm$ 6.8 & 83.7 $\pm$ 8.4 & 75.7 $\pm$ 10.7 & 98.6 $\pm$ 1.1 & 96.1 $\pm$ 1.4 & 98.8 $\pm$ 0.7 & \textbf{99.4 $\pm$ 0.3} \\
MNIST & .20  & 81.5 $\pm$ 8.4 & 78.6 $\pm$ 10.3 & 70.3 $\pm$ 13.7 & 98.2 $\pm$ 1.3 & 94.5 $\pm$ 1.6 & 98.1 $\pm$ 0.9 & \textbf{99.3 $\pm$ 0.3} \\
& .30  & 77.0 $\pm$ 9.7 & 72.3 $\pm$ 11.1 &68.0 $\pm$ 12.2 & 97.5 $\pm$ 1.3 & 92.2 $\pm$ 2.7 & 97.2 $\pm$ 1.1 & \textbf{99.0 $\pm$ 0.4}\\
\cmidrule(lr){1-1} \cmidrule(lr){2-2} \cmidrule(lr){3-3} \cmidrule(lr){4-4} \cmidrule(lr){5-5} \cmidrule(lr){6-6} \cmidrule(lr){7-7} \cmidrule(lr){8-8} \cmidrule(lr){9-9} 
& .00  & 89.2 $\pm$ 6.2 & 90.2 $\pm$ 5.8  &90.2 $\pm$ 7.7 & 97.1 $\pm$ 3.7 & 94.9 $\pm$ 4.0 &96.6 $\pm$ 4.1 & \textbf{97.7 $\pm$ 2.5} \\
& .10  & 76.2 $\pm$ 6.3 & 77.4 $\pm$ 11.1 &79.9 $\pm$ 9.7& 94.7 $\pm$ 4.2 & 93.6 $\pm$ 4.5 &89.1 $\pm$ 8.0 & \textbf{97.3 $\pm$ 3.0} \\
F-MNIST & .20 & 69.3 $\pm$ 7.3 & 72.4 $\pm$ 12.6 & 77.9 $\pm$ 8.3 & 93.4 $\pm$ 5.2 & 92.8 $\pm$ 4.8 & 88.3 $\pm$ 9.0 & \textbf{97.0 $\pm$ 3.1} \\
& .30 & 64.5 $\pm$ 7.9 & 68.1 $\pm$ 13.5 & 76.7 $\pm$ 9.8 & 92.1 $\pm$ 4.0 & 91.5 $\pm$ 5.0 & 87.3 $\pm$ 9.2 & \textbf{96.8 $\pm$ 3.4}\\
\cmidrule(lr){1-1} \cmidrule(lr){2-2} \cmidrule(lr){3-3} \cmidrule(lr){4-4} \cmidrule(lr){5-5} \cmidrule(lr){6-6} \cmidrule(lr){7-7} \cmidrule(lr){8-8} \cmidrule(lr){9-9} 
& .00  & 60.9 $\pm$ 9.4 & 56.2 $\pm$ 13.2 & 65.4 $\pm$ 4.1 & \textbf{87.9 $\pm$ 4.9} & 81.2 $\pm$ 2.8 & 78.9 $\pm$ 5.6 & 86.7 $\pm$ 5.4 \\
& .10  & 58.6 $\pm$ 10.0 & 55.4 $\pm$ 13.3 & 62.6 $\pm$ 5.0 & 82.5 $\pm$ 5.3 & 75.1 $\pm$ 5.2 & 75.3 $\pm$ 5.6 & \textbf{84.7 $\pm$ 5.4} \\
CIFAR-10 & .20  & 57.0 $\pm$ 10.6 & 54.6 $\pm$ 13.3 & 61.3 $\pm$ 6.1 &79.4 $\pm$ 5.2 & 73.8 $\pm$ 4.9  & 72.1 $\pm$ 8.5 & \textbf{84.3 $\pm$ 5.5} \\
& .30  & 55.5 $\pm$ 11.0  & 53.8 $\pm$ 13.3  & 60.7 $\pm$ 6.1 & 76.9 $\pm$ 5.5 & 70.9 $\pm$ 4.2  & 70.1 $\pm$ 8.5 & \textbf{83.3 $\pm$ 5.4} \\
\bottomrule
\end{tabular}
\vspace{-1.2mm}
\caption{AUROC of semi-supervised anomaly detection on different datasets under different contamination ratio of $\gamma_p$ and fixed $\gamma_c=0.05$.}

\label{tableadp}}

\end{table*}

\begin{figure*}[!tb]
    \centering
    \begin{minipage}{0.32\linewidth}
        \centering
        \includegraphics[width=1\linewidth]{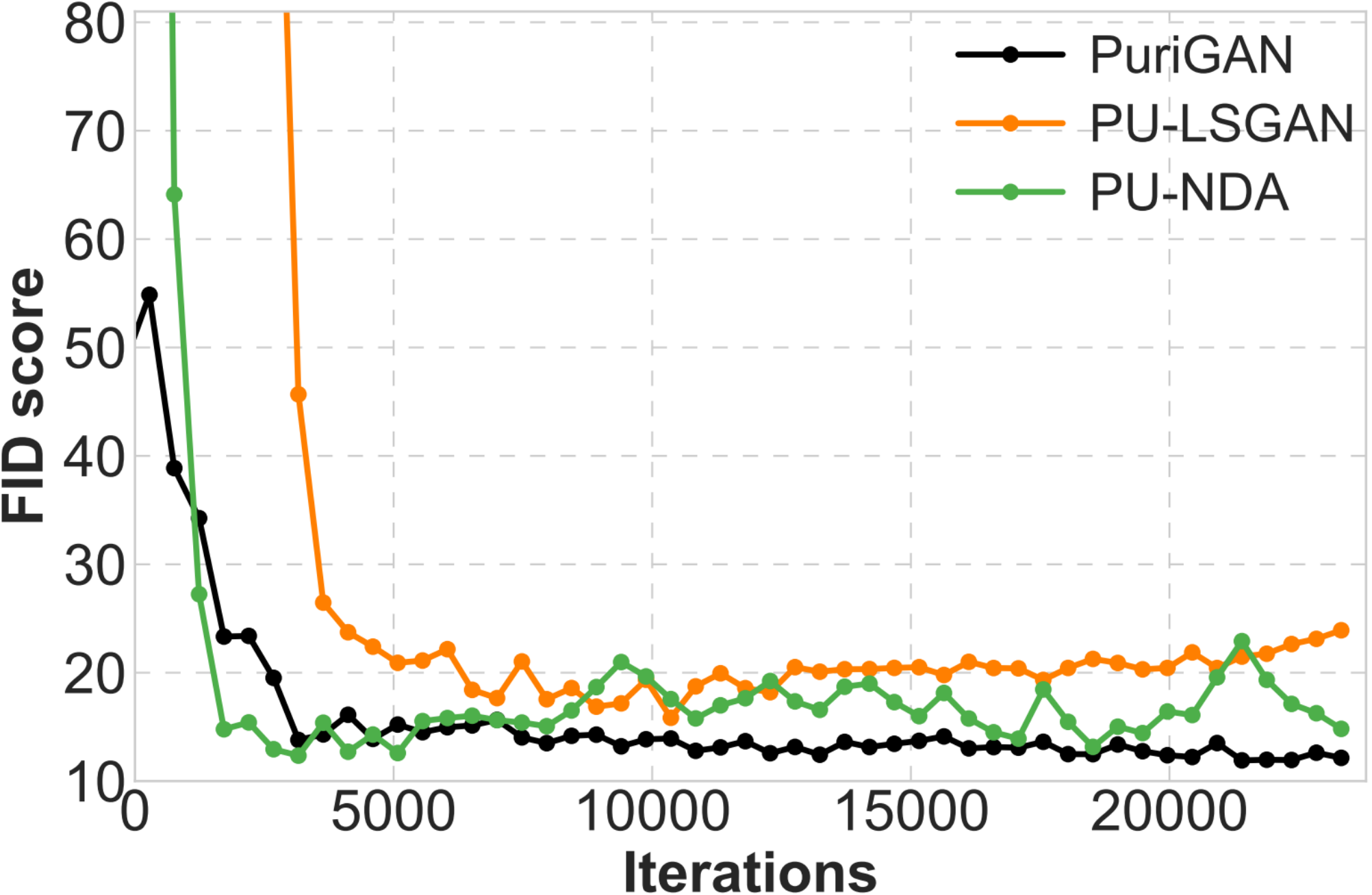}
        \caption*{\scriptsize MNIST}
    \end{minipage}
    \begin{minipage}{0.32\linewidth}
        \centering
        \includegraphics[width=1\linewidth]{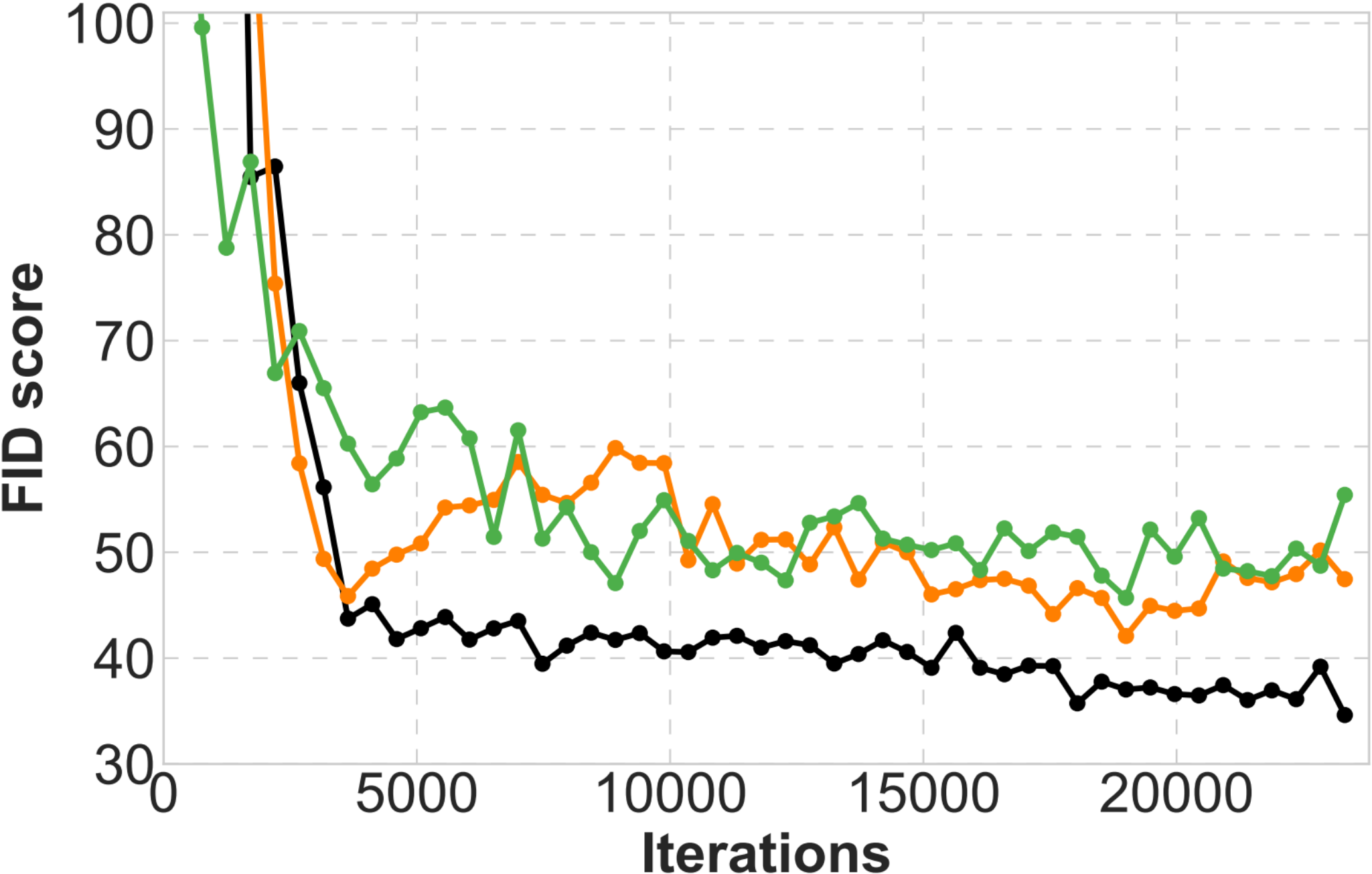}
        \caption*{\scriptsize F-MNIST}
    \end{minipage}
    \begin{minipage}{0.32\linewidth}
        \centering
        \includegraphics[width=1\linewidth]{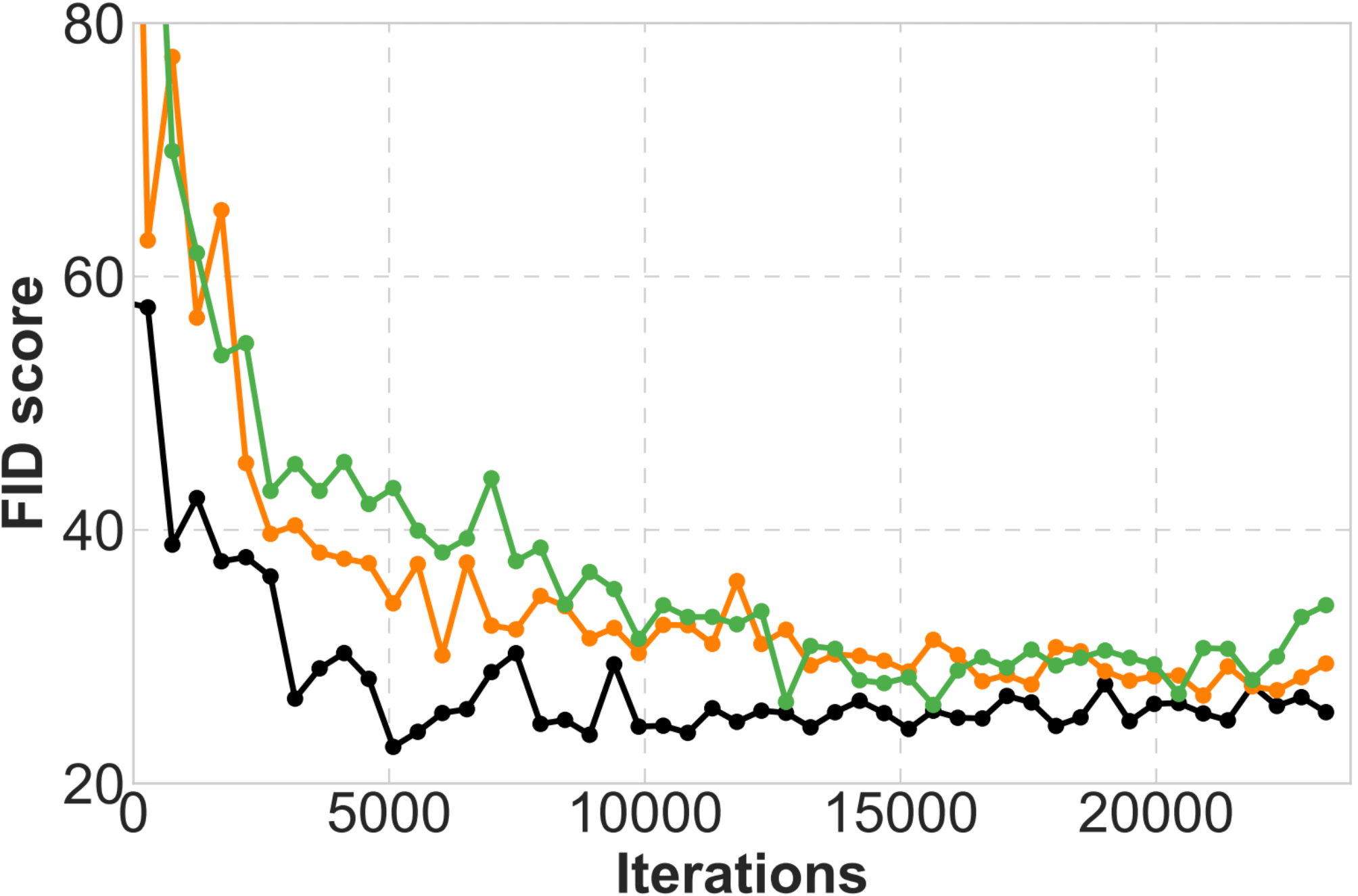}
        \caption*{\scriptsize SVHN}
    \end{minipage}
\caption{Illustration of FID variation as iterations progress on under the scenario of $\gamma_c=0.2$ and $\gamma_p=0.5$.}
\label{FID-iterations}
\vspace{8mm}
\end{figure*}

\begin{figure*}[t]
\begin{flushleft}
\setlength\tabcolsep{1.9pt}
  \begin{tabular}[b]{p{.02\linewidth}|p{.15\linewidth}|p{.15\linewidth}|p{.15\linewidth}|p{.15\linewidth}|p{.15\linewidth}|p{.15\linewidth}}
  &\makecell[c]{LSGAN} & \makecell[c]{NDA} &\makecell[c]{GenPU}& \makecell[c]{PU-LSGAN}& \makecell[c]{PU-NDA}& \makecell[c]{PuriGAN}  \\
  \multirow{2}{*}[4.3em]{\rotatebox{90}{MNIST}} &
    \includegraphics[width=0.99\linewidth]{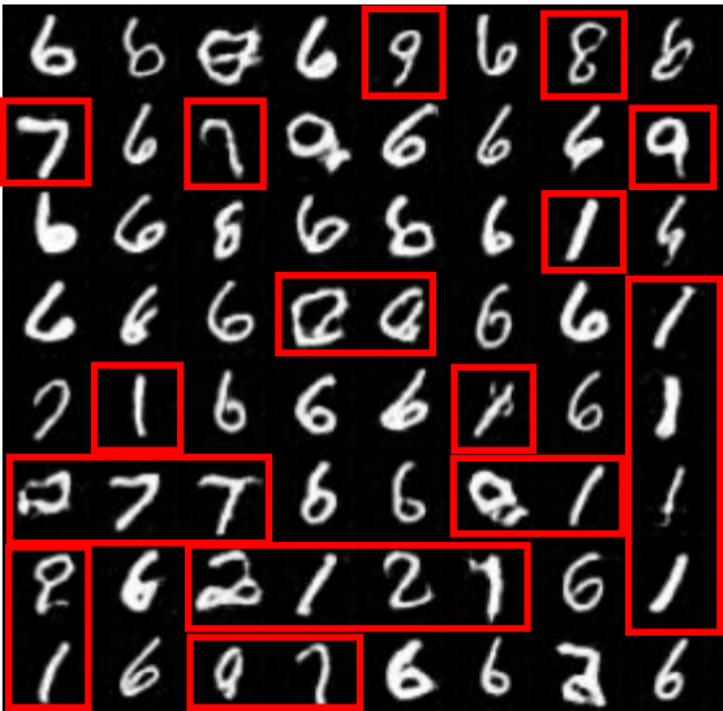} &
     \includegraphics[width=0.99\linewidth]{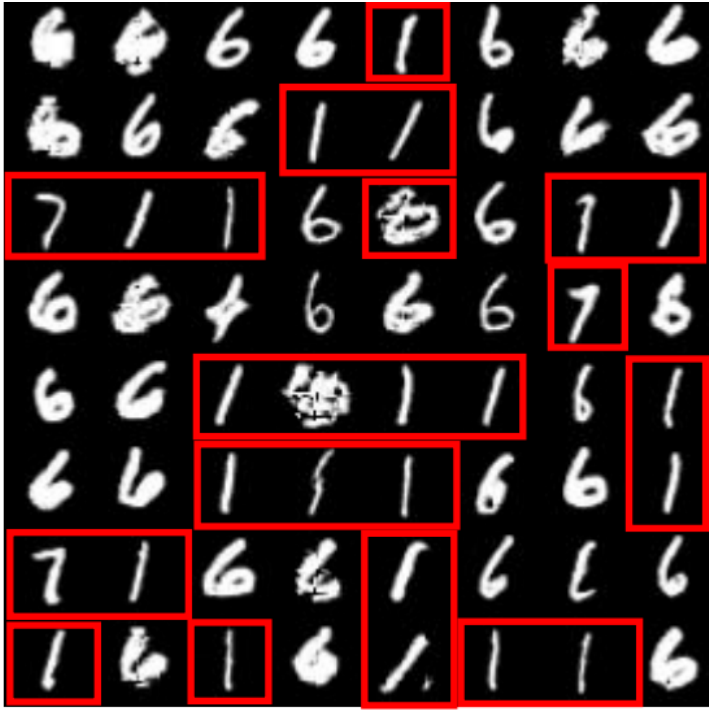} &
    \includegraphics[width=0.99\linewidth]{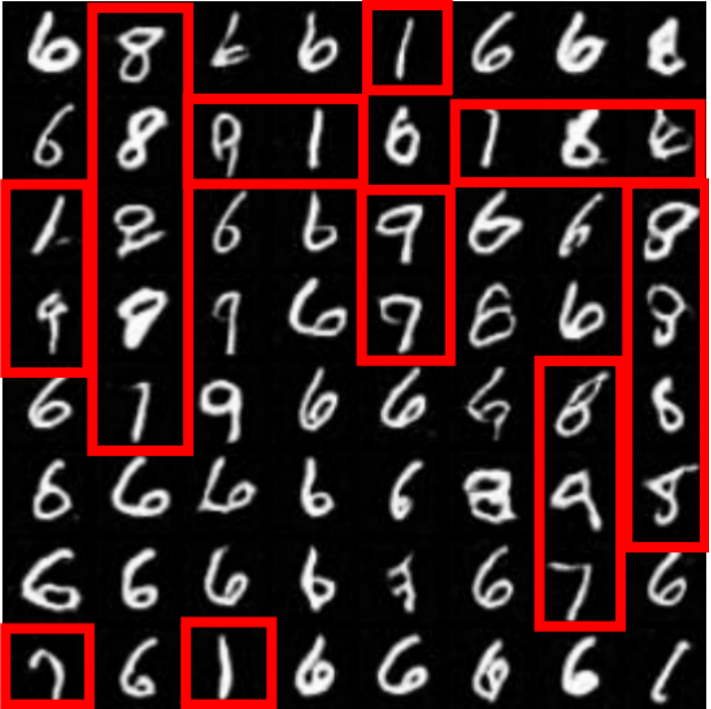} &

      \includegraphics[width=0.99\linewidth]{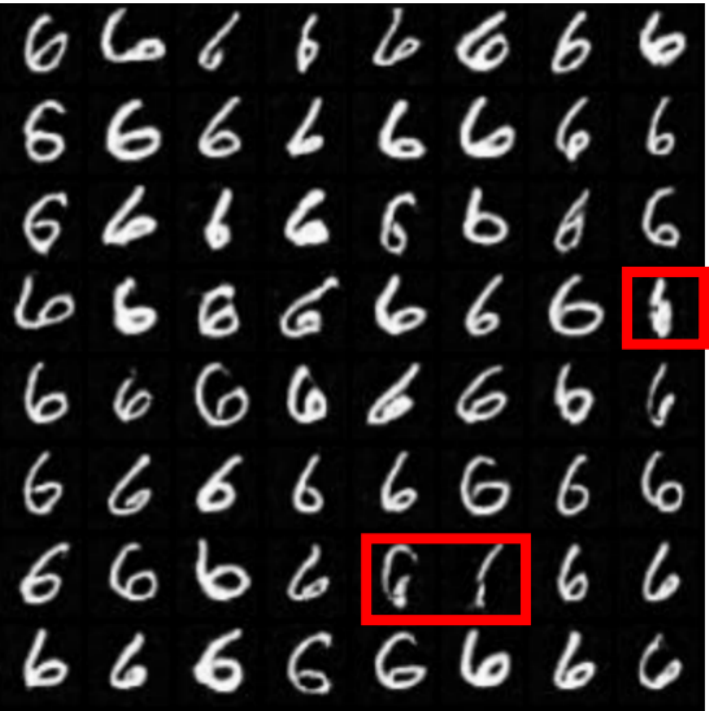}&
      \includegraphics[width=0.99\linewidth]{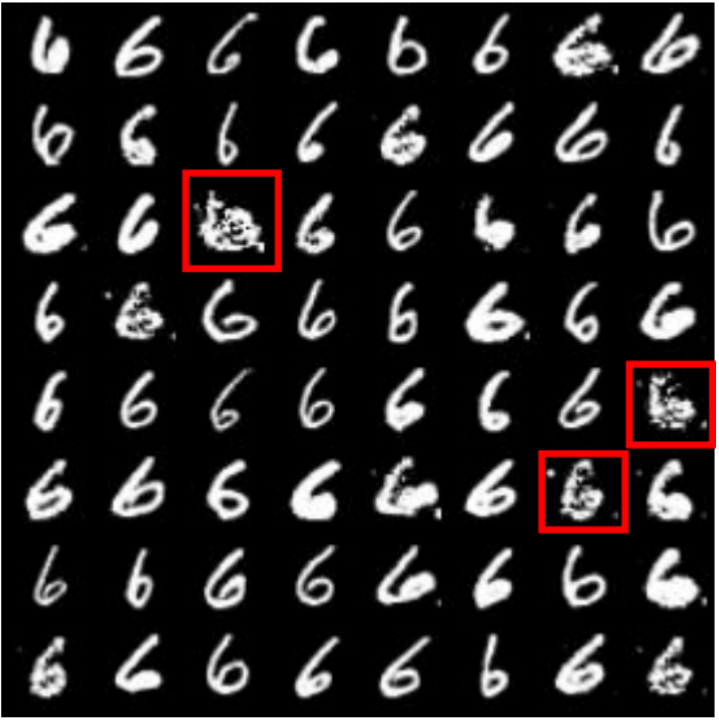}&
      \includegraphics[width=0.99\linewidth]{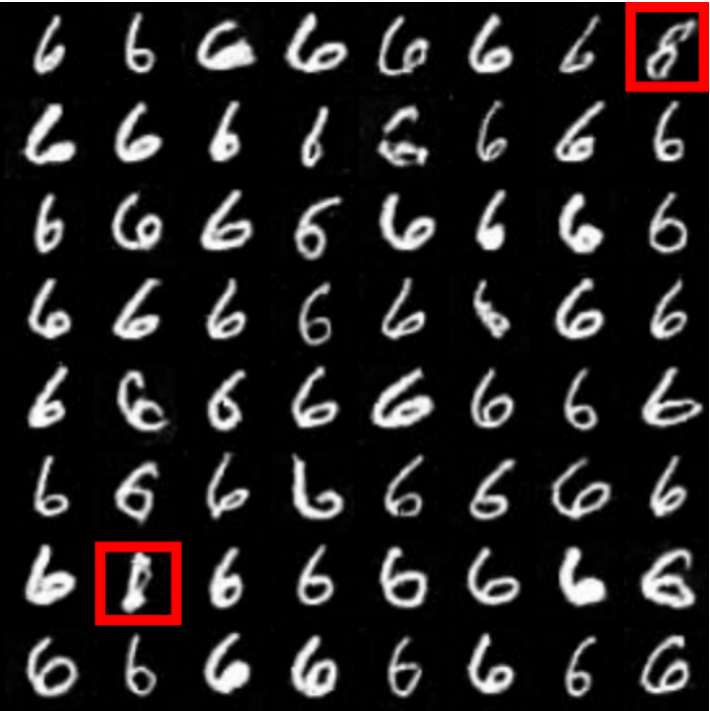}
      \\
    &\makecell[c]{FID = 26.98}  & \makecell[c]{FID = 46.76} & \makecell[c]{FID =28.21 } & \makecell[c]{FID =13.83 } &
    \makecell[c]{FID = 13.72} &
    \makecell[c]{FID = 11.71}\\[1pt]
    \multirow{2}{*}[4.3em]{\rotatebox{90}{F-MNIST}} &
    \includegraphics[width=0.99\linewidth]{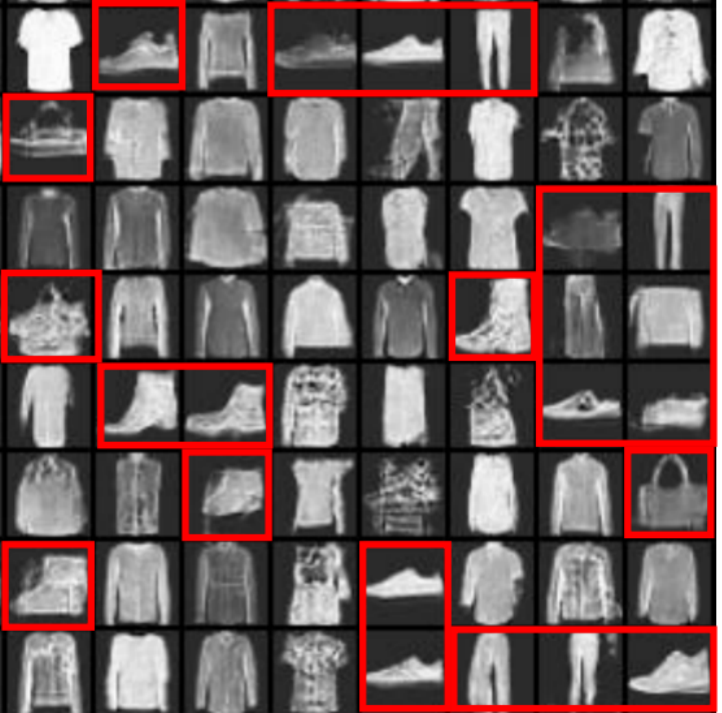} & 
    \includegraphics[width=0.99\linewidth]{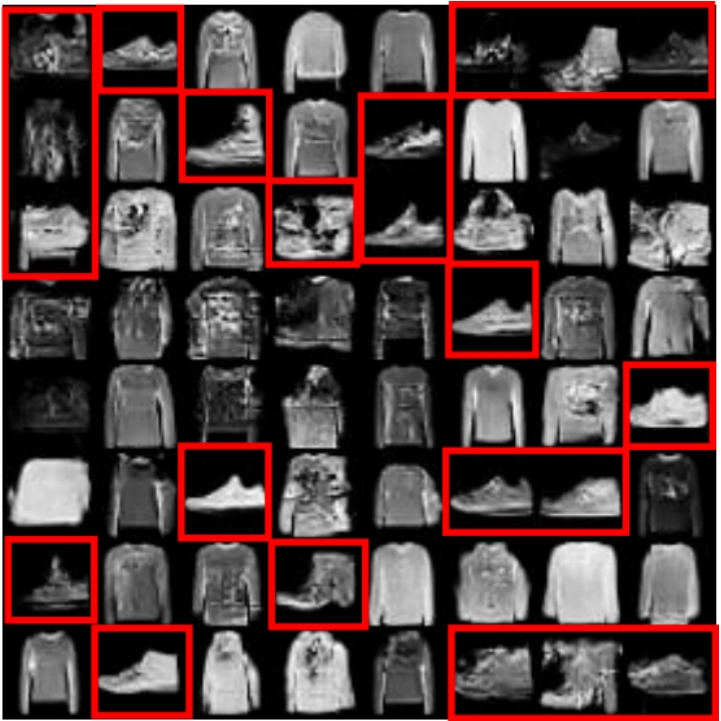}&
    \includegraphics[width=0.99\linewidth]{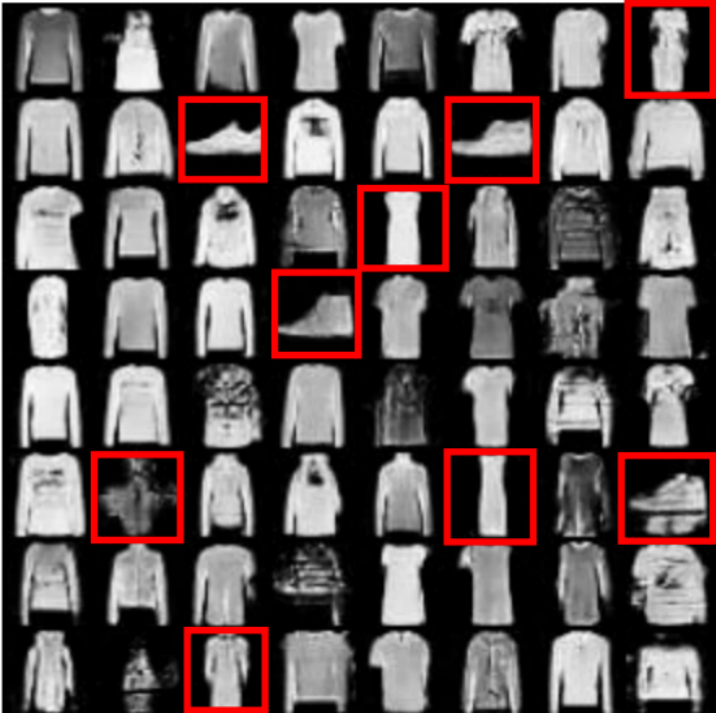}
    &\includegraphics[width=0.99\linewidth]{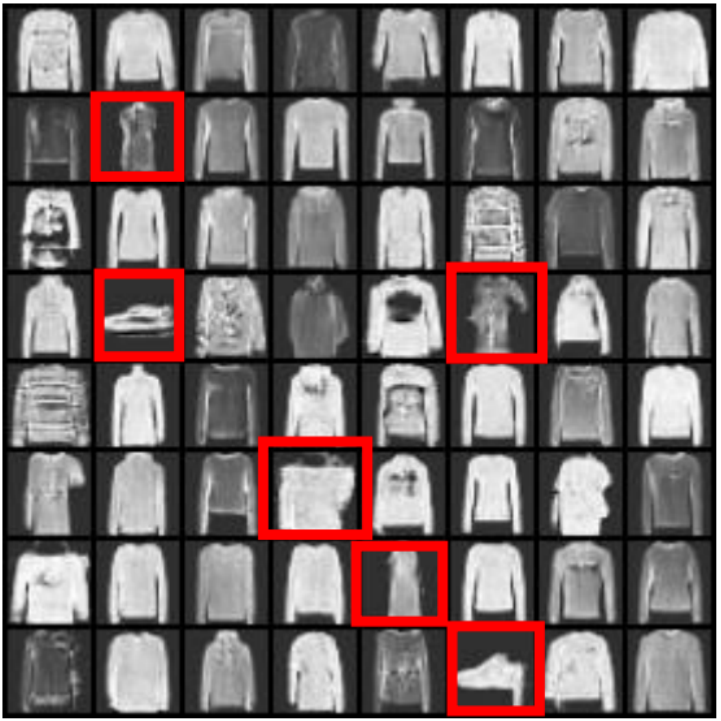} &
    \includegraphics[width=0.99\linewidth]{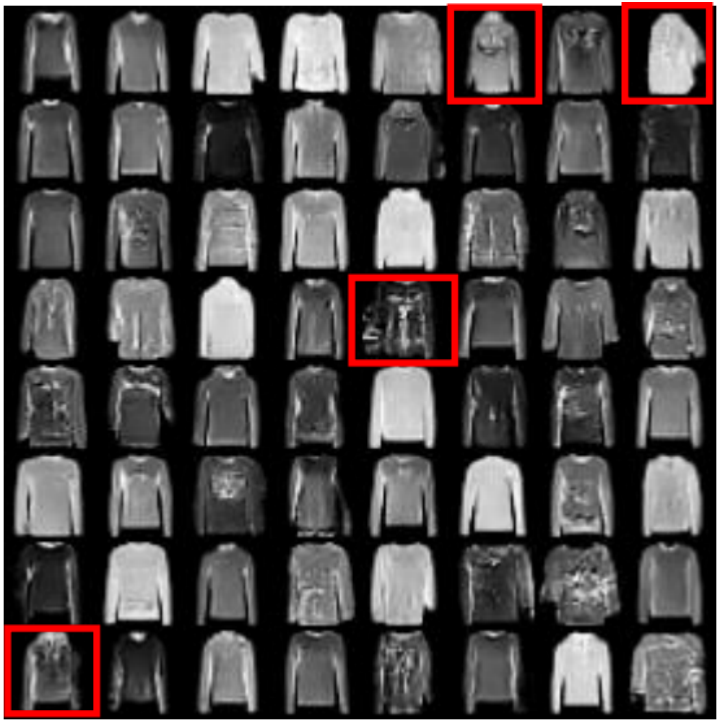}&
    \includegraphics[width=0.99\linewidth]{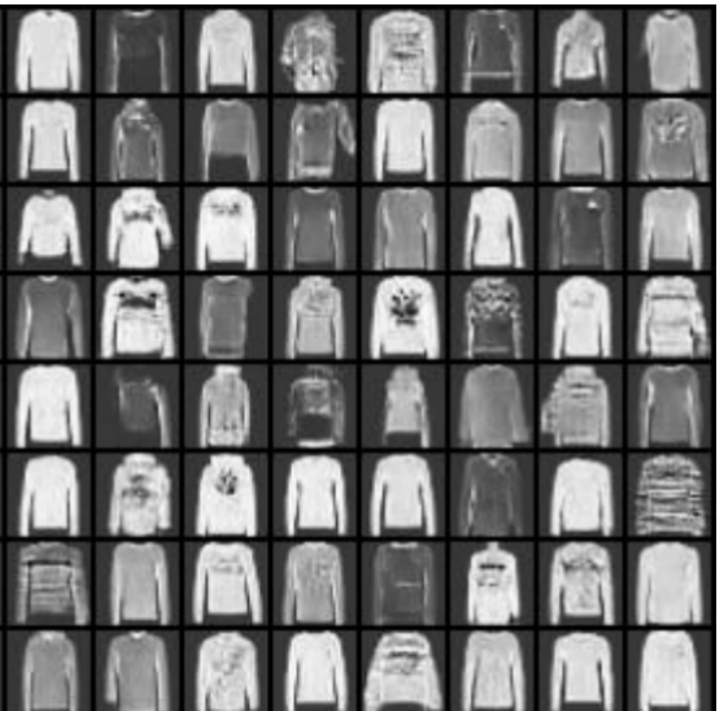}\\
    &\makecell[c]{FID = 65.84} 
    & \makecell[c]{FID = 68.99} 
    & \makecell[c]{FID = 61.80} 
    & \makecell[c]{FID = 44.41}
    & \makecell[c]{FID = 42.38}
    &\makecell[c]{FID = 34.56} 
     \\[2pt]
     \multirow{2}{*}[4.3em]{\rotatebox{90}{SVHN}}  &
     \includegraphics[width=0.99\linewidth]{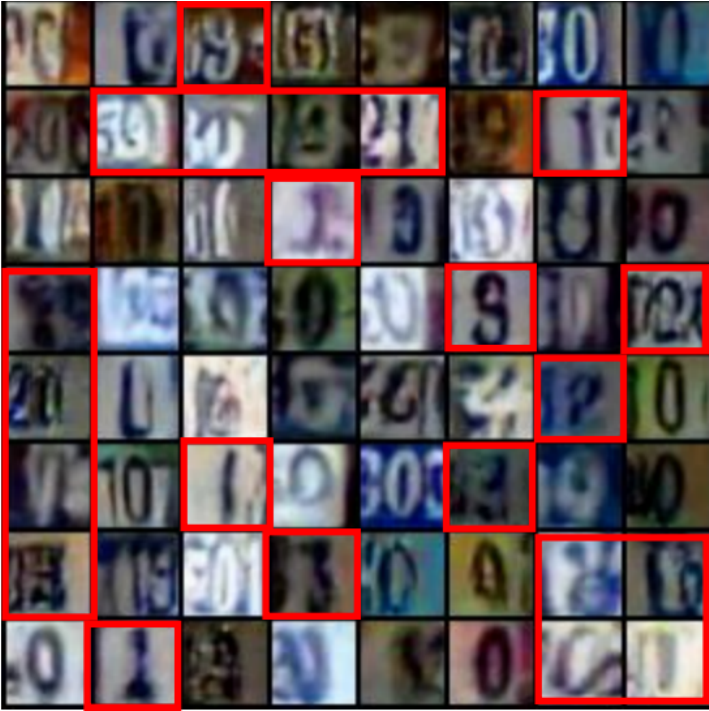} & 
     \includegraphics[width=0.99\linewidth]{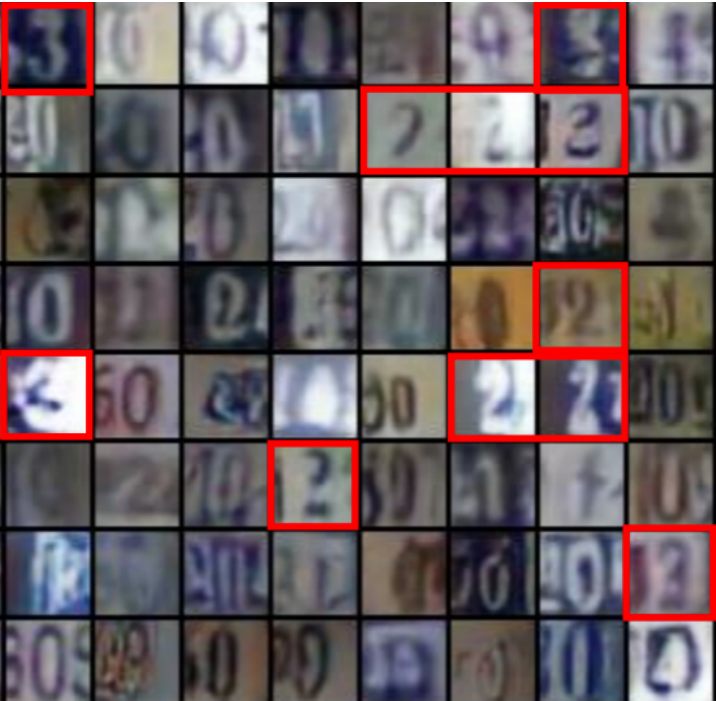} &
     \includegraphics[width=0.99\linewidth]{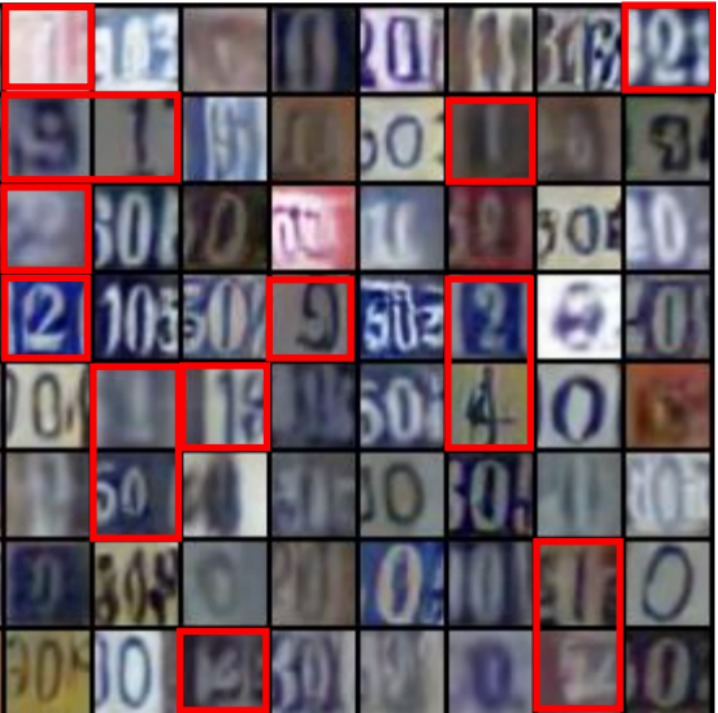}&
     \includegraphics[width=0.99\linewidth]{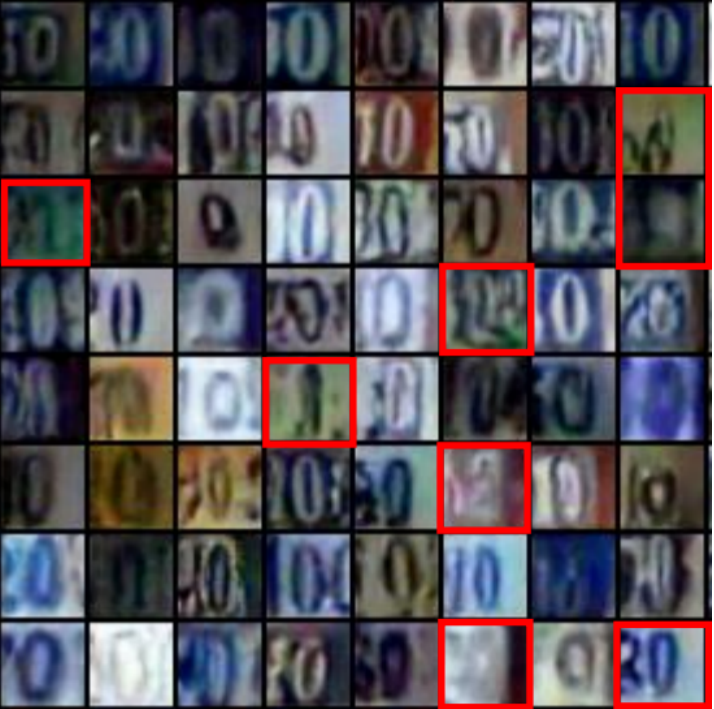} &
      \includegraphics[width=0.99\linewidth]{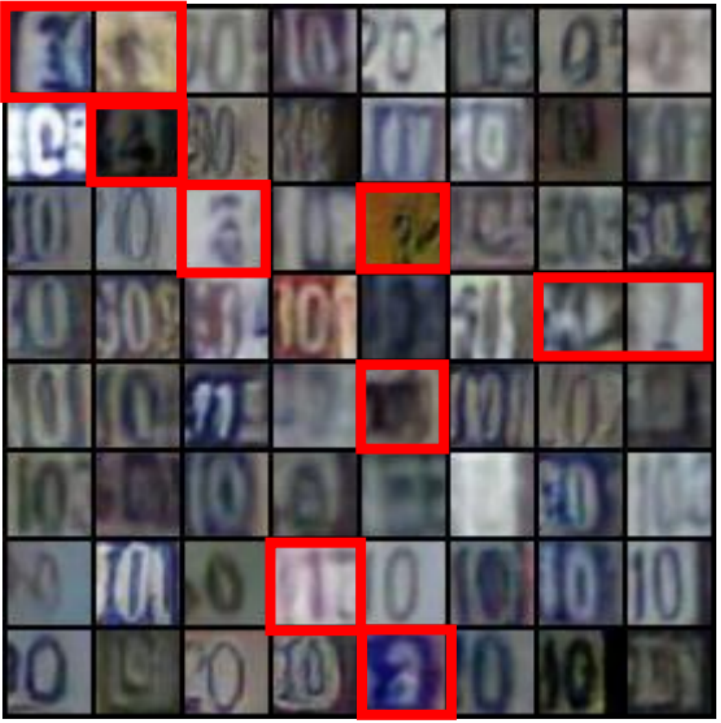} &
      \includegraphics[width=0.99\linewidth]{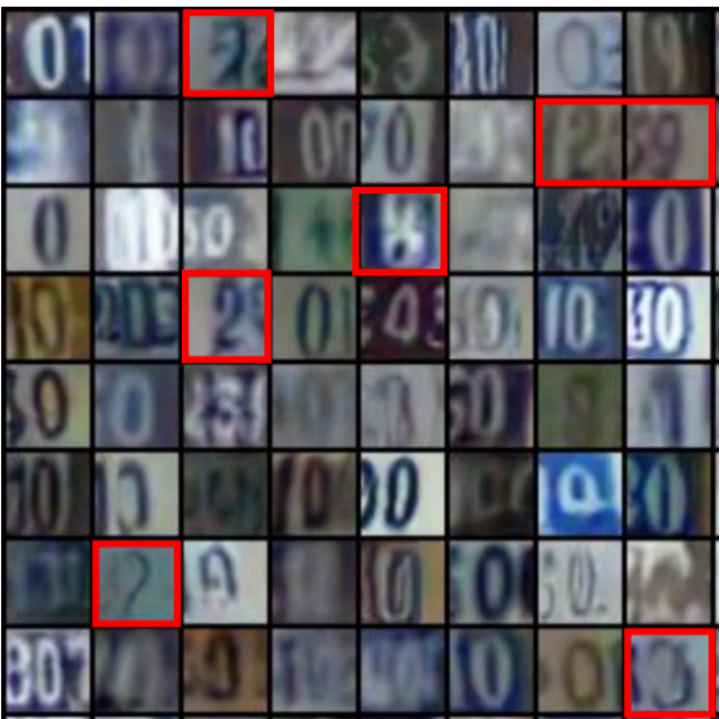}\\
    &\makecell[c]{FID = 28.98} 
    &\makecell[c]{FID = 26.74}
     &\makecell[c]{FID = 25.23} 
    &\makecell[c]{FID = 23.84}
    &\makecell[c]{FID = 22.68}
    &\makecell[c]{FID = 20.09}\\
    \multirow{2}{*}[4.3em]{\rotatebox{90}{CelebA}}  &
    \includegraphics[width=0.99\linewidth]{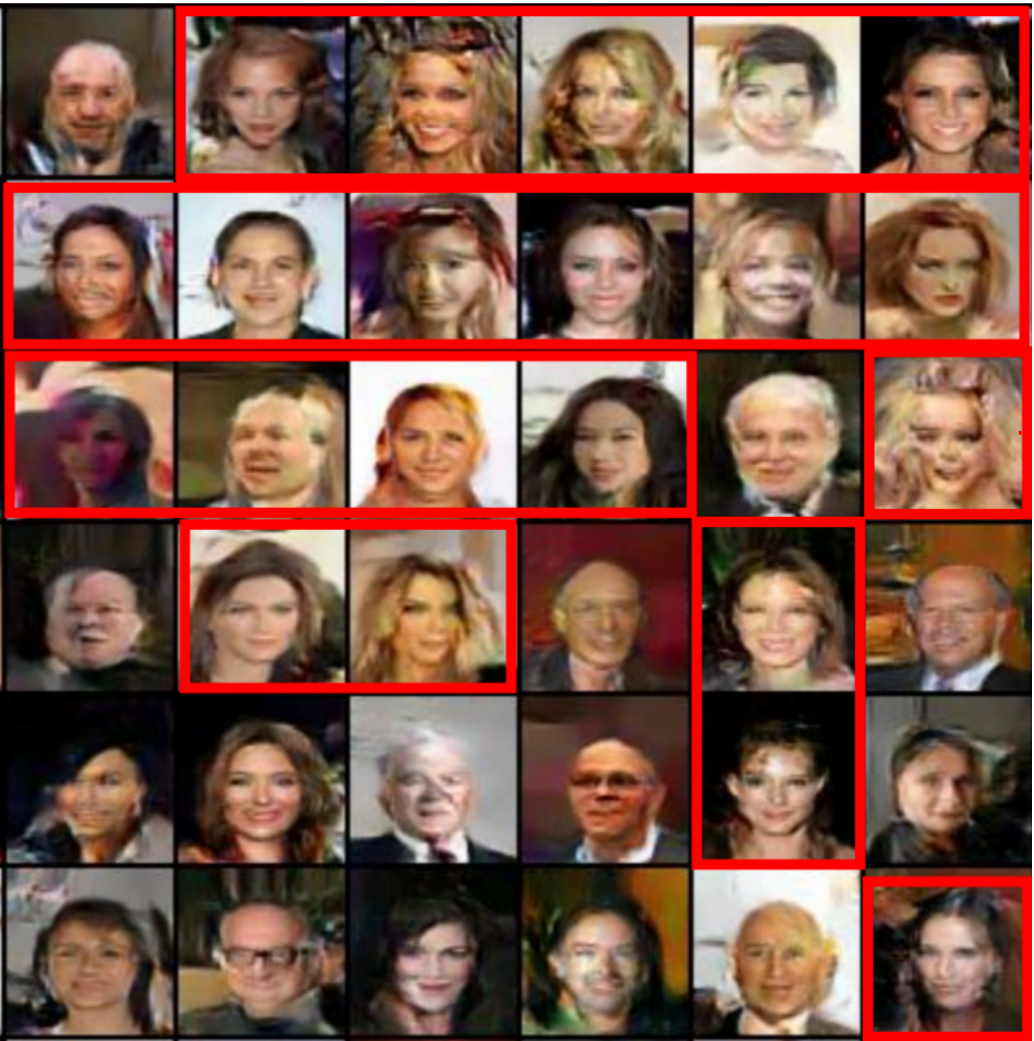} & 
    \includegraphics[width=0.99\linewidth]{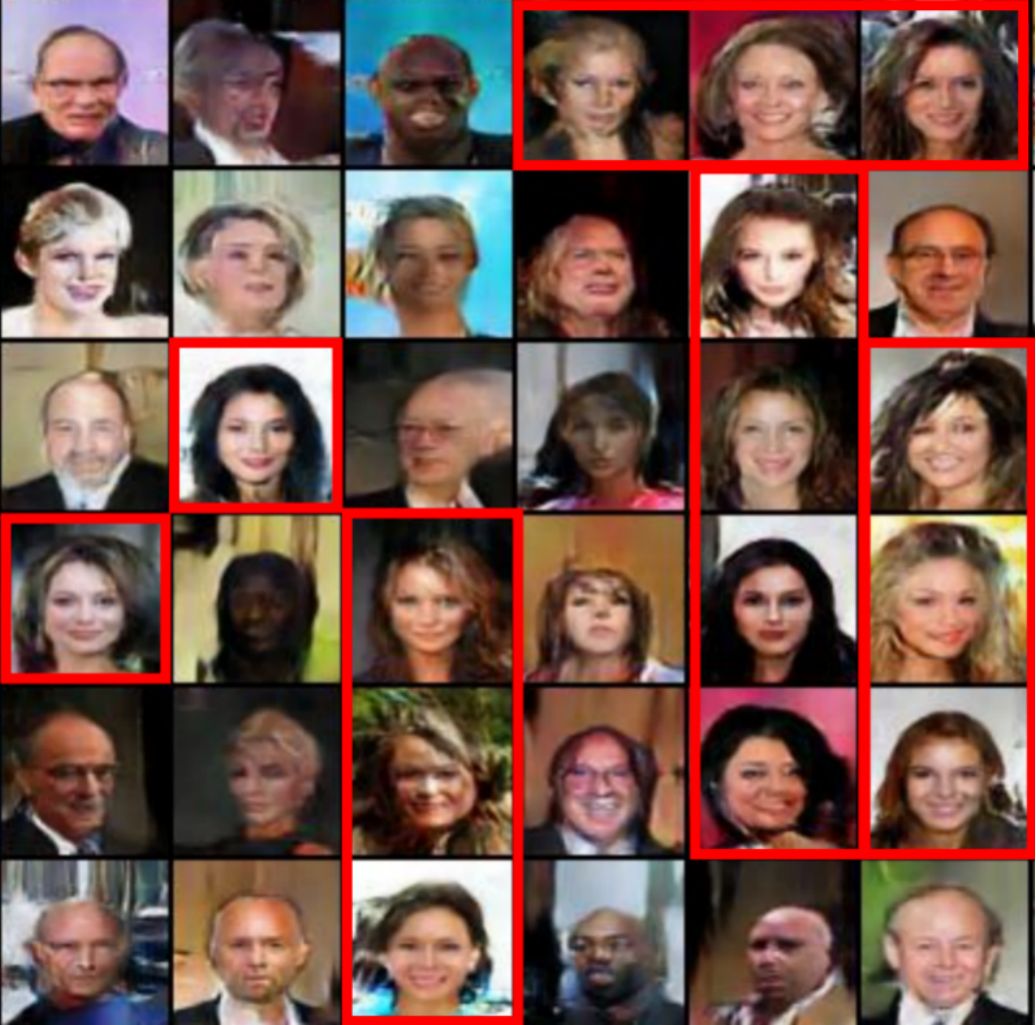}  &
    \includegraphics[width=0.99\linewidth]{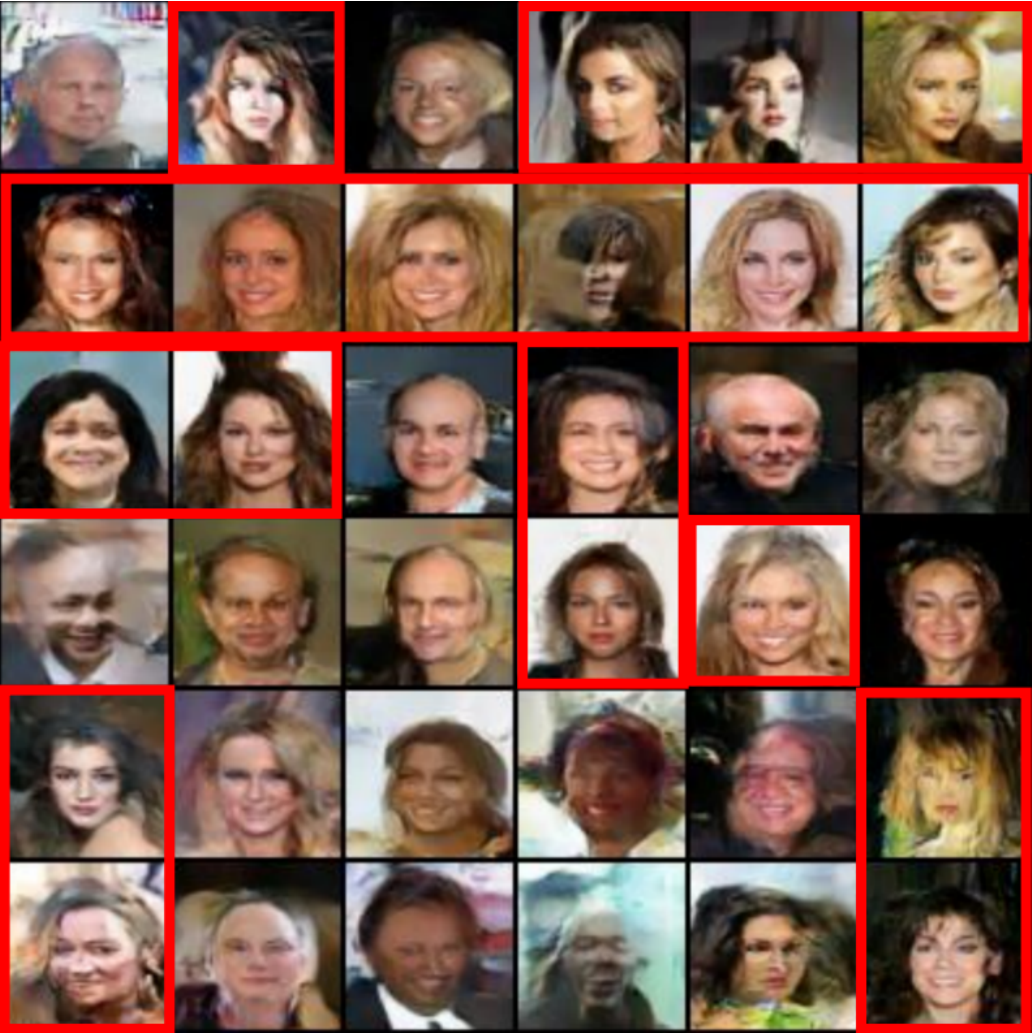}&
    \includegraphics[width=0.99\linewidth]{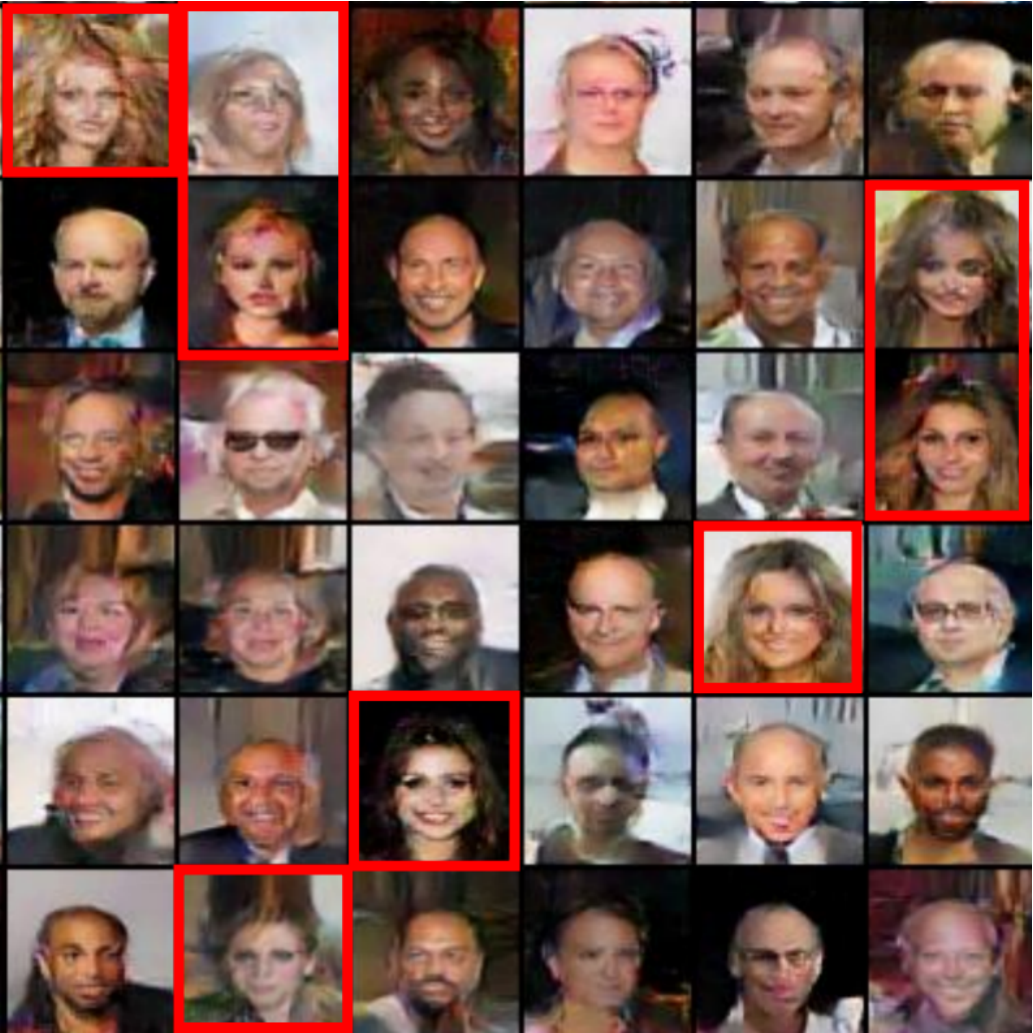} & 
    \includegraphics[width=0.99\linewidth]{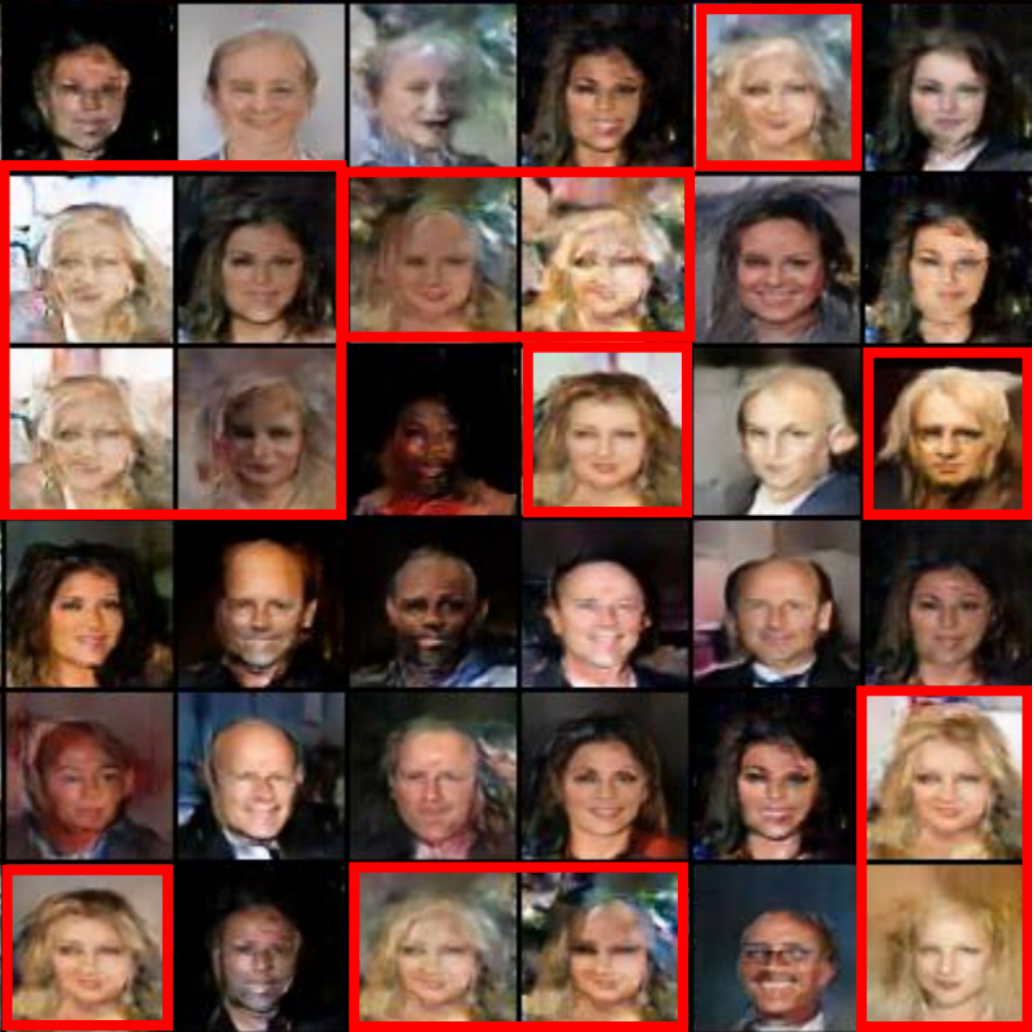}&
     \includegraphics[width=0.99\linewidth]{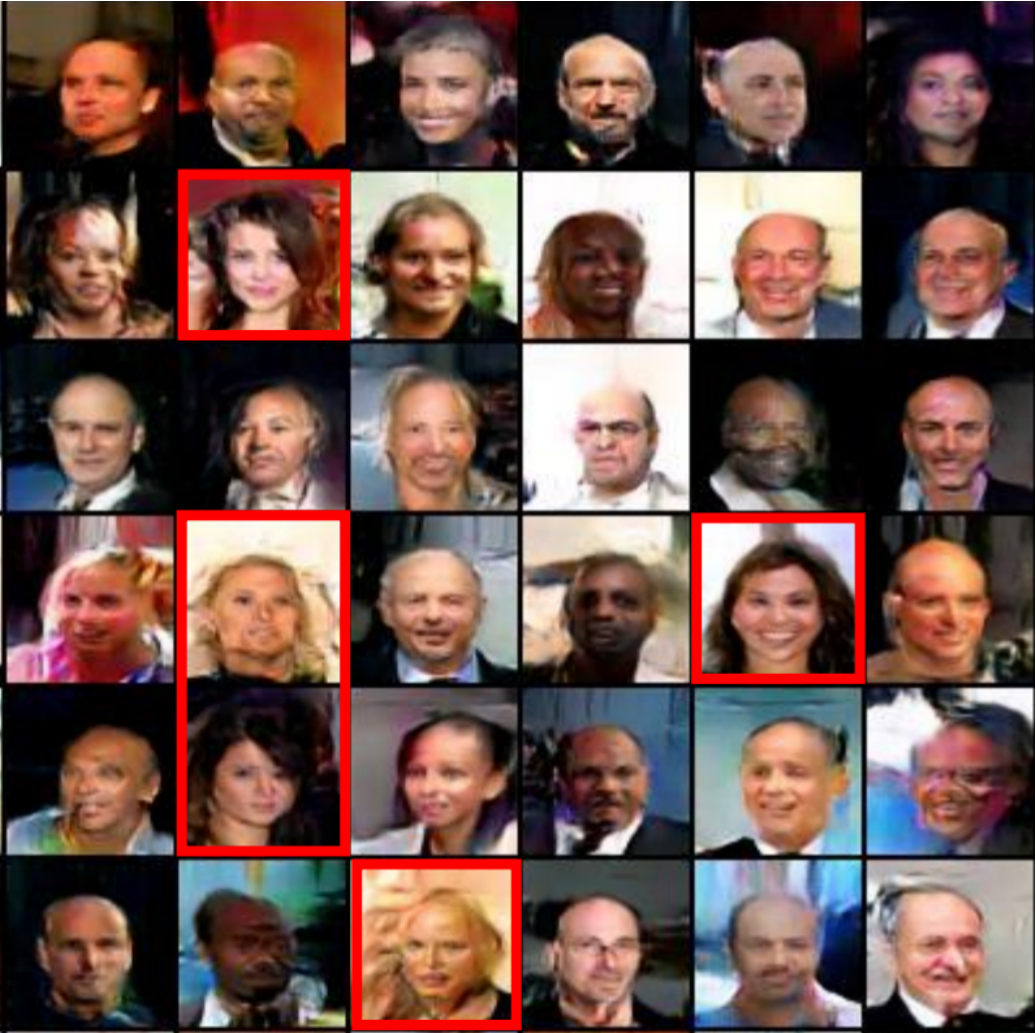} \\
    & \makecell[c]{FID = 55.25}  
    & \makecell[c]{FID = 53.30} 
    & \makecell[c]{FID = 48.06} 
    & \makecell[c]{FID = 47.06}  
    & \makecell[c]{FID = 48.15}
    & \makecell[c]{FID = 39.39}
  \end{tabular} 
  \caption{Complete illustration of images generated by different models trained on contaminated datasets under the scenario of $\gamma_c=0.2$ and $\gamma_p=0.5$.} 
  \label{GAN_Samples}
  \end{flushleft}
  \vspace{-3mm}
\end{figure*}

\paragraph{Complete Results for the Impact of the Number of Classes of Target Instance}
Table \ref{tablek} shows how the generation performance is affected when the target instances are composed of more classes of instances. As can be observed from the table, the increased diversity of desirable target generation images does not have a negative impact on our model, which still outperforms other baseline models on the criterion of generative performance. As our metric FID is a relative value, comparisons between different number of generation types may be not feasible.

\paragraph{Complete Results for Anomaly Detection on Contaminated Datasets}
Table \ref{tableadp} shows the the complete results for anomaly detection tasks. From the table, it can be observed that our proposed PuriGAN-based method has a very stable performance across different ratios of contamination and outperforms all other baselines. By contrast, the performances of all compared unsupervised and semi-supervised methods deteriorate steadily as $\gamma_p$ increases. Unsupervised anomaly detection methods model the probability distributions on contaminated datasets, which is a cause of their poor performance. Semi-supervised anomaly detection methods leverage the collected anomaly data to recognize some anomalies, and thus their performance has achieved a great improvement. However, these methods still not consider the possibility that the training data is not clean, thus leading to their performance not being optimal.

\paragraph{Illustration of FID Variation as Iterations Progress on}
Figure \ref{FID-iterations} shows the variation of FID metric as iterations progress on. From the figure, it can be observed that the FID value of our model decreases rapidly at the beginning of the training phase. As the number of iterations increases, the performance of our model gradually tends to decrease smoothly and eventually terminate in better results. In contrast, the fluctuations of PU-NDA and PU-LSGAN are more dramatic and their instability eventually leads to the poor performance. An important reason why our proposed model performs so well is that our model jointly considers purification and generation in an end-to-end manner. The valuable information of the contamination dataset can be fully utilized in the generation stage while avoiding causing the model to generate undesired instances.

\paragraph{Illustration of Generated Images}
Figure \ref{GAN_Samples} shows the generated images under the specific setting of $\gamma_p=0.5$ and $\gamma_c=0.2$. In this experiment, the instances from categories `6', `pullover', `0' and `bald'  are selected as the target instances for MNIST, F-MNIST, SVHN and CELEBA, respectively, while instances from other categories are viewed as contamination. It is intuitive to observe that our model generates images with almost no contaminated samples, and of perceptually better quality. As discussed in the paper,  unsupervised methods, NDA and LSGAN generate many images with undesired classes. Two-stage methods, PU-NDA and PU-LSGAN, partially alleviate the influences of contamination
instances, but we can see they still generate more contaminated images than our model and the generated images of PU-NDA seem to be very blurred. Notice that, GenPU, which achieve great success in PU-Learning tasks, still generated a lot of contaminated images. This phenomenon can be explained by the fact that GenPU focus on obtaining a better classifier/discriminator rather than generating high-quality desired instances.

\end{document}